\documentclass{article}

\usepackage{algorithm,algpseudocode}
\usepackage{microtype}
\usepackage{graphicx}
\usepackage{subcaption}
\usepackage{booktabs} 
\usepackage{multirow}
\usepackage{float}      
\usepackage{placeins}   
\usepackage{hyperref}

\usepackage[accepted]{icml2026}

\usepackage{amsmath}
\usepackage{amssymb}
\usepackage{mathtools}
\usepackage{amsthm}
\usepackage{dsfont}

\usepackage[capitalize,noabbrev]{cleveref}

\newcommand{\Ind}{\mathds{1}}

\newcommand{\ZZ}{\mathbb{Z}}
\newcommand{\RR}{\mathbb{R}}

\newcommand{\diag}{\mathop{\mathrm{diag}}}

\newcommand{\trace}{\mathop{\mathrm{tr}}}
\def\tr{\trace}

\newcommand{\Vol}{\mathrm{Vol}}

\newcommand{\eps}{\varepsilon}

\newcommand{\Unif}{\mathrm{Uniform}}
\newcommand\indep{\protect\mathpalette{\protect\independenT}{\perp}}
\def\independenT#1#2{\mathrel{\rlap{$#1#2$}\mkern2mu{#1#2}}}

\def\EE{\mathbb{E}}
\newcommand{\m}{\mathcal}

\newcommand{\round}{\mathrm{round}}

\newcommand{\HR}{\mathrm{High-Rate}}
\newcommand{\wf}{\mathrm{WF}}

\newcommand{\iso}{\mathrm{iso}}
\newcommand{\wSIC}{\mathrm{WaterSIC}}

\newcommand{\HPTQ}{\mathrm{GPTQ}}
\newcommand{\covol}{\mathrm{covol}}

\newcommand{\CUBE}{\mathrm{CUBE}}

\NewDocumentCommand{\DIP}{e{^_}}{D^{\mathrm{IP}\IfValueT{#1}{,#1}}_{\IfValueT{#2}{#2}}}

\NewDocumentCommand{\DMM}{e{^_}}{D^{\mathrm{MM}\IfValueT{#1}{,#1}}_{\IfValueT{#2}{#2}}}

\DeclareMathOperator*{\argmin}{\arg\!\min}

\theoremstyle{plain}
\newtheorem{theorem}{Theorem}[section]
\newtheorem{proposition}[theorem]{Proposition}
\newtheorem{lemma}[theorem]{Lemma}

\theoremstyle{definition}

\theoremstyle{remark}

\usepackage[textsize=tiny]{todonotes}

\icmltitlerunning{WaterSIC: Info-Theoretically Optimal PTQ}

\begin{document}

\twocolumn[
  \icmltitle{WaterSIC: Information-Theoretically (Near) Optimal \\Linear Layer Quantization}



  \icmlsetsymbol{equal}{*}

  \begin{icmlauthorlist}
    \icmlauthor{Egor Lifar}{xxx}
    \icmlauthor{Semyon Savkin}{yyy}
    \icmlauthor{Or Ordentlich}{zzz}
    \icmlauthor{Yury Polyanskiy}{xxx}
  \end{icmlauthorlist}

  \icmlaffiliation{xxx}{MIT}
  \icmlaffiliation{yyy}{Independent Researcher}
  \icmlaffiliation{zzz}{Hebrew University of Jerusalem}

  \icmlcorrespondingauthor{Egor Lifar}{l1far@mit.edu}

  \icmlkeywords{quantization, entropy coding, waterfilling, GPTQ}

  \vskip 0.3in
]



\printAffiliationsAndNotice{}  

\begin{abstract}
    This paper considers the problem of converting a given dense linear layer to low precision. The tradeoff between compressed length and output discrepancy is analyzed information theoretically (IT). It is shown that a popular
    GPTQ algorithm may have an arbitrarily large gap to the IT limit. To alleviate this problem, a
 novel algorithm, termed ``WaterSIC'', is proposed and is shown to be within a rate gap of
    0.255 bits to the IT limit, uniformly over all possible covariance matrices of input activations.
 The key innovation of WaterSIC's is to allocate different quantization rates to different columns
    (in-features) of the weight matrix, mimicking the classical IT solution known as
    ``waterfilling''. Applying WaterSIC to the Llama and Qwen family of LLMs establishes new
    state-of-the-art performance for all quantization rates from 1 to 4 bits. Our code is available \href{https://github.com/egorlifar/watersic}{here}.
\end{abstract}

\begin{figure}[t]
  \centering
  \includegraphics[width=\columnwidth]{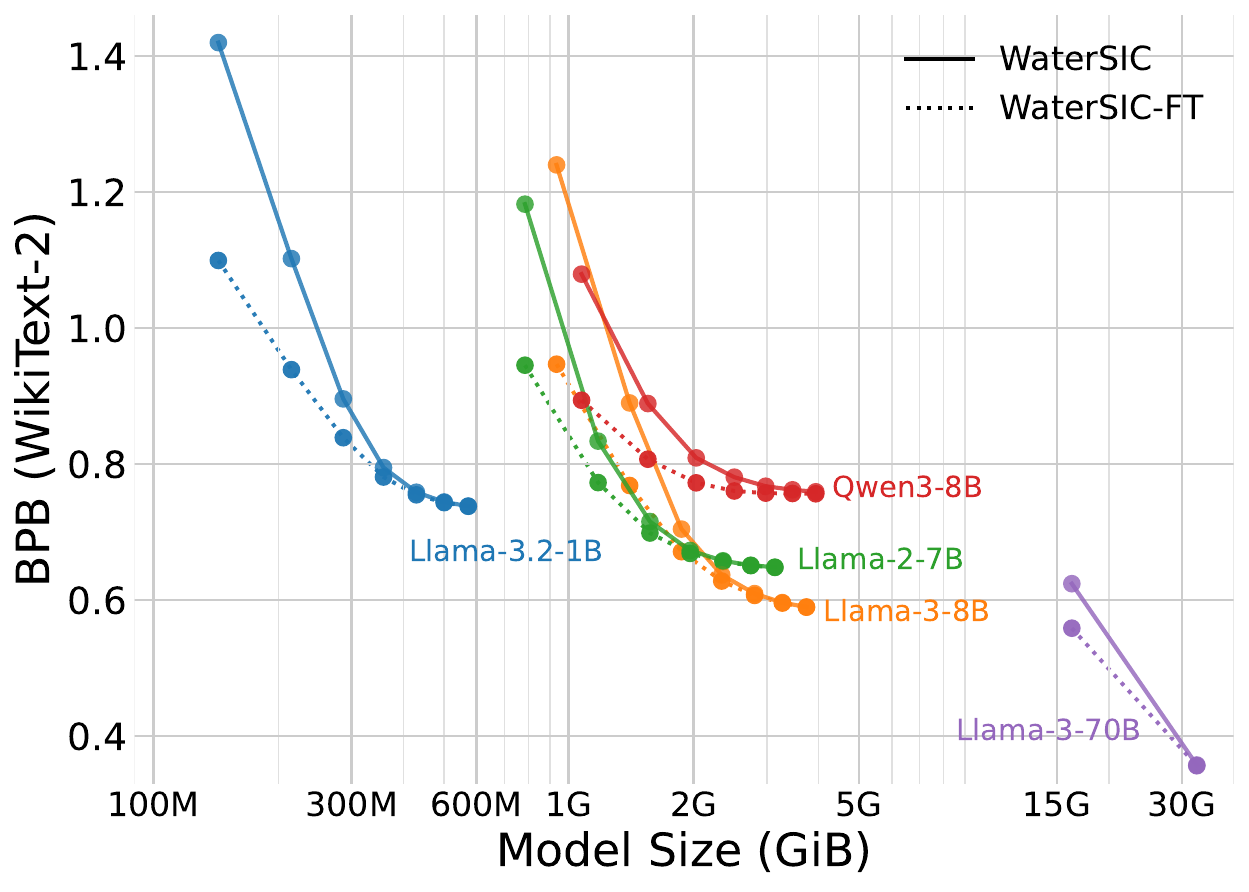}
  \caption{WikiText-2 bits-per-byte (BPB) vs.\ compressed model size (GiB) for a layerwise WaterSIC and finetuned WaterSIC-FT across multiple base models. Lines connect the same model compressed with the same algorithm and with various bit rates. Evaluated at CTX=2048.}
  \label{fig:ppl_modelsize}x
\end{figure}

\section{Introduction}

A backbone of all LLMs is the linear layer that given a column $X$ of activations outputs
$$ Y=WX $$
with $W$ being a weight matrix. The main goal of the post-training quantization (PTQ) is reduction
of the number of bits used for describing $W$, i.e. replacing $W$ with $\hat W$ (its lower
resolution approximation). While dozens if not hundreds of algorithms exists at this point, most
of them rely on the elementary bulding block: \textit{layerwise quantizer}, which finds 
$\hat W$ such that $\hat Y = \hat W X \approx Y$, where the approximation quality is measured in
mean-squared error, or, more rarely, in another metric~\citep{tseng2025yaqa,badri2023hqq}. 

Despite high-intensity research in this area, no information theoretic (IT) analysis of optimality
has been attempted for the problem or the algorithms. This work proposes a new linear layer
quantizer, \textit{WaterSIC}, which is shown to achieve (on synthetic iid Gaussian weights) a gap
of at most 0.25 bit to information theoretic limit. Upon applying this algorithm to real-world
LLMs, we establish new SOTA in many categories. The key innovation in \textit{WaterSIC} is using
different rates for different columns (in-features) of $W$. To the best of our knowledge all existing
algorithms fix the same quantization rate for all columns of $W$, in contradiction to a  classical result in
IT, known as \textit{waterfilling rate allocation}.

To position our work in the research area on PTQ, let us recall that a PTQ method consists of the following ingredients:
\begin{itemize}
\item \textit{Base quantizer} takes a vector of (high resolution) floating point numbers and converts
it to a lower resolution version. For example, popular NVFP4 format takes 16 floats and returns 16x4 bits corresponding to individual numbers and one 8-bit microscaling coefficient.


\item \textit{Scaling method} assigns scaling coefficient with larger granularity to make the data fit within the limits of the base quantizer. A popular strategy
is \textit{absmax},
though others are being studied~\citep{panferov2025quest,cook2025fouroversix}. This step is
crucial for quantizing LLMs with weight outliers. 

\item \textit{Calibration and GPTQ}. Instead of simply passing the whole matrix through
scaling-based quantizer (a strategy known as RTN), one could adapt quantization error to
statistics of activations, specifically by making it almost orthogonal to the directions of
principal variation (PCA) of $X$. For that the base
quantizer is applied in a clever recursive way to shape the error statistics, an idea originating
from GPTQ~\cite{frantar2023gptq}.

\item \textit{Fine-tuning}. After converting a layer to low-precision, one may run end-to-end low-precision
training~\cite{liu2025paretoq}, tune only subsequent unquantized layers~\cite{shao2024omniquant},
or adjust only some high-precision parameters of the quantized layer, e.g. scales,
codebooks~\citep{egiazarian2024aqlm}.
\end{itemize}

In this work, we adopt the simplest possible setting: a uniform (round-to-nearest) base quantizer,
no sophisticated scaling and no fine-tuning of any kind. Our goal is to isolate
gains from our key innovation in the GPTQ step: \textit{WaterSIC}'s unequal quantization rates. Correspondingly,
the reported gains are additive with those of using more sophisticated base
quantizers or adding fine-tuning. Nevertheless, even without these tweaks our evaluations (see
Figs.~\ref{fig:main_Llama} and~\ref{fig:main_qwen}) show that WaterSIC alone establishes the new SOTA. 

In place of scaling, we employ entropy coding. Specifically, our base quantizer $q_\epsilon(x)$ simply rounds each
coordinate independently to an equispaced $\epsilon$-grid. The resulting output consists of
(possibly unbounded) integers. Instead of constraining their range via scaling, we compress this long list of
integers via a high-quality data-compressor (Lempel-Ziv, Zstd or Huffman), which produces a finite
(but variable) length bit-string, an idea that has been gaining
popularity~\citep{chen2025geometry,cheng2025ecco,yubeaton2025huffllm,putzky2026entquant},
and hardware support in Blackwell~\citep{jarmusch2025blackwell}.\footnote{For example,
Llama-3 family is originally in BF16 but entropy of is only about 10.5 bit/weight, mostly
concentrated in the mantissa bits.} When $\epsilon\to 0$ the entropy of
integers grows and the length (and hence compression rate) increases. Thus by tweaking $\epsilon$
we can adjust the rate in a smooth way. In addition, entropy coding also automatically takes care
of outliers: occasional large  integers get assigned long bit-descriptions, but due to being infrequent  do not affect the overall rate.

The key innovation of the WaterSIC is in how the scale $\epsilon$ for each channel (column) is selected. Naively, one would take $\epsilon$ for $k$-th column to be proportional to $\sqrt{\mathrm{Var}{(X_k)}}$, i.e. the typical magnitude of the $k$-th activation coordinate. Surprisingly, the optimal method is to instead set $\epsilon$ to be proportional to  $\sqrt{\mathrm{Var}{(X_k|X_1,\ldots,X_{k-1})}}$, i.e. the magnitude of independent innovation in $X_k$ not explained by linear functions of past coordinates.


The structure of the paper is as follows. In Section~\ref{sec:theory} we focus on iid Gaussian
matrices $W$ and review 
classical IT result 
that shows how an overall rate budget should be allocated between different PCA directions of the covariance matrix of
$X$, a so called ``waterfilling'' solution. Executing this strategy in its classical form is infeasible for LLMs due to constraints of the problem
(as discussed below).  However, another IT result~\cite{zamir2006achieving} demonstrates that quantizing the prediction errors with careful rate allocation can also achieve IT fundamental limit. It is then shown that in the context of LLM quantization this is equivalent to using scale choice as above, running GPTQ ane entropy coding, which is the WaterSIC algorithm. Theoretically, this shows that WaterSIC has a gap of at most 0.255 bit compared to the IT limit and does not require expensive rotations into the PCA basis.
In addition,  we also present a rigorous analysis of the performance of the
standard GPTQ algorithm and show it can have arbitrary large gap to
IT optimal limit. On the other hand, due to WaterSIC approximately implementing the waterfilling allocation, it's gap to IT limit is fully characterized by the suboptimality of the integer lattice for vector quantization of Gaussians (the reason for the 0.255 bit gap). 

Section~\ref{sec:practice} describes several important tweaks of
the plain WaterSIC that are critical for applying it to real-world LLMs.
Section~\ref{sec:limitations} describes experimental results and discusses limitations and future work. Appendices contain full
details of experiments, ablations and diagnostics. We start, however, with a brief review of
existing work.

\section{Prior work on PTQ}\label{sec:prior}






Some PTQ works improve the base quantizer component of the algorithm. In \cite{elhoushi2025any4} optimizes a scalar codebook for $4$-bit quantization. \cite{tseng2024quipsharp} proposes an 8-dimensional vector codebook E8P of size $2^{16}$ with a fast decoding algorithm. In \cite{egiazarian2024aqlm}, a vector codebook is used with elements obtained as sums of learned vectors. \cite{tseng2025qtip} uses trelises, which enable to use high-dimensional codebooks while maintaining fast decoding speed. \cite{savkin2025nestquant} proposes to use a lattice-based method, which does not require lookup tables for encoding and decoding.

Applying equivalent transformations in the network can improve the performance of quantized models. \cite{chee2024quip} rotates the model weights with Hadamard matrices, and \cite{ashkboos2024quarot} uses a similar technique for activation quantization. Some methods for activation quantization (\cite{sun2025flatquant}, \cite{liu2025spinquant}) use an optimizer to find best performing transformation. \cite{lin2024awq} scales up the weights associated with important channels to reduce their quantization error, while \cite{xiao2024smoothquant} uses scaling to balance the outliers in activations.

GPTQ \cite{frantar2023gptq} yields a substantial improvement over RTN by minimizing the expected MSE of layer outputs. Prior work has explored incorporating inter-layer interactions in the objective. A number of approaches focus on Fisher information of token distributions with respect to the weight. The memory needed to store Fisher information is quadratic with respect to the weight matrix size, so it's only feasible to store its approximation. \cite{tseng2025yaqa} decomposes it into Kroncker product of two matrices corresponding to pairs of rows and pairs of columns, respectively. \cite{kim2025guidedquant} assumes that Fisher information coefficient corresponding to two parameters in different output channels is zero, and applies per-group averaging of coefficients.

Following the PTQ, one often implements fine-tuning. The most aggresive method is to continue
training the quantized model over a long-phase of quantization-aware training (with optimal choice
seeming to use around 10-20\% of available data for that~\cite{liu2025paretoq}). A less expensive method is to quantize a layer, but then
fine-tune subsequent 
unquantized layers to compensate for the quantization error~\citep{shao2024omniquant}. An even cheaper method is to quantize all layers but then run fine-tuning of only
those parameters of the quantized model which are stored in floating point (scales, codebooks
etc)~\cite{tseng2024quipsharp}, or at least fine-tune a few adjacent
layers~\cite{egiazarian2024aqlm}.

One of the more exciting questions in the area of low-precision models is to understand Pareto
frontier of the the model quality vs the total number of bits required to store weights. \cite{kim2025not} estimates the frontier based on weight and KV cache quantization by GPTQ, however, newer quantization algorithms achieve better model quality for the same bit budget. Other studies find often vastly different answers: ParetoQ~\cite{liu2025paretoq} estimates Pareto frontier at below 2 bits per parameter, whereas~\cite{chen2025geometry} suggests slightly above 3
bits. Memorization studies are equally inconclusive, with~\cite{allen2024physics} estimating 2.2
bits per parameter and~\cite{morris2025much} at 3.6 bit per parameter. From information-theoretic point of view, it would be
interesting to get some more robust estimates, but for that there needs to be some belief in
quantization algorithm's near-optimality, which WaterSIC strongly suggests.

\section{WaterSIC and GPTQ: Theory}
\label{sec:theory}

Let us restrict attention to a particular linear layer in the network with weight matrix $W\in
\RR^{a\times n}$. The linear layer operates by taking (a vector of) input activations $X\in\RR^n$ and
producing $ Y = W X $. 
The task of quantization is to replace $W$
with quantized version $\hat W$, which can be represented by $n a R$ bits, aiming at minimizing the
expected distortion 
\begin{align}\label{eq:wot_1}
	D&={1\over na} \EE\|(W-\hat{W})X\|_F^2 \nonumber\\
    &=\frac{1}{na} \tr  (W-\hat W) \Sigma_X (W-\hat W)^\top\,,
\end{align}
where the expectation is with respect to the random variable $X$ and $\Sigma_X=\EE[X X^\top]$. In practice, the matrix $\Sigma_X$ is estimated from calibration data.

For the analysis we model the rows of $W$ as iid $\m{N}(0,\sigma_W^2 I)$ random vectors in $\RR^{1\times n}$. While this is certainly not a precise model, it allows to derive the fundamental limits of the weight-only quantization problem, and design practical schemes that performs quite close to them. Furthermore, under this assumption, the rows of $W$ can be quantized independently without loss of optimality (assuming $n$ is large enough). We will therefore assume that $a=1$ for the derivation of the theoretical bounds, and this is without loss of generality for $n$ large enough. 

The IT limit in this case is defined as follows. Given $n>0$, $\sigma_W^2>0$ and PSD matrix
$\Sigma_X\in\RR^{n\times n}$, an $(R,D,\Sigma_X)$ weight-only quantization scheme consists of an
encoder $f:\RR^n\times \RR^{n\times n}\to \{0,1\}^*$ and decoder $g:\{0,1\}^*\to\RR^n$, where
distortion $D$ is given by~\eqref{eq:wot_1} (with $a=1$) and rate
$R$ is defined as 
$$
R={1\over n} \EE[\ell(f(W,\Sigma_X)]\,,
$$
where $\ell(\cdot)$ is the length of the binary string $(\cdot)$. The fundamental limits are
defined as 
\begin{align*}
R^*(D,\Sigma_X)&=\inf\{R~:~\exists (R,D,\Sigma_X)-\text{scheme}\}\,,\\
D^*(R, \Sigma_X)&=\inf\{D~:~\exists (R,D,\Sigma_X)-\text{scheme}\}\,.
\end{align*}

The main point of this section is to show that as $n\to\infty$ we have
$$ D^*(R,\Sigma_X) = \sigma_W^2 |\Sigma_X|^{1/n}2^{-2R + o(1)}\,,$$
where $o(1)$ is with respect to $R\to \infty$ and $|\Sigma_X|$ denotes determinant of a matrix. 
The standard entropy-coded GPTQ (a.k.a. Huffman-GPTQ) achieves
$$ D^*_{\HPTQ}(R,\Sigma_X)=  2^{-2R+o(1)} {2\pi e \over 12}
\frac{\sigma_W^2}{n}\sum_{i=1}^n(\ell_{ii})^2\,,$$
where $\ell_{ii}$ is the $i$-th diagonal element of a lower triangular matrix of the Cholesky
decomposition $\Sigma_X = LL^\top$. The newly proposed WaterSIC, though, achieves
$$ D^*_{\wSIC}(R,\Sigma_X) =  2^{-2R+o(1)} \frac{2\pi e}{12} \sigma_W^2\prod_{i=1}^n(\ell_{ii})^{2\over n}\,,$$
which (by AMGM) is always better than Huffman-GPTQ. However, what is more exciting is that $\prod
\ell_{ii} = |L| = \sqrt{|\Sigma_X|}$ and therefore, $D^*_{\wSIC}$ is only a factor ${2\pi e \over
12}$ worse than IT limit $D^*(R,\Sigma_X)$. Since ${1\over 2} \log_2 {2 \pi e\over 12} = 0.255$
bit, the rate gap between WaterSIC and IT limit is at most 0.255 bit, as claimed. Another
difference is that Huffman-GPTQ's performance is affected (often reduced) by applying a random rotation
$U\Sigma_XU^\top$. But WaterSIC's performance is invariant to rotations (since it only depends on
$|\Sigma_X|$.

\subsection{Waterfilling lower bound} In the problem setup above, the decoder cannot depend on $\Sigma_X$. To obtain a lower bound on the smallest distortion attained by any weight-only quantizer, we consider an easier setup where $g$ can also depend on $\Sigma_X$. Any lower bound for this easier setup is clearly also a valid lower bound for our setup. Consider the singular value decomposition (SVD) $\Sigma_X=V\Lambda V^{\top}$. Define $\tilde{W}=WV\Lambda^{1/2} $ and $\hat{\tilde{W}}=\hat{{W}}(V\Lambda^{1/2})^{-1} $. With this notation, we have that
$$D=\EE\|\tilde{W}-\hat{\tilde{W}}\|^2.$$
Thus, the problem reduces to rate $R$ quantization of a Gaussian source with independent components $\tilde{W}\sim\m{N}(0,\sigma^2_W\Lambda)$. It is well-known that the optimal rate-distortion tradeoff for this problem is given by (reverse)
\emph{waterfilling solution}\footnote{The standard setup is for fixed-rate quantizers, but the converse holds also for variable-rate quantizers.}
\begin{align}
R_{\wf}(D,\Sigma_X) &=\frac{1}{n}\sum_{i=1}^n\frac{1}{2}\log\left( \max\left\{1,\frac{\sigma_W^2\lambda_i}{\tau}\right\}\right),\nonumber\\
\text{where}~
D &=\frac{1}{n}\sum_{i=1}^n \min\{\sigma_W^2\lambda_i,\tau\},
\label{eq:waterfillingsolution}
\end{align}
where $\Lambda=\diag(\lambda_1,\ldots,\lambda_n)$. It is easy to see that
\begin{align}
\forall D<\min\{\sigma^2_W\lambda_i\}:R_{\wf}&(D,\Sigma_X)=\frac{1}{2}\log \left(\frac{\sigma_W^2|\Sigma_X|^\frac{1}{n}}{D} \right)\nonumber\\
&\triangleq R_{\HR}(D,\Sigma_X).  
\end{align}
To summarize, we have
\begin{proposition}
$$\forall\Sigma_X~:~R^*(D,\Sigma_X)\geq R_{\wf}(D,\Sigma_X).$$  
\label{prop:wflb}
\end{proposition}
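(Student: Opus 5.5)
We prove Proposition~\ref{prop:wflb} by relaxing to an easier problem and reducing it to the classical rate--distortion function of a Gaussian vector. Take any $(R,D,\Sigma_X)$ scheme $(f,g)$ with $a=1$. The first step is to relax the setup so that the decoder may also depend on $\Sigma_X$. This only enlarges the class of schemes, so it suffices to lower bound $R$ in the relaxed setup.

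Next we change variables. Write $\Sigma_X=V\Lambda V^\top$ and define $\tilde W=WV\Lambda^{1/2}$ and $\hat{\tilde W}=\hat W V\Lambda^{1/2}$. Then
$$D=\tfrac1n\EE\|\tilde W-\hat{\tilde W}\|^2.$$
If some $\lambda_i=0$, those coordinates of $\tilde W$ are identically zero. They contribute zero distortion and zero rate in~\eqref{eq:waterfillingsolution}, so we may discard them. Given $\Sigma_X$, the row vector $\tilde W$ has law $\m N(0,\sigma_W^2\Lambda)$ with independent coordinates. The map $\hat W\mapsto\hat{\tilde W}$ is a deterministic function of the decoder output and $\Sigma_X$. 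Hence we obtain a variable-length code for $\tilde W$ with expected length $nR$ and per-coordinate MSE $D$.

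The main step is the converse for variable-length codes. Let $M=f(W,\Sigma_X)$. Since $M$ lives in $\{0,1\}^*$, the expected length is at least the entropy in the prefix-free case. If prefix-freeness is not assumed, we use the standard bound $H(M)\le \EE\ell(M)+\log_2(\EE\ell(M)+1)+O(1)$. Either way,
$$nR+o(n)\ge H(M)\ge I(\tilde W;\hat{\tilde W}).$$
Then the standard single-letterization applies. By independence of the coordinates and the chain rule,
$$I(\tilde W;\hat{\tilde W})\ge\sum_i I(\tilde W_i;\hat{\tilde W}_i)\ge\sum_i \tfrac12\log^+\frac{\sigma_W^2\lambda_i}{D_i},$$
where $D_i=\EE(\tilde W_i-\hat{\tilde W}_i)^2$ and $\sum_i D_i=nD$.

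Minimizing the right-hand side over $D_i\ge0$ with this sum constraint is a convex problem. Its KKT conditions give $D_i=\min\{\sigma_W^2\lambda_i,\tau\}$, which is exactly the reverse waterfilling formula~\eqref{eq:waterfillingsolution} and yields $R\ge R_{\wf}(D,\Sigma_X)$.

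The obstacle I expect is the variable-length subtlety: $\{0,1\}^*$ codes need not be prefix-free, which could save up to about $\log n$ bits. I would handle it either by assuming prefix-free encoders, or by noting the loss is $O(\log n)/n\to0$, so the bound holds asymptotically in $n$, which is the regime the paper considers.
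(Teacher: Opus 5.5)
Your proof is correct and follows the paper's own route. You relax to a decoder that knows $\Sigma_X$, whiten via $\tilde W=WV\Lambda^{1/2}$ and $\hat{\tilde W}=\hat W V\Lambda^{1/2}$ (your definition is the right one; the paper's $(V\Lambda^{1/2})^{-1}$ is a slip), and invoke the reverse-waterfilling converse for independent Gaussian components. The paper merely cites that converse, whereas you prove it ($\EE\ell\ge H(M)\ge I(\tilde W;\hat{\tilde W})\ge\sum_i I(\tilde W_i;\hat{\tilde W}_i)$ plus KKT), and your prefix-free caveat is needed. The paper only asserts in a footnote that the converse extends to variable-rate codes, yet for arbitrary strings in $\{0,1\}^*$ the inequality fails at small $n$. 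For $n=1$, sending the empty string when $|W|<2\sigma_W$ and one bit for each tail gives rate about $0.046$ at distortion about $0.74\,\sigma_W^2\Sigma_X$, well below $R_{\wf}\approx 0.21$. So restricting to prefix-free encoders, or allowing an $O(\log n)/n$ slack as you do, is the correct reading.
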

We stress that the main reason it is not possible to achieve this bound in practice is because
decoder needs to able to operate in the PCA basis of $\Sigma_X$, but informing decoder of the SVD
basis requires sending an $n\times n$ orthogonal matrix, which is of the same order of magnitude
as the original matrix $W$.

\subsection{GPTQ and PlainWaterSIC}

Applying Cholesky decomposition, we can write $\Sigma_X=L L^\top$, where $L\in\RR^{n\times n}$ is a lower triangular matrix. Our objective function can therefore be written as
\begin{align}
D=\frac{1}{n}\EE\|Y-\hat{W}L\|^2,~~Y=WL.
\end{align}
Assume that the reconstruction $\hat{W}$ must belong to the scaled integer lattice $\alpha \ZZ^{1\times n}$. Given $y\in\RR^{1\times n}$, the problem of finding $\hat{w}\in\alpha\ZZ^{1\times n}$ that minimizes $\|y-\hat{w}L\|^2$ is that of finding the closest vector to $y$ in the lattice $\alpha\ZZ^{1\times n} L $. This problem is known to be NP-hard. Thus, one must resort to sub-optimal algorithms. One such option is successive interference cancellation (SIC). This corresponds to utilizing the lower triangular structure of $L$ and sequentially deciding on the elements of $\hat{w}$ starting from $\hat{w}_n$ until reaching $\hat{w}_1$. Once we decide on the value of $\hat{w}_i$, we subtract its contribution (interference) to coordinates of $y$ that were not used for quantization yet. This procedure is described in Algorithm~\ref{alg:ZSIC} which we refer to as the ZSIC algorithm. In ZSIC, we do not restrict ourselves to reconstruction in $\alpha\ZZ^n$, but instead allow the reconstruction to be in $\ZZ^{1\times n}\m{A}$, where $\m{A}$ is a diagonal matrix with $|\m{A}|^{1/n}=\alpha$. This lattice is a cartesian product $\prod_{i=1}^n (\alpha_i \ZZ)$, whose point density is $\alpha^{-n}$ just like the lattice $\alpha\ZZ^{1\times n}$. However, a judicious choice of $\m{A}$ optimizes the spacings as a function of $L$. The benefits of this optimization will immediately become clear. Note further that Algorithm~\ref{alg:ZSIC} is described for a matrix input, rather than a row. However, the algorithm simply applies the same quantization procedure to each one of the $a$ rows of $Y=WL$.

\begin{algorithm*}[!t]
\caption{ZSIC}
\label{alg:ZSIC}
\begin{algorithmic}
\State \textbf{Inputs:} $Y\in\RR^{a\times n}$, lower triangular matrix $L\in\RR^{n\times n}$ and diagonal spacing matrix $\m{A}=\diag(\alpha_1,\ldots,\alpha_n)\in\RR_+^{n\times n}$.
\State \textbf{Outputs:} $Z_{\mathrm{SIC}}\in \ZZ^{a\times n}$
\Comment{$Z_{\mathrm{SIC}}\m{A} \approx \argmin_{Z}\|Y-Z\m{A}L\|_2^2$}
\vspace{2mm}
\State $Z_{\mathrm{SIC}}\gets 0^{a\times n}$\Comment{Initialize $Z_{\mathrm{SIC}}$ with zeros}
\For{$i=n:1$}
\State $Z_{\mathrm{SIC},:,i}\gets \mathrm{round}\left(\frac{Y_{:,i}}{\alpha_i \ell_{ii}}\right)$ \Comment{$Y_{:,i}$ and $Z_{\mathrm{SIC},:,i}$ is the $i$th column of $Y$ and $Z_{\mathrm{SIC}}$, respectively}
\State $Y\gets Y-\alpha_iZ_{\mathrm{SIC},:,i}\cdot L_{i,:} $\Comment{$L_{i,:}$ is the $i$th row of $L$}
\EndFor
\end{algorithmic}
\end{algorithm*}

Denote $\CUBE=\left[-\frac{1}{2},\frac{1}{2}\right)^{1\times n}$. The following lemma, proved in Appendix~\ref{sec:fundSICproof}, characterizes the output of the ZSIC algorithm. 
\begin{lemma}
Assume we apply the ZSIC Algorithm~\ref{alg:ZSIC} with $y\in\RR^{1\times n}$, lower triangular $L\in\RR^{n\times n}$, and diagonal matrix $\m{A}=\diag(\alpha_1,\ldots,\alpha_n)\in\RR_+^n$. Then
\begin{align}
 e_{\mathrm{SIC}}=y-z_{\mathrm{SIC}}\cdot \m{A}L  \in \CUBE\cdot \m{A}\diag(L).\nonumber
\end{align}
\label{lem:FundCellSIC}
\end{lemma}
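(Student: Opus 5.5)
The plan is to track the running residual through the loop of Algorithm~\ref{alg:ZSIC} and to use the lower triangular structure of $L$ to see that each coordinate of $y$ is modified only a limited number of times. Let $y^{(n+1)}=y$, and let $y^{(i)}$ be the residual after the iteration with index $i$. The final residual is then
\[
y^{(1)}=y-\sum_{i=1}^n \alpha_i z_i L_{i,:}=y-z_{\mathrm{SIC}}\m{A}L=e_{\mathrm{SIC}},
\]
so the task reduces to showing $e_{\mathrm{SIC}}=y^{(1)}\in \CUBE\cdot\m{A}\diag(L)$. Since $\m{A}\diag(L)=\diag(\alpha_1\ell_{11},\ldots,\alpha_n\ell_{nn})$, this is the coordinatewise statement $y^{(1)}_j\in \alpha_j\ell_{jj}\,[-\tfrac12,\tfrac12)$ for every $j$. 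Here I assume $\ell_{jj}>0$, as for a Cholesky factor, and a rounding convention compatible with the half-open cube.

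The key observation comes from $L$ being lower triangular: the row $L_{i,:}$ is supported on the coordinates $1,\ldots,i$. Therefore the update at step $i$ only changes coordinates $j\le i$.

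Now fix a coordinate $j$ and follow it through the loop.
\begin{itemize}
\item During the steps $i=n,\ldots,j+1$, coordinate $j$ does get modified, but only before it is quantized.
\item At step $i=j$, the algorithm sets $z_j=\mathrm{round}\bigl(y^{(j+1)}_j/(\alpha_j\ell_{jj})\bigr)$ and then subtracts $\alpha_j z_j L_{j,:}$. The $j$-th entry of that subtracted row is $\alpha_j z_j\ell_{jj}$, so
\[
y^{(j)}_j=\alpha_j\ell_{jj}\bigl(t-\mathrm{round}(t)\bigr),\qquad t=y^{(j+1)}_j/(\alpha_j\ell_{jj}).
\]
Since $t-\mathrm{round}(t)\in[-\tfrac12,\tfrac12)$, this places $y^{(j)}_j$ in the required interval.
\item During the later steps $i<j$, the row $L_{i,:}$ has a zero in position $j$, so coordinate $j$ is never touched again. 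Hence $y^{(1)}_j=y^{(j)}_j$.
\end{itemize}

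Collecting these conclusions over all $j$ gives the claim. There is no real obstacle; the only point needing care is the bookkeeping of which steps touch coordinate $j$. That is handled by the lower triangularity: coordinate $j$ is altered arbitrarily before step $j$, fixed to the required interval at step $j$, and left unchanged afterwards. The rounding convention at ties only affects whether the interval is half-open, and it can be taken to match the definition of $\CUBE$.
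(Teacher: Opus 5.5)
Your argument is correct, but it takes a different route from the paper.

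The paper argues structurally. It identifies the set of inputs with $z_{\mathrm{SIC}}=0$ as exactly $\CUBE\cdot\m{A}\diag(L)$ and uses the shift-equivariance $z_{\mathrm{SIC}}(y+z\m{A}L)=z+z_{\mathrm{SIC}}(y)$ for integer $z$. From these it concludes that the decision regions $\m{D}_z=z\m{A}L+\CUBE\cdot\m{A}\diag(L)$ partition $\RR^{1\times n}$, and membership of the error in the cell is then immediate. You instead follow each coordinate of the running residual. Lower triangularity shows that coordinate $j$ is reduced into $\alpha_j\ell_{jj}[-\tfrac12,\tfrac12)$ at step $j$ and never touched again.

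Your version is self-contained. The paper asserts its two observations without proof, and each of them needs essentially the triangular induction you carry out. However, your argument delivers only membership. The paper's formulation gives the stronger statement that ZSIC is exactly reduction modulo the lattice $\ZZ^{1\times n}\m{A}L$ onto this fundamental cell, i.e.\ $z_{\mathrm{SIC}}(y)=z$ iff $y\in\m{D}_z$. That ``iff'' is what Appendix~\ref{app:RDlimit} uses when it writes the error density as a periodized Gaussian restricted to the cell. You could recover it from your argument: once the residual is required to lie in the cell, $z_n,z_{n-1},\ldots,z_1$ are forced one at a time.

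Incidentally, your assumption $\ell_{jj}>0$ is not needed. Since $\CUBE\cdot\m{A}\diag(L)$ is defined as the image of the cube, $y^{(j)}_j=(t-\round(t))\,\alpha_j\ell_{jj}$ has the required form for either sign of $\ell_{jj}$. So $\ell_{jj}\neq 0$, which is needed anyway for the algorithm to be defined, suffices.
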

Recall that in our setup the input to the ZSIC algorithm is $Y=WL$, where $W\sim\m{N}(0,\sigma^2_W)$. It therefore follows that $Z_{\mathrm{SIC}}$ is a random vector supported on $\ZZ^{1\times n}$ and $e_{\mathrm{SIC}}$ is a random vector supported on $\CUBE\cdot \m{A}\diag(L)$. As the ratio $\sigma_W/\det|\m{A}|$ grows, the distribution of $e_{\mathrm{SIC}}$ approaches the uniform distribution on $\CUBE\cdot\m{A}\diag(L)$ (see Appendix~\ref{app:RDlimit}) and we obtain
\begin{align}
D_{\mathrm{SIC}}&\approx\frac{1}{n}\frac{1}{12}\sum_{i=1}^n (\alpha_i \ell_{ii})^2  \label{eq:DZSIC}\\
&=\frac{|\m{A}L|^{2/n}}{12}\frac{\frac{1}{n}\sum_{i=1}^n (\alpha_i \ell_{ii})^2}{\prod_{i=1}^n (\alpha_i \ell_{ii})^{2/n}}\\
&=|\m{A}|^{2/n}\frac{|\Sigma_X|^{1/n}}{12}\frac{\frac{1}{n}\sum_{i=1}^n (\alpha_i \ell_{ii})^2}{\prod_{i=1}^n (\alpha_i \ell_{ii})^{2/n}}\label{eq:DZSICapprox}
\end{align}

Denote by $H(\cdot)$ the Shannon entropy of a discrete random variable and by $h(\cdot)$ the differential entropy of a continuous random variable. As $\sigma_W/\det|\m{A}|$ grows, we also have that
\begin{align}
 \sum_{i=1}^n & H(Z_{\mathrm{SIC},i})\approx nh(W)-\log|\m{A}|\label{eq:EntApprox}\\
 &=\frac{n}{2}\log\left(\frac{2\pi e \sigma_W^2}{|\m{A}|^{2/n}} \right) \\
 &\approx \frac{n}{2}\log\left(\frac{|\Sigma_X|^{1/n} \sigma_W^2}{D_{\mathrm{SIC}}}\frac{2\pi e}{12}\frac{\frac{1}{n}\sum_{i=1}^n (\alpha_i \ell_{ii})^2}{\prod_{i=1}^n (\alpha_i \ell_{ii})^{2/n}} \right),\label{eq:substituteDZSIC}
\end{align}
where~\eqref{eq:EntApprox} is derived in Appendix~\ref{app:RDlimit}, and in~\eqref{eq:substituteDZSIC} we have used~\eqref{eq:DZSICapprox}.

If we apply the ZSIC algorithm on a matrix $Y=WL$, where the rows of $W$ are independently drawn from $\m{N}(0,\sigma_W^2I)$ (more discussion on weight Guassianity is in Appendix \ref{sec::ablations}), each column $Z_{\mathrm{SIC},:,i}$ of the algorithm's output will consist of iid entries from a distribution on $\ZZ$ with entropy $H(Z_{\mathrm{SIC},i})$. Thus, if the number of rows $a$ is sufficiently large, encoding $Z_{\mathrm{SIC},:,i}$ to bits can be done via any standard entropy coding algorithm, and the expected number of bits in this description will be $a H(Z_{\mathrm{SIC},i})(1+o(1))$, where $o(1)$ vanishes as $a\to\infty$. From the entropy coded binary descriptions of all $n$ columns, the decoder can recover $Z_{\mathrm{SIC}}$ and set $\hat{W}=Z_{\mathrm{SIC}}\m{A}$. If the matrix $\m{A}$ is determined by the encoder, it also needs to describe the entries $\alpha_1,\ldots\alpha_n$ in bits. Since those are only $n$ scalars, the cost of their description is negligible for $a\gg 1$.

It therefore follows that applying ZSIC algorithm with matrix $\m{A}=\diag(\alpha_1,\ldots,\alpha_n)$ and describing each column of the resulting matrix $Z_{\mathrm{SIC}}$ using entropy coding approximately achieves the following tradeoff between rate and distortion 
\begin{align}
&R_{\mathrm{SIC}}(D,\Sigma_X)\approx R_{\HR}(D,\Sigma_X)\nonumber\\
&+\frac{1}{2}\log\left(\frac{2\pi e}{12} \right)+\frac{1}{2}\log\left(\frac{\frac{1}{n}\sum_{i=1}^n (\alpha_i \ell_{ii})^2}{\prod_{i=1}^n (\alpha_i \ell_{ii})^{2/n}} \right),  \label{eq:ZSICrate}
\end{align}
for $D$ small enough.

The canonical GPTQ algorithm is completely equivalent to the ZSIC algorithm with $\m{A}=\alpha
I$~\cite{chen2025geometry,birnick2025lattice}. When the matrix $Z_{\mathrm{SIC}}$ is further
described using entropy coding, we get the \textit{Huffman-GPTQ} algorithm explicitly proposed
in~\cite{chen2025geometry} under the name \textit{HPTQ}, which we use interchangeably with
Huffman-GPTQ.

The last term in~\eqref{eq:ZSICrate} is non-negative due to the arithmetic mean-geometric Mean (AMGM) inequality. It therefore follows that the choice $\m{A}=\alpha I$ is sub-optimal in general, and the optimal choice (under the approximations leading to~\eqref{eq:ZSICrate}) is
\begin{align}
 \alpha_i=\frac{c}{|\ell_{ii}|},~~i=1,\ldots,n,
\end{align}
where $c>0$ is a constant that determines the density of the lattice $\ZZ^{1\times n}\m{A}$. In particular, if we want the density to be $\alpha^{-n}$, as for the lattice $\alpha\ZZ^{1\times n}$, we choose $c=\alpha\cdot |L|^{1/n}$.

We refer to the resulting algorithm for weight-only quantization that utilizes ZSIC together with
this choice of $\m{A}$ and entropy coding as PlainWaterSIC, and it is described in
Algorithm~\ref{alg:PlainWater}. In the next section we describe several
improvements of this algorithm, that result in the full WaterSIC, see Algorithm~\ref{alg:watersic_ex}. Note that if we modify the computation of $\alpha_i$ in Algorithm~\ref{alg:PlainWater} to $\alpha_i=\alpha\forall i=1,\ldots,n$, we get the HPTQ algorithm from~\cite{chen2025geometry}. 

\begin{algorithm*}[!t]
\caption{PlainWaterSIC weight-only quantization}
\label{alg:PlainWater}
\begin{algorithmic}
\State \textbf{Inputs:} $W\in\RR^{a\times n}$, PSD matrix $\Sigma_X$ and point density $\alpha>0$.
\State \textbf{Outputs:} $(\alpha_1,\ldots,\alpha_n)\in\RR_+^n$, and 
binary vectors $B_1,\ldots,B_n\in\{0,1\}^*$ encoding the $n$ columns of $Z_{\mathrm{SIC}}$ 
s.t. $\hat{W}= Z_{\mathrm{SIC}}\diag(\alpha_1,\ldots,\alpha_n)$.
\vspace{2mm}
\State Compute lower-triangular $L\in\RR^{n\times n}$ such that $\Sigma_X=L L^\top$\Comment{Using the Cholesky decomposition}
\State $\alpha_i\gets\alpha\cdot \frac{|L|^{\frac{1}{n}}}{|\ell_{ii}|},~~\forall i\in[n]$ \Comment{$\ell_{ii}$ is the $i$th diagonal element of $L$}
\State $\m{A}\gets\diag(\alpha_1,\ldots,\alpha_n)$
\State $Z_{\mathrm{SIC}}\gets \mathrm{ZSIC}(WL,L,\m{A})$\Comment{Apply Algorithm~\ref{alg:ZSIC} with arguments $Y=WL$, $L$ and $\m{A}$}
\For{$i=1:n$}
\State $B_i\gets \mathrm{EC}\left(Z_{\mathrm{SIC},:,i}\right)$ \Comment{Entropy coding for the $i$th column of $Z_{\mathrm{SIC},:,i}$}
\EndFor
\end{algorithmic}
\end{algorithm*}

The expression~\eqref{eq:ZSICrate} for the rate-distortion tradeoff a ZSIC relied on approximations. The next theorem, proved in Appendix~\ref{app:RDlimit}, shows that these approximations become exact in the limit of high-rate/low-distortion.
\begin{theorem}
For any positive semi definite matrix $\Sigma_X$
\begin{align}
&\lim_{a\to \infty}\lim_{D\to 0}R_{\HPTQ}(D,\Sigma_X)-R_{\wf}(D,\Sigma_X)\nonumber\\
&=\frac{1}{2}\log\left(\frac{2\pi e}{12} \right)+\frac{1}{2}\log\left(\frac{\frac{1}{n}\sum_{i=1}^n(\ell_{ii})^2}{\prod_{i=1}^n(\ell_{ii})^{2/n}} \right)    
\end{align}
and
\begin{align}
\lim_{a\to \infty}\lim_{D\to 0}R_{\wSIC}(D,\Sigma_X)&-R_{\wf}(D,\Sigma_X)\nonumber\\
&=\frac{1}{2}\log\left(\frac{2\pi e}{12} \right)    
\end{align}
\label{thm:RDsiclimit}
\end{theorem}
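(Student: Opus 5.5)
Everything reduces to a single computation for ZSIC with an arbitrary diagonal $\m{A}$. The GPTQ and WaterSIC statements then follow by substituting $\m{A}=\alpha I$ and $\alpha_i=c/\ell_{ii}$, respectively. Throughout I assume $\Sigma_X$ is positive definite, so that $\ell_{ii}>0$; the singular case should follow by a continuity or limiting argument.

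I parametrize the scheme by an overall scale $s\to 0$, taking $\m{A}=s\m{A}_0$ with $\m{A}_0$ fixed. The limit $D\to0$ then corresponds to $s\to0$. Once $D<\sigma_W^2\min_i\lambda_i$, Proposition~\ref{prop:wflb}'s waterfilling expression reduces to $R_{\HR}(D,\Sigma_X)=\frac12\log(\sigma_W^2|\Sigma_X|^{1/n}/D)$. It therefore suffices to show two asymptotic statements as $s\to0$:
\begin{align*}
D_{\mathrm{SIC}}&=\frac{s^2}{12n}\sum_i (\alpha_{0,i}\ell_{ii})^2\,(1+o(1)),\\
\frac1n\sum_i H(Z_{\mathrm{SIC},i})&=\frac12\log(2\pi e\sigma_W^2)-\frac1n\log|s\m{A}_0|+o(1).
\end{align*}
For the rate, the outer limit $a\to\infty$ handles coding. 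Columns of $Z_{\mathrm{SIC}}$ have i.i.d.\ entries across rows, so entropy coding achieves $\frac1n\sum_i H(Z_{\mathrm{SIC},i})$ per weight up to $o(1)$ as $a\to\infty$, and the $n$ scale parameters cost $O(n/a)\to0$. Plugging the two asymptotics into $R-R_{\HR}$, exactly as in~\eqref{eq:substituteDZSIC}, yields~\eqref{eq:ZSICrate} as an exact limit. The AM-GM ratio equals $1$ for the WaterSIC choice of $\m{A}$ and equals $\frac1n\sum\ell_{ii}^2/\prod\ell_{ii}^{2/n}$ for $\m{A}=\alpha I$.

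The key structural step uses Lemma~\ref{lem:FundCellSIC} together with triangularity. Write $\hat w=z\m{A}$ and $u=w-\hat w$. Since $e_{\mathrm{SIC}}=uL$ and $L$ is lower triangular, coordinate $i$ of $uL$ equals $\sum_{j\ge i}u_j\ell_{ji}$. Processing $i=n,\dots,1$, the map $w\mapsto z$ is a successive rounding: $z_i=\round\big((w_i+\text{(function of }w_{j>i},z_{j>i}))/\alpha_i\big)$, with the shift being $\sum_{j>i}u_j\ell_{ji}/\ell_{ii}$. The partition of $\RR^n$ induced by $w\mapsto z$ therefore consists of cells of volume exactly $|\m{A}|$. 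Each cell is a sheared box, the image of $\CUBE\cdot\m{A}\diag(L)$ under the linear map $L^{-1}$, so cells shrink uniformly as $s\to0$.

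Two standard high-resolution arguments then finish the proof. First, the Gaussian density $\phi$ of $w$ is nearly constant on each cell. Hence $e_{\mathrm{SIC}}$ converges in distribution to uniform on $\CUBE\cdot\m{A}_0\diag(L)$ after rescaling by $s$. It is bounded, so second moments converge, giving $D_{\mathrm{SIC}}$. Second, for the entropy I note that the coordinate marginals matter, not the joint. Here I would use the conditional structure: $H(Z_i)\ge H(Z_i\mid Z_{j>i},W_{j>i})$, and conversely $H(Z_i)$ is the entropy of a rounding of $W_i+\text{shift}$ at step $s\alpha_{0,i}$. The shift is bounded, of order $s$. Writing $W_i+\text{shift}$ as a variable with density converging to $\m{N}(0,\sigma_W^2)$ after smoothing, the classical result $H(\round(X/\delta))+\log\delta\to h(X)$ gives $H(Z_i)=h(\m N(0,\sigma_W^2))-\log(s\alpha_{0,i})+o(1)$. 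This is a dithered-like rounding; I would make it rigorous by sandwiching $H(Z_i)$ between $H(\round(W_i/\delta))\pm$ the entropy of the bounded integer correction, which is $O(1)$ but must be shown to be $o(1)$.

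The main obstacle is exactly this last point. Showing the per-coordinate entropy matches $h(W_i)-\log\alpha_i$ up to $o(1)$, rather than $O(1)$, requires exploiting that the shift is small relative to $\sigma_W$ and that $W_i$ is independent of $W_{j>i}$. My first attempt is to condition on $W_{j>i}$, under which $Z_i$ is a deterministic rounding of $W_i$ plus a fixed offset. I would then use the convergence $H(\round((X+t)/\delta))+\log\delta\to h(X)$ uniformly in $t$ for Gaussian $X$, via Riemann-sum arguments for $-\sum p\log p$. After that, I would bound the gap between $H(Z_i)$ and $H(Z_i\mid W_{j>i})$ by showing that the mixture over offsets does not increase entropy beyond $o(1)$, using concavity together with the near-uniformity of the mixture's density.
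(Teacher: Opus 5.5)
Your reduction to $\m{A}=s\m{A}_0$, the successive-rounding description of $Z_{\mathrm{SIC},i}$, and the distortion step all coincide with the paper. The offset $\sum_{j>i}u_j\ell_{ji}/\ell_{ii}$ is $O(s)$ and depends only on $W_{j>i}$, so it is independent of $W_i$; this is exactly the paper's lemma on the statistics of the quantizer input. Your ``Gaussian density is nearly constant on each sheared-box cell'' is a Riemann-sum version of the paper's flatness-factor bound $f_{e_W}\in[1\pm\epsilon_{\tilde\Lambda}(\sigma_W)]\,\Vol(\m{P})^{-1}$ on $\m{P}=\alpha\,\CUBE\cdot\tilde{\m{A}}\diag(L)L^{-1}$.

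Your lower bound on the column entropy is also sound, and it needs no uniform-convergence argument. For every offset $t$ and step $\delta$, $H(\round((W_i+t)/\delta))+\log\delta\ge h(W_i)$. The reason is that the left side is the cross-entropy of $W_i+t$ against its cell-wise averaged density, which dominates $h(W_i+t)=h(W_i)$. Conditioning on $W_{j>i}$ then gives $H(Z_{\mathrm{SIC},i})+\log(s\alpha_{0,i})\ge\frac12\log(2\pi e\sigma_W^2)$ for every $s$.

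The genuine gap is the matching upper bound, which you leave open. Concavity of entropy cannot supply it. Concavity says the entropy of the mixture over offsets is \emph{at least} the average conditional entropy, which is the direction you already have. What is missing is $I(Z_{\mathrm{SIC},i};W_{j>i})\to 0$, or equivalently $\limsup_{s\to 0}[H(Z_{\mathrm{SIC},i})+\log(s\alpha_{0,i})]\le h(W_i)$, and your sketch gives no mechanism for it. The offset is of the same order as the step, so it is not negligible at the cell scale; only its smallness relative to $\sigma_W$ helps.

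The paper gets both directions at once. It applies R\'enyi's theorem to $V_\alpha=W_i/\tilde\alpha_i+\alpha e_i$ uniformly over the family $\{V_\alpha\}_{\alpha<\alpha_0}$, and then uses $h(W_i+\alpha e)\to h(W_i)$. Within your sandwich there is a shorter fix. Let $\delta=s\alpha_{0,i}$ and let $\phi_{\sigma_W}$ be the $\m{N}(0,\sigma_W^2)$ density. Apply Gibbs' inequality against the pmf $q_k=\int_{[(k-\frac12)\delta,(k+\frac12)\delta)}\phi_{\sigma_W}$, then Jensen on $\log q_k$:
\[
H(Z_{\mathrm{SIC},i})+\log\delta\le\tfrac12\log(2\pi\sigma_W^2)+\frac{\EE[(\delta Z_{\mathrm{SIC},i})^2]+\delta^2/12}{2\sigma_W^2}.
\]
Since $|\delta Z_{\mathrm{SIC},i}-W_i|=|u_i|=O(s)$ almost surely, the right side tends to $\frac12\log(2\pi e\sigma_W^2)$.

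Lastly, the singular case does not follow by continuity. If some $\ell_{ii}=0$, the GPTQ right-hand side is infinite, and the WaterSIC spacings and the ZSIC divisions are undefined. Positive definiteness, which the paper's proof also uses tacitly through $L^{-1}$, is the operative hypothesis.
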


\section{WaterSIC: full algorithm}\label{sec:practice}

Now, in previous section a conceptual version of our algorithm, PlainWaterSIC, was described. Using it as motivation,
we now describe the actual algorithm used for compressing linear layers. As before, we will have
$\Sigma_X = L L^T$, where $L$ is lower-triangular. Given a constant $c$ we set $i$-th column grid
spacing to $\alpha_i = {c\over L_{i,i}}$. We now list series of modifications we will make.

\smallskip
\textbf{LMMSE correction}. When ZSIC (Algorithm~\ref{alg:ZSIC}) calls a $\mathrm{round}(\cdot)$ on
a $i$-th column  of the
matrix $Y=WL$, it effectively solves
$$ \argmin_{z \in \mathbb{Z}^n} \sum_{k=1}^a (Y_{k,i} - z_k L_{i,i} \alpha_i)^2 =
\mathrm{round}(Y_{:,i}/c) $$
and results in the approximation $Y_{:,i} \approx z c$. 
However, since entries of $Y_{:,i}$ are typically unimodal with a mode at 0, the rounding errors tend to be biased
away from 0. Thus, a better reconstruction of $Y_{:,i}/c$ can be obtained by applying an extra
shrinkage factor $\gamma_i$ replacing $z c$ with $\gamma_i z c$, where
scalar $\gamma_i$ (known as linear-MMSE, or LMMSE correction) can be chosen via solving a simple
quadratic equation:
\begin{equation}\label{eq:gamma_sic}
	\gamma_i = \argmin_{\gamma \in \mathbb{R}} \sum_{k=1}^a (Y_{k,i} - \gamma c z_k)^2 = {\sum_k
Y_{k,i} z_k\over c \sum_k z_k^2}\,.
\end{equation}
Importantly, recursive ZSIC adjustment to $Y$ should use the
$\gamma_i$-corrected value too: $Y \gets Y-\gamma_i c L_{i,:} z$.

Note that overall we see that the final weight $\hat w_i \in (\alpha_i \gamma_i) \mathbb{Z}^n$ and
since $\gamma_i<1$ one may wonder why not use the finer grid $(\alpha_i \gamma_i)$ from the start?
The answer is that reducing the grid spacing will increase the entropy of $\hat w_i$ (and rate).
This is one of the counter-intuitive effects in the low-rate regime: it is unnatural, but
important, to dequantize to elements that are distinct from the quantization ones (unless one uses
a sophisticated vector quantizer, i.e. a centroidal Voronoi tesselation (CVT), such as returned by
a fully converged Lloyd-Max algorithm).

\smallskip
\textbf{Activation drift correction (Qronos/QA-LDLQ)}.  Several works~\cite{savkin2025nestquant}
(Section 4.5),~\cite{zhang2025qronos},~\cite{zhang2025provable} simultaneously
noticed the following discrepancy in the original formulation: instead of minimizing
$\EE[\|WX-\hat W X\|_2^2]$ we should be minimizing
$$ \min_{\hat W} \EE[\|WX - \hat W \hat X\|_2^2]\,,$$
where $\hat X$ is the input activation in the quantized model (i.e. due to quantization of
previous layers, the input $\hat X$ to the present layer is different from $X$ in the unquantized
model). This problem after some algebra reduces to
\begin{equation}\label{eq:qronos}
	\min_{\hat W} \|\hat y - \hat W \hat L\|_2^2\,,
\end{equation}
where $\Sigma_{\hat X} = \EE[\hat X \hat X^\top]$ and its Cholesky decomposition is $\Sigma_{\hat
X} = \hat L \hat L^\top$, $\Sigma_{X,\hat X}
= \EE[X \hat X^\top]$ and
\begin{equation}\label{eq:qronos_y}
	\hat y = W \Sigma_{X\hat X} (\hat L^\top)^{-1}\,.
\end{equation}
Thus, the only modification to the previous discussion is the replacement of the Hessian and adjusted
value of $y$. 

\smallskip
\textbf{Residual stream correction.} We noticed that some of the hardest to compress
layers are the downprojection layers (wo for attention and w2 for FFN). Part of the reason is that they in fact do not produce
output $Y=WX$ as we have been assuming, but rather $Y=WX+R$, where $R$ is the state of the residual stream to
which the downprojection contributes to. For this reason, we should really be solving the problem
$$ \min_{\hat W} \EE[ \|WX+R - (\hat W \hat X + \hat R)\|_2^2 ]\,,$$
where $\hat X, \hat R$ are the input activations and the state of residual stream in the quantized
model. In this case, after introducing the  $\Sigma_{\Delta ,\hat X}= \EE[(R-\hat R)\hat X^\top]$
we can see that the problem reduces to~\eqref{eq:qronos} but with
\begin{equation}\label{eq:rescomp_y}
	\hat y = (W \Sigma_{X, \hat X} + \Sigma_{\Delta ,\hat X}) (\hat L^\top)^{-1}\,. 
\end{equation}

\smallskip
\textbf{Diagonal rescalers.} The final optimization we would like to do is another round of
row/column rescalers. Specifically, after running the ZSIC algorithm, we obtain a diagonal matrix $\m{A}$ and
an integer matrix $Z_{\mathrm{SIC}}$. We set $\hat W_0 = Z_{\mathrm{SIC}} \m{A}$ as our
preliminary estimate and search the final optimizer in the form
$$ \hat W = T \hat W_0 \Gamma\,,$$
where $T = \diag(t_1,\ldots,t_a)$ and $\Gamma = \diag(\gamma_1,\ldots,\gamma_n)$. Because of scale
invariance, we will also normalize the matrices such that $\tr T = a$. Initially, we set $T=I$ and 
$\Gamma$ is taken from values $\gamma_i$ obtained in~\eqref{eq:gamma_sic}.

The loss objective to be
minimized in terms of $T$ and $\Gamma$ is 
$$ \ell(T,\Gamma) = \tr(W \Sigma_X W^\top - 2 (W \Sigma_{X,\hat X} + \Sigma_{\Delta, \hat X})
\Gamma \hat W_0^\top T) \,.$$
For a fixed $\Gamma$ the loss in terms of $T=\diag(t_1,\ldots, t_a)$ is given by
$$\sum_{i=1}^a t_i^2 F^{(8)}_{i,i} - 2 F^{(7t)}_{i,i} t_i \,,$$
where $F^{(8)} = \hat W_0 \Gamma \Sigma_{\hat X} \Gamma \hat W_0^{\top}$, $F^{(7t)} = (W
\Sigma_{X,\hat X} + \Sigma_{\Delta,\hat X}) \Gamma \hat W_0^\top$.
Thus, the optimal value of $t_i$ can be given by $ t_i = {F^{(7t)}_{i,i}  / F^{(8)}_{i,i}}$.

On the other hand, for a fixed $T$ the loss function in terms of $\Gamma = \diag(\gamma_1,\ldots,\gamma_n)$ is given by
$$ \sum_{i,j} \gamma_i \gamma_j F^{(3)}_{i,j} - 2\sum_i F^{(4)}_{i,i} \gamma_i\,, $$
where $F^{(3)} = F^{(2)} \odot \Sigma_{\hat X}$, $F^{(2)} = \hat W_0^\top T^2 \hat W_0$, $F^{(4)} = \hat W_0^\top T (W \Sigma_{X,\hat X} + \Sigma_{\Delta, \hat X})$. Note that by Schur's product theorem $F^{(3)}$ is positive definite whenever $F^{(2)}$ and $\Sigma_{\hat X}$ are. Thus, minimizer of this expression exists and is given by 
$$ (\gamma_1,\ldots,\gamma_n)^\top = F^{(3) \, -1} \diag(F^{(4)})\,.$$

Thus by alternating $T$ and $\Gamma$ steps, we can progressively improve the loss attained by the
$\hat W$. (We also need to renormalize $T$ and $\Gamma$ after each step so that $\tr T = a$ is
satisfied.) Note also that adding a row rescaler (in BF16) adds $16/a$ overhead to the final rate
(and recall that $16/n$ is paid for communicating $\m{A}$, which we fuse into $\m{A}\Gamma$).


See Fig.~\ref{fig:tgamma} for representation of typical values of $T$ and $\Gamma$.

\smallskip
\textbf{Entropy coding.} In Algorithm~\ref{alg:PlainWater} we performed entropy coding on each
column of $Z_{\mathrm{SIC}}$ separately. This makes sense because for iid rows of $W$, each column
of $Z_{\mathrm{SIC}}$ is iid, but the distribution may be different from column to column. In
practice, however, we observed that if instead of this we perform entropy coding jointly on the
entire matrix $Z_{\mathrm{SIC}}$, the increase in rate is negligible. Thus, in the practical
WaterSIC algo we go with the latter option. Note that this does not imply that most columns are
coded to the same rate, which is indeed not the case (see Fig.~\ref{fig:rate_alloc}).

We also checked that instead of entropy coding, one
might use LZMA or Zstd compressor and indeed obtain the same number of bits as predicted by our
estimate of entropy over entries. We report these findings in the Appendix, \ref{sec::ablations}.

\medskip

\textbf{Rate assignment.} To achieve a desired target rate for the layer, we note that the final entropy is a monotone function of the constant $c$, which is approximately linear with a slope close to unity. We exploit this
structure via a secant method: evaluating at the target rate reveals an offset, which a secant correction refines in about 3 iterations to within $<0.005$ bits of the target. For computational efficiency, the entropy estimate
uses a randomly sampled fraction of the rows. We also maintain a global budget that can be allocated to the remaining unquantized layers, which mitigates discrepancies between the estimated and actual entropy.

\smallskip
\textbf{Attention-weighted calibration.} The covariance matrices $\Sigma_X$, $\Sigma_{\hat X}$, and
$\Sigma_{X,\hat X}$ above are estimated by averaging uniformly over all token positions. When quantizing attention matrices $W_K, W_Q, W_V$, however, we found out that some tokens need to be quantized with higher fidelity. 
To correct for this, we weight the covariance estimates for QKV projections by a per-token attention
importance score:
\begin{equation}\label{eq:attn_importance}
    p_j := \frac{1}{N_H\,(T-j)} \sum_{h=1}^{N_H} \sum_{i=j}^{T-1} \alpha_{h,i,j}\,.
\end{equation}
Here, $\alpha_{h,i,j}$ is the attention probability from query~$i$ to key~$j$ in head~$h$, computed
from the unquantized model. The weighted covariances
$\Sigma_X^{(w)} = \EE\!\left[\sum_j p_j\, x_j x_j^\top\right]$ (and analogously $\Sigma_{\hat X}^{(w)}$
and $\Sigma_{X,\hat X}^{(w)}$) are then substituted into~\eqref{eq:qronos}--\eqref{eq:qronos_y}. 
This weighting applies only to QKV projections. 

\smallskip
\textbf{Adaptive mixing.} We noticed that when accumulated prior layers introduce anomalously large discrepancy between
quantized $\hat X$ and unquantized $X$ inputs, the drift compensation and attention weighting
become overly fixated on correcting this discrepancy, thus losing fidelity on truly relevant
features of $X$. To fix this we can replace $\hat X$ with $X$ with a certain probability
$\epsilon_{\mathrm{qr}}$ during calibration calculation. Similarly, we introduce parameter
$\epsilon_{\mathrm{aw}}$ that determines the amout of attention-weighting. Overall, the actual
calibratrion matrices we use are given by 
\begin{align}
  \Sigma_X^{(\mathrm{final})} &:= (1-\epsilon_{\mathrm{aw}}) \, \Sigma_X^{(w)} +
  \epsilon_{\mathrm{aw}} \Sigma_X\,,\nonumber \\
  \Sigma_\bullet^{(\mathrm{final})} &:= (1-\epsilon_{\mathrm{aw}})\, ( (1-\epsilon_{\mathrm{qr}})
  \Sigma^{(w)}_\bullet + \epsilon_{\mathrm{qr}} \Sigma^{(w)}_X) + \nonumber\\
  	& {} \qquad \epsilon_{\mathrm{aw}} ((1-\epsilon_{\mathrm{qr}}) \Sigma_\bullet + \epsilon_{\mathrm{qr}}
	\Sigma_X)\,,  \label{eq:qronos_mix}
\end{align}  
where $\bullet$ in the last equation stands for $\hat X$ or $X,\hat X$ (that is, gives equation
for effective $\Sigma_{\hat X}$ and $\Sigma_{X,\hat X}$). Parameters $\epsilon_{\mathrm{qr}}$ and
$\epsilon_{\mathrm{aw}}$ are not set apriori but are optimized using golden-search algorithm for each
layer, see details in Appendix~\ref{apx:mixing} and Tables~\ref{tab:q_eps},~\ref{tab:a_eps}.

We only applied mixing optimizations for attention projections $(w_q,w_k,w_v)$, for the rest
$\epsilon_{\mathrm{qr}} = 0$ and $\epsilon_{\mathrm{aw}} = 1$. 

\smallskip
\textbf{Dead feature erasure.} We found that occasionally some input dimensions of $X$ have near-zero variance, i.e., $[\Sigma_X]_{ii} \approx 0$. These ``dead features'' cause numerical issues in the Cholesky
decomposition of $\Sigma_{\hat X}$ and can lead to unstable rescaler optimization, while carrying
negligible signal. Traditionally, these were handled by \textit{damping}, which adds a multiple of
identity to certain matrices (see Appendix~\ref{apx:mixing}), but we apply a more direct
workaround. We declare dimension $i$ dead if
$[\Sigma_X]_{ii} < \tau\,\tilde\sigma^2$, where $\tilde\sigma^2 := \operatorname{median}_{j}\,[\Sigma_X]_{jj}$ and $\tau = 10^{-3}$. We use the median rather than the mean because a
few high-variance dimensions (e.g., in SiLU-gated intermediate activations) can inflate the mean by orders of magnitude, causing the threshold to erroneously flag the majority of
dimensions as dead. We then set the corresponding columns of $W$ to zero and apply our
gquantization pipeline to the reduced system obtained by removing these dimensions. The quantized weight is
expanded back to the original size by inserting zero-columns at the dead features. In early layers, the number of dead dimensions can be substantial (e.g., due to layer normalization
suppressing certain coordinates), which both improves the numerical stability of the Cholesky
factor $\hat L$ and frees quantization rate budget for the remaining live dimensions.

\smallskip
\textbf{Post-quantization finetuning.} After the quantization pipeline produces integer codes $Z$ and initial rescalers $T$, $\Gamma$, we can further improve quality by finetuning the continuous parameters $t_1,\ldots,t_a$ and $\gamma_1,\ldots,\gamma_n$ via gradient descent, while keeping the integer codes $Z$ frozen. Since the dequantized weight $\hat W = T (Z \odot \alpha) \Gamma$ is fully differentiable with respect to $t$ and $\gamma$, no straight-through estimator is needed. We use distillation loss, i.e. minimize the KL divergence between the unquantized (teacher) and quantized (student) model outputs:
$$\mathcal{L} = \mathrm{KL}\!\left(p_{\mathrm{teacher}}(\cdot \mid x) \;\|\; p_{\mathrm{student}}(\cdot \mid x)\right)$$
using AdamW with cosine annealing. The total number of trainable parameters is $a + n$ per layer (two vectors), which is negligible compared to the $an$ frozen integer codes. As shown in Table~\ref{tab:qwen3_8b_points_wt2_by_rate}, training these additional parameters under the cross-entropy loss even allowed us to improve upon the base model's perplexity. 

In the plots and tables, we make a distinction between WaterSIC and WaterSIC-FT versions of the algorithm, where the latter one uses post-quantization finetuning and the former one does not.

\medskip

The full algorithm is detailed in the appendix; see Algorithms~\ref{alg:watersic_ex} and~\ref{alg:for}. In Appendix~\ref{sec::ablations}, we additionally analyze how the individual
techniques affect quantization quality by reporting relative MSE plots for the inputs to each layer's weight matrices. 



\section{Evaluation results and limitations}\label{sec:limitations}


To evaluate the performance of WaterSIC, we focus on two representative dense LLMs: Llama-3.2-1B and Qwen3-8B. Throughout, we report WikiText-2 validation perplexity at context length 2048 as a function of the average bitrate. In our plots, WaterSIC and other entropy-coded methods report rates in bits/weight via entropy, whereas several prior baselines report rates via log-cardinality; we follow the conventions used in the respective references. In Figure~\ref{fig:ppl_modelsize}, we compare the performance of WaterSIC across various models and bitrates.

\textbf{Llama-3.2-1B.}
Figure~\ref{fig:main_Llama} and Table~\ref{tab:wikitext2_ppl_bitwidth_sorted_round2} shows the perplexity-rate tradeoff for Llama-3.2-1B. We compare WaterSIC to a diverse set of PTQ baselines: AWQ~\citep{lin2024awq}, Huffman-GPTQ~\citep{chen2025geometry}, NestQuant ~\citep{savkin2025nestquant}, and QTIP~\citep{tseng2025qtip}. Across the range of rates we consider, WaterSIC improves upon the best available baselines at comparable bitrate, yielding a consistently better frontier in the low-to-mid rate regime. A comparison between WaterSIC and Huffman-GPTQ on suite of zero-shot accuracy benchmarks on Llama-3.2-1B across multiple rates is given in Table~\ref{tab:llama32_1b_zeroshot_hptq_vs_watersic} in Appendix~\ref{sec::zero_shot}.

\begin{table}[!ht]
\centering
\caption{Comparison of wikitext2 perplexity results on Llama3.2-1B. In each group of rows WaterSIC-FT achieves the best PPL while having minimal rate. The unquantized model perplexity is 9.76. Evaluated at CTX=2048.}
\label{tab:wikitext2_ppl_bitwidth_sorted_round2}
\footnotesize
\setlength{\tabcolsep}{6pt}
\renewcommand{\arraystretch}{0.95}
\begin{tabular}{lrr}
\toprule
Method & Avg.\ Bitwidth & WikiText-2 PPL $\downarrow$ \\
\midrule
WaterSIC & 1.00 & 82.44 \\
WaterSIC-FT & \textbf{1.00} & \textbf{30.47} \\
\midrule
WaterSIC-FT & \textbf{1.50} & \textbf{18.50} \\
WaterSIC & 1.50 & 30.73 \\
Huffman-GPTQ & 1.52 & 1000+ \\
Huffman-GPTQ & 1.94 & 86.80 \\
\midrule
WaterSIC-FT & \textbf{2.00} & \textbf{13.59} \\
WaterSIC & 2.00 & 16.19 \\
QTIP & 2.02 & 18.67 \\
\midrule
WaterSIC-FT & \textbf{2.50} & \textbf{11.33} \\
WaterSIC & 2.50 & 11.83 \\
Huffman-GPTQ & 2.54 & 17.74 \\
Huffman-GPTQ & 2.97 & 13.99 \\
\midrule
WaterSIC-FT & \textbf{3.00} & \textbf{10.45} \\
WaterSIC & 3.00 & 10.57 \\
QTIP & 3.02 & 11.17 \\
NestQuant (W-only) & 3.18 & 10.85 \\
\midrule
WaterSIC-FT & \textbf{3.18} & \textbf{10.27} \\
WaterSIC & 3.18 & 10.35 \\
AWQ (g128) & 3.25 & 16.74 \\
Huffman-GPTQ & 3.45 & 11.65 \\
\midrule
WaterSIC-FT & \textbf{3.50} & \textbf{10.08} \\
WaterSIC & 3.50 & 10.11 \\
NestQuant (W-only) & 3.50 & 10.38 \\
NestQuant (W-only) & 3.76 & 10.18 \\
\midrule
WaterSIC-FT & \textbf{3.76} & \textbf{9.98} \\
WaterSIC & 3.76 & 9.99 \\
NestQuant (W-only) & 3.99 & 10.06 \\
\midrule
WaterSIC-FT & \textbf{4.00} & \textbf{9.91} \\
WaterSIC & 4.00 & 9.92 \\
Huffman-GPTQ & 4.01 & 10.66 \\
QTIP & 4.02 & 10.12 \\
AWQ (g128) & 4.25 & 10.84 \\
\bottomrule
\end{tabular}
\end{table}

\begin{figure*}[!t]
  \centering
  \begin{minipage}[t]{0.49\textwidth}
    \centering
    \includegraphics[width=\linewidth]{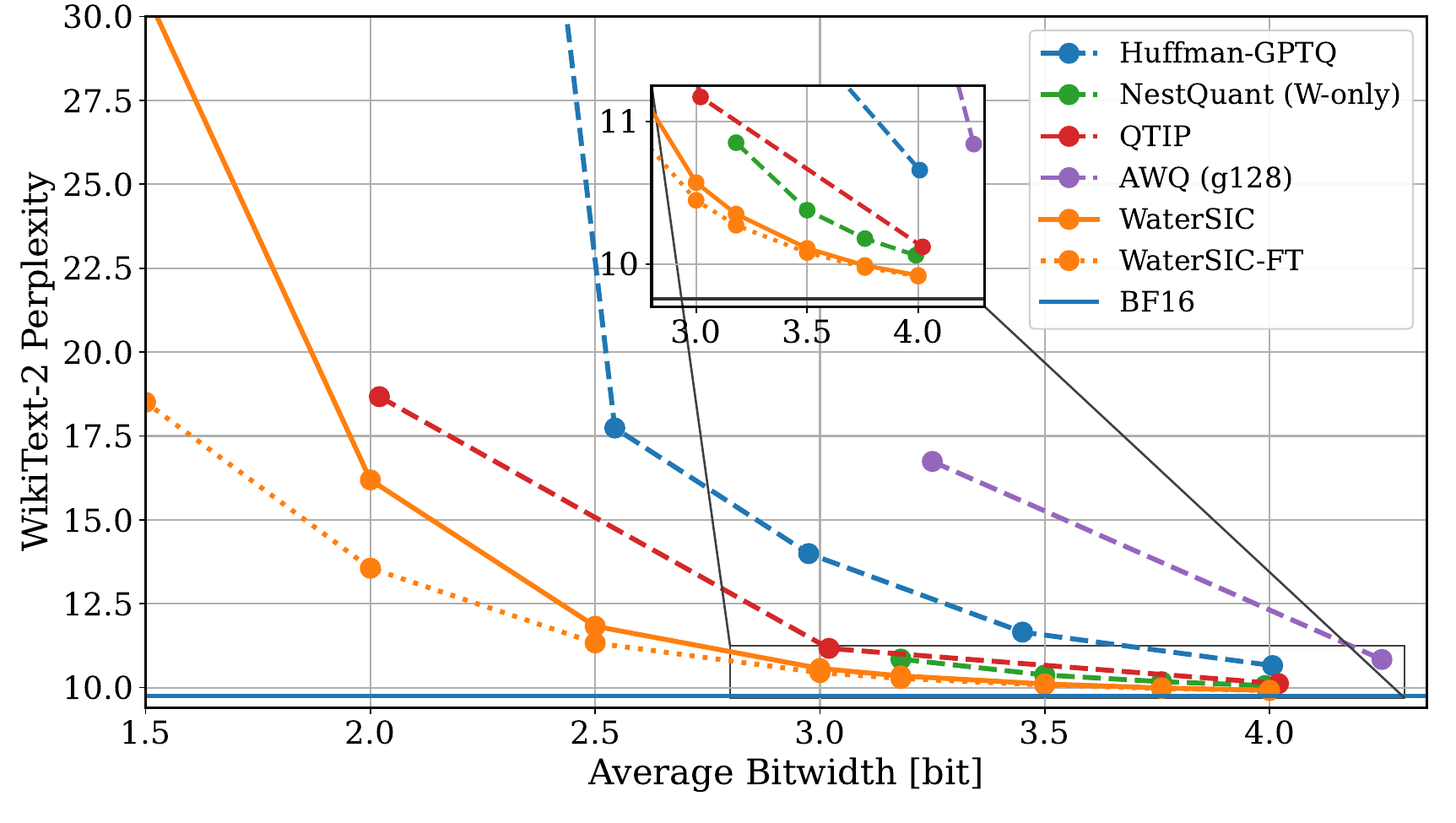}
    \caption{Llama-3.2-1B: WaterSIC and WaterSIC-FT vs other algorithms. WaterSIC, WaterSIC-FT and Huffman-GPTQ use entropy to report rates, others use log-cardinality. Evaluated at CTX=2048.}
    \label{fig:main_Llama}
  \end{minipage}\hfill
  \begin{minipage}[t]{0.49\textwidth}
    \centering
    \includegraphics[width=\linewidth]{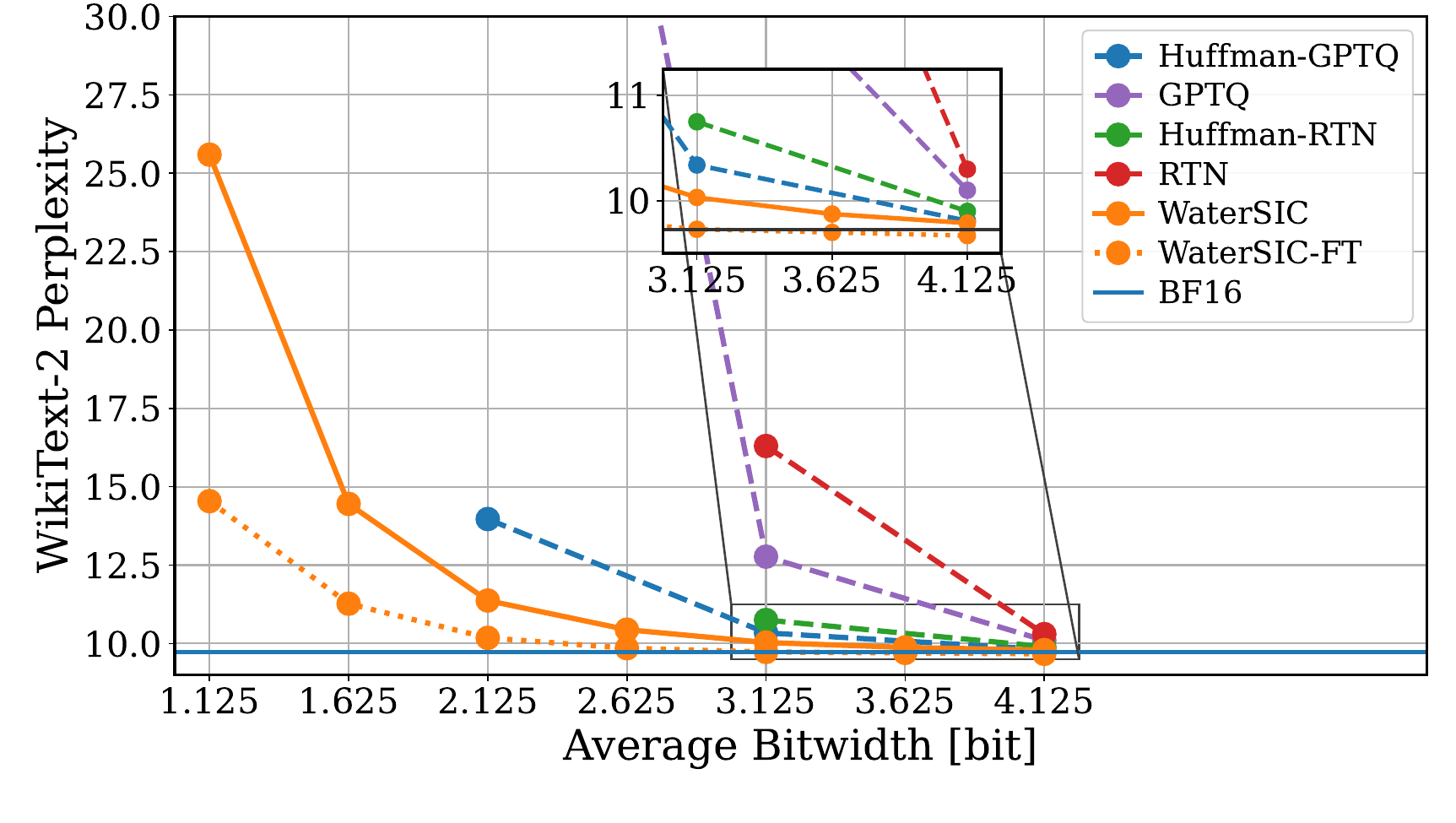}
    \caption{Qwen3-8B: WaterSIC and WaterSIC-FT vs other algorithms. WaterSIC, WaterSIC-FT, Huffman-GPTQ and Huffman-RTN use entropy to report rates, others use log-cardinality. Evaluated at CTX=2048.}
    \label{fig:main_qwen}
  \end{minipage}
\end{figure*}

\textbf{Qwen3-8B.}
Figure~\ref{fig:main_qwen} and Table~\ref{tab:qwen3_8b_points_wt2_by_rate} summarize results on Qwen3-8B. We compare against classical weight-only baselines such as GPTQ~\citep{frantar2023gptq} and RTN, and against entropy-based baselines (Huffman-GPTQ, Huffman-RTN) as reported and evaluated by \citet{chen2025geometry}. WaterSIC attains state-of-the-art perplexity at all rates, demonstrating that our method performs well on a model outside of Llama family.  A comparison between WaterSIC and Huffman-GPTQ, as well as other schemes, on suite of zero-shot accuracy benchmarks on Llama-3.2-1B across multiple rates is given in Table~\ref{tab:qwen3_8b_benchmarks_hptq_vs_watersic} in Appendix~\ref{sec::zero_shot}.

\begin{table}[!ht]
\centering
\footnotesize
\setlength{\tabcolsep}{5pt}
\renewcommand{\arraystretch}{1.0}
\begin{tabular}{lccccc}
\toprule
Method \ (bits) & 2.125 & 2.625 & 3.125 & 3.625 & 4.125 \\
\midrule
Huffman-GPTQ  & 13.97 & -- & 10.34 & -- & 9.81 \\
GPTQ                & 57.51 & -- & 12.77 & -- & 10.10 \\
Huffman-RTN   & 593.05 & -- & 10.75 & -- & 9.90 \\
RTN                 & $2\times 10^{10}$ & -- & 16.30 & -- & 10.30 \\
WaterSIC            & 11.37 & 10.44 & 10.03 & 9.87 & 9.79 \\
WaterSIC-FT         & \textbf{10.18} & \textbf{9.85} & \textbf{9.73} & \textbf{9.70} & \textbf{9.67} \\
\bottomrule
\end{tabular}
\caption{Qwen3-8B WikiText-2 perplexity (lower is better) at fixed average bitwidths. Results for Huffman-GPTQ(HPTQ)/GPTQ/Huffman-RTN(HRTN)/RTN are taken from~\citet{chen2025geometry}; WaterSIC-FT achieves the
best perplexity at all rates. The unquantized model perplexity is $9.73$. Evaluated at CTX=2048.}
\label{tab:qwen3_8b_points_wt2_by_rate}
\end{table}

In Appendix~\ref{sec::other_models}, we further report additional quantization results for the Llama-3-8B (Table~\ref{tab:llama3_8b} and Figure~\ref{fig:app_8b}) and Llama-2-7B
(Table~\ref{tab:llama2_7b} and Figure~\ref{fig:app_7b}) models.


\textbf{Limitations and future work.}
Our evaluation focuses on post-training weight quantization and does not include several complementary directions.
First, we only focused on layerwise optimization of Euclidean post-matmul loss. Some of the best
modern algorithms, however, attempt to better approximate target PPL (or KL) loss, cf.
YAQA~\cite{tseng2025yaqa}. It would be quite interesting to incorporate this idea into WaterSIC.
Second, while we rely on entropy estimates and standard lossless codecs to translate entropy rates into compressed bitstreams, we did not benchmark end-to-end (de)compression throughput and its interaction with real hardware kernels. 
Third, current experiments are conducted on relatively small to mid-sized models, and scaling to substantially larger models may expose new practical bottlenecks.

\clearpage

\section*{Impact Statement}
This paper presents work whose goal is to advance the field of Machine Learning. There are many potential societal consequences of our work, none which we feel must be specifically highlighted here.

\section*{Acknowledgement}

This material is based upon work supported by the National Science
Foundation under Grant No CCF-21-12665. The work of EL and YP was supported (in part) by the generous gifts from Jane Street and Google Research. The work of OO was supported by the
Israel Science Foundation (ISF), grant No. 2878/25.

\bibliography{wquant}
\bibliographystyle{icml2026}

\newpage
\appendix
\onecolumn
\raggedbottom   

\section{Proof of Lemma~\ref{lem:FundCellSIC}}
\label{sec:fundSICproof}

Let $z_{\mathrm{SIC}}(y,L,\m{A})$ be the result of applying the SIC algorithm with inputs $y\in\RR^{1\times n}$ and $L,\m{A}\in\RR^{n\times n}$. The lemma follows from combining the two following observations:
\begin{enumerate}
\item The region of all $y\in\RR^{1\times n}$ that are mapped to $0$ is $$\left\{y\in\RR^{1\times n}:z_{\mathrm{SIC}}(y,L,\m{A})=0 \right\}=\CUBE\cdot \m{A}\diag(L)$$
\item For any $z\in\ZZ^{1\times n}$ it holds that $$z_{\mathrm{SIC}}(y+z\m{A}L ,L,\m{A})=z\m{A}+z_{\mathrm{SIC}}(y,L,\m{A}).$$
\end{enumerate}
Thus, the ZSIC algorithm induces the partition of space to decision regions 
\begin{align}
\m{D}_{z}=z\m{A}L+\CUBE\cdot\m{A}\diag(L),~\forall z\in\ZZ^{1\times n}    
\end{align}
and $z_{\mathrm{SIC}}(y,L,\m{A})=z\m{A}$ iff $y\in\m{D}_z$. Consequently, $e_{\mathrm{SIC}}=y- z_{\mathrm{SIC}}(y,L,\m{A})\in \CUBE\cdot\m{A}\diag(L) $.

\section{Proof of Theorem~\ref{thm:RDsiclimit}}
\label{app:RDlimit}

\subsection{Preliminaries}
\textbf{Flatness factor of a lattice:}
For a lattice $\Lambda\subset\RR^{1\times n}$ and $\sigma>0$, define the $\Lambda$-periodic function $f_{\sigma,\Lambda}:\RR^{1\times n}\to\RR$
\begin{align}
f_{\sigma,\Lambda}(x)=\sum_{\lambda\in\Lambda}\phi_{\sigma}(x-\lambda),
\end{align}
where
\begin{align}
\phi_{\sigma}(x)=(2\pi\sigma^2)^{-n/2}e^{-\frac{\|x\|^2}{2\sigma^2}}
\end{align}
The flatness factor of a lattice $\Lambda$ is defined as~\cite{micciancio2007worst,ling2014semantically}
\begin{align}
\epsilon_\Lambda(\sigma)=\sup_{x\in\RR^{1\times n}}\left|\frac{f_{\sigma,\Lambda}(x)}{1/\covol(\Lambda)}-1 \right|.  
\end{align}
It is well-known, see e.g.~\citep[Corollary 1]{ling2014semantically} that
\begin{align}
\epsilon_\Lambda(\sigma)=\sum_{\lambda'\in\Lambda^*\setminus\{0\}} e^{-2\pi^2\sigma^2 \|\lambda'\|^2},    
\end{align}
where $\Lambda^*$ is the dual lattice of $\Lambda$. Note that the dual lattice of $\ZZ^{1\times n}$ is $\ZZ^{1\times n}$. Furthermore, if $\tilde{\m{A}}=\diag(\tilde{\alpha}_1,\ldots,\tilde{\alpha}_n)$  for fixed $\tilde{\alpha}_1,\ldots,\tilde{\alpha}_n>0$, and $\tilde{\Lambda}=\alpha \ZZ^{1\times n}\tilde{\m{A}}$ for some $\alpha>0$, then $\tilde{\Lambda}^*=\alpha^{-1}\ZZ^{1\times n}\tilde{\m{A}}^{-1}$ and
\begin{align}
\epsilon_{\alpha \ZZ^{1\times n}\tilde{\m{A}}}(\sigma)=\sum_{z\in\ZZ^{n}\setminus \{0\}}\exp\left(-\left(\frac{\sigma}{\alpha}\right)^2\cdot 2\pi^2\|z\tilde{\m{A}}^{-1}\|^2\right).
\end{align}
We therefore have that for any fixed $\sigma,\tilde{\alpha}_1,\ldots,\tilde{\alpha}_n>0$
\begin{align}
\lim_{\alpha\to 0}\epsilon_{\alpha \ZZ^{1\times n}\tilde{\m{A}}}(\sigma)=0.  
\label{eq:flatnessZlimit}
\end{align}

\medskip

\textbf{Statistics of the quantizer's input}

We prove the following.
\begin{lemma}
Let $W\sim\m{N}(0,\sigma^2I)$ be a Gaussian vector in $\RR^{1\times n}$, $L\in\RR^{n\times n}$ be a lower triangular matrix, and $\tilde{\m{A}}=\diag(\tilde{\alpha}_1,\ldots,\tilde{\alpha}_n)\in\RR_+^{n\times n}$. Assume we apply Algorithm~\ref{alg:ZSIC} with $Y=W L$, L and $\m{A}=\alpha\tilde{\m{A}}$, for some $\alpha>0$. Then
\begin{align}
Z_{\mathrm{SIC},1,k}=\mathrm{round}\left(\frac{W_k}{\alpha\tilde{\alpha}_k}+e_k \right),    
\end{align}
where $e_k$ is statistically independent of $W_k$ and satisfies $|e_k|<C(L,\m{A},n)$ with probability $1$, where $0<C(L,\m{A},n)<\infty$ is independent of $\alpha$.
\label{lem:qunatizerStats}
\end{lemma}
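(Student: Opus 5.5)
The plan is to unroll Algorithm~\ref{alg:ZSIC} coordinate by coordinate, exploiting the lower-triangular structure of $L$. Write $\alpha_j=\alpha\tilde{\alpha}_j$ and let $Z_j$ denote $Z_{\mathrm{SIC},1,j}$.

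\textbf{Step 1: the current value at step $k$.} Since $L_{jk}=0$ for $j<k$, the $k$-th coordinate of $Y=WL$ is $\sum_{j\ge k}W_jL_{jk}$. Before step $k$ is executed, the algorithm has subtracted $\sum_{j>k}\alpha_jZ_jL_{jk}$ from it. Hence the value that is rounded at step $k$ is
\begin{align*}
\frac{Y^{\mathrm{cur}}_k}{\alpha_k\ell_{kk}}=\frac{W_k}{\alpha\tilde{\alpha}_k}+e_k,\qquad e_k:=\sum_{j>k}\frac{(W_j-\alpha_jZ_j)L_{jk}}{\alpha_k\ell_{kk}}.
\end{align*}
This already gives the claimed form $Z_k=\mathrm{round}\bigl(W_k/(\alpha\tilde{\alpha}_k)+e_k\bigr)$. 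Here I assume $\ell_{kk}\neq 0$, i.e.\ $\Sigma_X$ is positive definite, as is implicit in running the algorithm; otherwise the division in Algorithm~\ref{alg:ZSIC} is undefined.

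\textbf{Step 2: independence.} The $Z_j$ for $j>k$ are computed in steps $n,\dots,k+1$, which only involve $W_{k+1},\dots,W_n$. So by downward induction, $Z_j$ is a deterministic function of $(W_j,\dots,W_n)$. Therefore $e_k$ is a measurable function of $(W_{k+1},\dots,W_n)$. Since $W\sim\m{N}(0,\sigma^2I)$ has independent coordinates, $e_k$ is independent of $W_k$.

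\textbf{Step 3: an $\alpha$-free bound, by downward induction on $k$.} Set $d_j:=W_j-\alpha_jZ_j$. From Step 1 applied at index $j$, together with $|x-\mathrm{round}(x)|\le 1/2$,
\begin{align*}
\frac{|d_j|}{\alpha_j}=\left|\Bigl(\frac{W_j}{\alpha_j}+e_j\Bigr)-Z_j-e_j\right|\le\frac12+|e_j|.
\end{align*}
Plugging this into the definition of $e_k$, the common factor $\alpha$ cancels between $\alpha_j$ and $\alpha_k$:
\begin{align*}
|e_k|\le\sum_{j>k}\Bigl(\frac12+|e_j|\Bigr)\frac{\tilde{\alpha}_j|L_{jk}|}{\tilde{\alpha}_k|\ell_{kk}|}.
\end{align*}
Starting from $e_n=0$, define constants $C_n=0$ and $C_k=\sum_{j>k}(\tfrac12+C_j)\tilde{\alpha}_j|L_{jk}|/(\tilde{\alpha}_k|\ell_{kk}|)$. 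Then $|e_k|\le C_k$ surely, and we can take $C(L,\tilde{\m{A}},n)=\max_kC_k+1>0$. This constant is finite and depends only on $L$, $\tilde{\m{A}}$ and $n$, not on $\alpha$. The statement writes $C(L,\m{A},n)$, but the intended meaning is dependence on $\tilde{\m{A}}$ only.

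\textbf{Main obstacle.} The only delicate point is bookkeeping in Step 3. We must express $e_k$ through the normalized errors $d_j/\alpha_j$ rather than through $d_j$ itself, so that the scaling by $\alpha$ visibly cancels. We must also control the error accumulated from earlier steps via the recursion, rather than bounding $d_j$ directly by the rounding cell. The rest is immediate from the triangular structure.
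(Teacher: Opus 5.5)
Your proposal is correct. Steps 1 and 2 are essentially the paper's argument written coordinatewise: the paper derives the same rounded quantity in matrix form. It gets independence from the same lower-triangularity, since the correction at index $k$ depends only on $W_{k+1},\ldots,W_n$.

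The real difference is in Step 3. You write the correction through the weight-domain errors $d_j=W_j-\alpha_jZ_j$, i.e.\ $e=e_W(L-\diag(L))(\m{A}\diag(L))^{-1}$ with $e_W=W-Z_{\mathrm{SIC}}\m{A}$. These errors are not individually confined to a rounding cell, which is why you need the downward recursion for $C_k$. The paper substitutes $e_W=e_{\mathrm{SIC}}L^{-1}$ and rewrites the same vector as $e=e_{\mathrm{SIC}}\tilde{L}(\m{A}\diag(L))^{-1}$ with $\tilde{L}=I-L^{-1}\diag(L)$. It then invokes Lemma~\ref{lem:FundCellSIC}: $S=e_{\mathrm{SIC}}(\m{A}\diag(L))^{-1}\in\CUBE$, because the $j$th coordinate of $S$ is exactly the rounding error at step $j$ (in your notation, $d_j/\alpha_j+e_j$). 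This gives the closed form $e=S\,\tilde{\m{A}}\diag(L)\tilde{L}(\tilde{\m{A}}\diag(L))^{-1}$. The bound $|e_k|\le\frac{1}{2}\sum_{j>k}|\tilde{\alpha}_j\ell_{jj}\tilde{L}_{jk}|/|\tilde{\alpha}_k\ell_{kk}|$ follows in one line, with $\alpha$ visibly cancelling.

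Your recursion bounds $d_j/\alpha_j$ and $e_j$ separately by the triangle inequality at every level. This throws away the cancellation encoded in $L^{-1}$, so your constant can be far larger. For example, with $\tilde{\m{A}}=I$ and $L$ all-ones lower triangular, your $C_{n-m}=(2^m-1)/2$, while the paper's bound is $1/2$. Since the lemma only asks for some finite $\alpha$-free constant, this looseness is harmless.

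Your route buys self-containment: it needs neither Lemma~\ref{lem:FundCellSIC} nor $L^{-1}$. The paper's route buys an explicit, tighter constant and a direct link to the fundamental-cell structure $\CUBE\cdot\m{A}\diag(L)$, which it reuses when proving the distortion limit.
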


\begin{proof}
Let $Z_{\mathrm{SIC}}$ be the output of the algorithm, and let $\hat{Y}=Z_{\mathrm{SIC}}\m{A}L$, and let $e_{\mathrm{SIC}}=Y-\hat{Y}$. As we have seen in Lemma~\ref{lem:FundCellSIC}, $e_{\mathrm{SIC}}\in\CUBE\cdot \m{A}\cdot\diag{L}$. Inspecting the algorithm's successive rounding procedure, we have that 
\begin{align}
Z_{\mathrm{SIC}}&=\round\left((Y-Z_{\mathrm{SIC}}\m{A}(L-\diag(L))) ~\cdot~(\m{A}\diag(L))^{-1}\right)\\
&=\round\left((Y-Z_{\mathrm{SIC}}\m{A}L\cdot L^{-1}(L-\diag(L))) ~\cdot~(\m{A}\diag(L))^{-1}\right)\\
&=\round\left((Y-\hat{Y}\cdot (I-L^{-1}\diag(L)))~\cdot~(\m{A}\diag(L))^{-1} \right)\\
&=\round\left((Y-(Y-e_{\mathrm{SIC}})\cdot (I-L^{-1}\diag(L))) ~\cdot~(\m{A}\diag(L))^{-1}\right)\\
&=\round\left((YL^{-1}\diag(L)+e_{\mathrm{SIC}}\cdot (I-L^{-1}\diag(L))) ~\cdot~(\m{A}\diag(L))^{-1}\right)\\
&=\round\left((W\diag(L)+e_{\mathrm{SIC}}\cdot (I-L^{-1}\diag(L)))~\cdot~ (\m{A}\diag(L))^{-1}\right)\\
&=\round\left(W\m{A}^{-1}+e_{\mathrm{SIC}}\cdot (I-L^{-1}\diag(L))~\cdot~ (\m{A}\diag(L))^{-1}\right).
\end{align}
We are left with characterizing the random vector 
\begin{align}
e=e_{\mathrm{SIC}}\cdot \tilde{L}~\cdot~ (\m{A}\diag(L))^{-1}    
\end{align}
where we defined the strictly lower triangular matrix $\tilde{L}=I-L^{-1}\diag{L}$. First, since $\tilde{L}$ is lower triangular, $e_k$ is a deterministic function of $e_{\mathrm{SIC}}(k+1:n)$. Furthermore, by definition of the ZSIC algorithm $e_{\mathrm{SIC}}(k+1:n)$ is a deterministic function of $Y(k+1:n)$, and $Y(k+1:n)$ in turn, is a deterministic function of $W(k+1:n)$ since $L$ is lower triangular. Since $W_k\indep W(k+1:n)$, it therefore follows that $e_k\indep W_k$ as claimed. It  remains to upper bound $|e_k|$. To that end, let $S=e_{\mathrm{SIC}}\cdot  (\m{A}\diag(L))^{-1}$ and note that $S\in\CUBE$ since $e_{\mathrm{SIC}}\in\CUBE\cdot \m{A}\cdot\diag{L}$. We therefore have that
\begin{align}
e=S(\tilde{\m{A}}\cdot\diag{L})\tilde{L} (\tilde{\m{A}}\diag(L))^{-1}.  
\end{align}
is independent of $\alpha$ and has bounded support.
\end{proof}

\subsection{The proof} Let us jointly analyze both the Huffman-GPTQ/GPTQ algorithm and the PlainWaterSIC algorithm. In both cases the spacing matrix will be of the form
\begin{align}
\m{A}=\alpha\tilde{\m{A}},    
\end{align}
where $\tilde{\m{A}}=I$ for GPTQ, whereas for PlainWaterSIC it will be
\begin{align}
\tilde{\m{A}}=\diag\left(\tilde{\alpha}_1,\ldots,\tilde{\alpha}_n \right),~~\text{where}~\tilde{\alpha}_i=\frac{|L|^{1/n}}{|\ell_{ii}|}~i=1,\ldots,n.    
\end{align}
In both cases $|\tilde{\m{A}}|=1$, and the density of the corresponding lattice is controlled only by the parameter $\alpha$. We will show that
\begin{align}
\lim_{\alpha\to 0}\frac{D(\alpha)}{\alpha^2}=\frac{1}{n}\frac{1}{12}\sum_{i=1}^n (\tilde{\alpha}_i \ell_{ii})^2.
\label{eq:Dlimit}   
\end{align}
and that
\begin{align}
\lim_{\alpha\to 0} \left[H(Z_{\mathrm{SIC}},i)-\frac{1}{2}\log(\alpha^2)\right]=\frac{1}{2}\log(2\pi e \sigma_W^2)-\log(\tilde{\alpha}_i),~~i=1,\ldots,n.
\label{eq:Hlimit}
\end{align}
The expected rate $R(\alpha)$ is the result of applying entropy coding to each column $Z_{\mathrm{SIC},:,i}$, $i=1,\ldots,n$ of $Z_{\mathrm{SIC}}$. Since rows of $W$ are independent, so are the $a$ entries in each column $Z_{\mathrm{SIC},:,i}$ of $Z_{\mathrm{SIC}}$. If $a$ is sufficiently large, we can apply universal entropy coding on the $i$th column, $i=1,\ldots,n$, and the expected number of bits will be $a (H(Z_{\mathrm{SIC},i})+o(1))$, where $o(1)$ vanishes with $a$. We igonre this term in the remainder of the analysis since we take limit of $a\to\infty$. From~\eqref{eq:Hlimit} it therefore immediately follows that
\begin{align}
\lim_{\alpha\to 0}[R(\alpha)-\frac{1}{2}\log(\alpha^2)]&=  \frac{1}{n}\sum_{i=1}^n \lim_{\alpha\to 0}\left[H(Z_{\mathrm{SIC}},i)-\frac{1}{2}\log(\alpha^2)\right]\nonumber\\
&=\frac{1}{2}\log(2\pi e\sigma_W^2)-\frac{1}{n}\log|\tilde{\m{A}}|\nonumber\\
&=\frac{1}{2}\log(2\pi e\sigma_W^2),
\label{eq:Rlimit}
\end{align}
where the last equality follows from our assumption that $|\tilde{\m{A}}|=1$.
Combining~\eqref{eq:Dlimit} and~\eqref{eq:Rlimit} we obtain
\begin{align}
\lim_{\alpha\to 0}\left[R(\alpha)+\frac{1}{2}\log\left( D(\alpha)\right) \right]=\frac{1}{2}\log\left(\frac{2\pi  e}{12}\frac{1}{n}\sum_{i=1}^n (\tilde{\alpha}_i \ell_{ii})^2\sigma_W^2 \right).
\end{align}
Subtracting $R_{\wf}(D(\alpha),\Sigma_X)=R_{\HR}(D(\alpha),\Sigma)$ (since $D(\alpha)\to 0$ as $\alpha\to 0$) from both sides, and recalling that $$|\Sigma_X|^{1/n}=|L|^{2/n}=|L\tilde{\m{A}}|^{2/n}=\prod_{i=1}^n(\tilde{\alpha}_i\ell_{ii})^{2/n},$$ since $|\tilde{\m{A}}|=1$, we obtain
\begin{align}
\lim_{\alpha\to 0}[R(\alpha)-R_{\wf}(D(\alpha),\Sigma_X)]=\frac{1}{2}\log \left(\frac{2\pi  e}{12}\frac{\frac{1}{n}\sum_{i=1}^n (\tilde{\alpha}_i \ell_{ii})^2}{\prod_{i=1}^n(\tilde{\alpha}_i\ell_{ii})^{2/n}} \right). 
\end{align}
The claimed result now follows from substituting the corresponding $\tilde{\alpha}_1,\ldots,\tilde{\alpha}_n$ for GPTQ and for WaterSIC. It therefore only remains to prove~\eqref{eq:Dlimit} and~\eqref{eq:Hlimit}.

\textbf{Proof of~\eqref{eq:Dlimit}.}  Consider the lattice $\Lambda=\alpha \ZZ^{1\times n}\tilde{\m{A}}$. Recall that $e_{\mathrm{SIC}}=WL-Z_{\mathrm{SIC}}\alpha\tilde{\m{A}}L$ and that $e_{\mathrm{SIC}}\in \alpha\CUBE\cdot \tilde{\m{A}}\diag(L)$. Therefore,
\begin{align}
e_{\mathrm{SIC}}=e_W L 
\end{align}
where 
\begin{align}
e_W=W-\alpha Z_{\mathrm{SIC}}\tilde{\m{A}},     
\end{align}
and $Z_{\mathrm{SIC}}\in\ZZ^{1\times n}$ is chosen such that $e_W\in \alpha\CUBE\cdot \tilde{\m{A}}\diag(L)L^{-1}\triangleq \m{P}$. We will show that $e_W$ is nearly uniform on $\m{P}$, and it will then follow that so is $e_{\mathrm{SIC}}$.

Let $f_{e_W}(x)$ denote the probability density function (pdf) of the random vector $e_W$. Note that
\begin{align}
f_{e_W}(x)=f_{\sigma_W,\tilde{\Lambda}}(x)\Ind\{x\in \m{P}\},    
\end{align}
where $\tilde{\Lambda}=\alpha \ZZ^{1\times n}\tilde{\m{A}}$. By definition of the flatness factor we have that
\begin{align}
f_{e_W}(x)\in \left[1\pm\epsilon_{\tilde{\Lambda}}(\sigma_W)\right]\frac{1}{\Vol(\m{P})} \Ind\{x\in\m{P}\}  
\end{align}
It therefore follows that the pdf of $e_{\mathrm{SIC}}=e_W L$ satisfies
\begin{align}
f_{e_{\mathrm{SIC}}}(x)\in  \left[1\pm\epsilon_{\tilde{\Lambda}}(\sigma_W)\right]\frac{1}{\Vol(\m{P}L)}\Ind\{x\in\m{P} L\}.   
\end{align}
Consequently, 
\begin{align}
D&=\frac{1}{n}\EE\|e_{\mathrm{SIC}}\|^2=\int_{x\in\m{P}L}\|x\|^2f_{e_{\mathrm{SIC}}}(x)dx\nonumber\\
&\in  \left[1\pm\epsilon_{\tilde{\Lambda}}(\sigma_W)\right]\EE\|U_{\m{P}L}\|^2\\
&\in  \left[1\pm\epsilon_{\tilde{\Lambda}}(\sigma_W)\right]\frac{1}{n}\frac{\alpha^2}{12}\sum_{i=1}^n (\tilde{\alpha}_i \ell_{ii})^2,
\end{align}
where $U_{\m{P}L}\sim\Unif(\m{P}L)$. By~\eqref{eq:flatnessZlimit} we have that $\lim_{\alpha\to 0}\epsilon_{\tilde{\Lambda}}(\sigma_W)=0$ and~\eqref{eq:Dlimit} indeed holds.

\textbf{Proof of~\eqref{eq:Hlimit}.} By Lemma~\ref{lem:qunatizerStats} we have 
\begin{align}
Z_{\mathrm{SIC},i}=\mathrm{round}\left(\frac{1}{\alpha}V_{\alpha}\right),   
\end{align}
where 
\begin{align}
V_{\alpha}=\frac{W_i}{\tilde{\alpha}_k}+\alpha e_i. 
\end{align}
Here $W_i\sim\m{N}(0,\sigma_W^2)$, and $e_i$ is bounded in an interval independent of $\alpha$ with probability $1$, and is statistically independent of $W_i$. For $\Delta>0$ and a random variable $X$ let $X^\Delta=\Delta\cdot\round\left(\frac{X}{\Delta} \right)$. A classic result of R\'{e}nyi~\citep[Theorem 1]{renyi1959dimension},~\citep[eq. 2.21]{PWbook24} shows that if $h(X)$ exists and $H(\mathrm{round}(X))<\infty$ then 
\begin{align}
\lim_{M\to\infty}H(X^{1/M})-\log M=h(X)+E(M,P_X),    
\end{align}
where $E(M,P_X)$ vanishes with $M$. Let $\m{V}_{\alpha_0}$ be the family of distributions on $V_{\alpha}$ for all $\alpha<\alpha_0$. Inspecting the proof of~\citep[Theorem 1]{renyi1959dimension}, we see that for sufficiently small $\alpha_0$ his equations (40-41) can be made to hold simultaneously  with the same $L(\eps)$ for all distributions in $\m{V}_{\alpha_0}$. It therefore follows that $E(M,P_X)$ converges to $0$ with $M\to\infty$ uniformly for all $P_X\in\m{V}_{\alpha_0}$. Since $H(\round(\frac{1}{\alpha}\cdot V_{\alpha})=H\left(V^{\alpha}_{\alpha}\right)$ we have
\begin{align}
\lim_{M\to\infty}\left[H(Z_{\mathrm{SIC},i})-\log(\alpha) \right]\bigg|_{\alpha=1/M}=\lim_{M\to\infty}h(V_{1/M}).  
\end{align}
Finally, using the continuity of entropy, specifically~\cite{linder1994asymptotic} that for $e\indep X$ where $\EE(e^2)<\infty$ and $h(X)$ exists it holds that 
\begin{align}
\lim_{\alpha\to0}[h(X+\alpha e)]=h(X),
\end{align}
we obtain
\begin{align}
 \lim_{\alpha\to 0}\left[H(Z_{\mathrm{SIC},i})-\log(\alpha) \right]=\frac{1}{2}\log(2\pi e \sigma_W^2)-\log(\tilde{\alpha}_i),  
\end{align}
as claimed.

\begin{algorithm*}[t]
\caption{\textsc{WaterSIC}: full algorithm details}
\label{alg:watersic_ex}
\begin{algorithmic}[1]
\Require Weight matrix $W \in \mathbb{R}^{a \times n}$, covariance $\Sigma_X \in \mathbb{R}^{n
\times n}$, scale parameter $c > 0$, damping $\delta \ge 0$.
\Require Covariance $\Sigma_{\hat X}$, $\Sigma_{X, \hat X}$, $\Sigma_{\Delta, \hat X}$ \Comment{If
not available, set first two to $\Sigma_X$ and the last one to 0.}
\Ensure Reconstructed weights $\hat W$, effective rate $R_{\mathrm{eff}}$

\medskip
\Statex \textbf{Phase 1: Setup}
\State $H \gets \Sigma_{\hat X} + \delta \cdot
\operatorname{mean}(\operatorname{diag}(\Sigma_{\hat X})) \cdot
I_n$ \Comment{For stability in early layers}
\State $L \gets \operatorname{chol}(H)$ \Comment{Lower-triangular: $H = LL^\top$}
    \State $Y \gets (W \,\Sigma_{X,\hat X} + \Sigma_{\Delta, \hat X})\, (L^\top)^{-1}$
    \Comment{Use triangular solver for efficiency}
\State $\alpha_k \gets c / L_{k,k}$ for $k = 1,\dots,n$
\medskip
\Statex \textbf{Phase 2: ZSIC with LMMSE correction}
\For{$i = n, n{-}1, \dots, 1$}
    \State $\mathbf{z}_i \gets \operatorname{round}(Y_{:,i} \,/\, c)$
    \State $\gamma_i \gets \mathbf{z}_i^\top Y_{:,i} / (c\|\mathbf{z}_k\|^2)$
    \State $Y \gets Y - \gamma_k \alpha_k \;\mathbf{z}_k \,\mathbf{l}_k^\top$ \Comment{$\mathbf{l}_k^\top$: row $k$ of $L$}
\EndFor

\medskip
\Statex \textbf{Phase 3: Rate computation}
\State $\mathcal{H} \gets -\sum_{v} p_v \log_2 p_v$, \quad $p_v = \lvert\{(i,k) : Z_{ik} = v\}\rvert / (an)$
\State $R_{\mathrm{eff}} \gets \mathcal{H} + 16/a + 16/n$ \Comment{Entropy + side-info overhead}

\medskip
\Statex \textbf{Phase 4: Diagonal rescaler optimization}
\State $\hat W_0 \gets Z \,\operatorname{diag}(\alpha)$
\State $T, \Gamma \gets \textsc{FindOptimalRescalers}(\hat W_0,\; W,\; \Sigma_X,\; \Sigma_{\hat
X},\; \Sigma_{X,\hat X}, \; \Sigma_{\Delta,\hat X};\; \gamma_{\mathrm{init}} = \gamma)$
\State $\hat W \gets T \, Z \, \Gamma \, \operatorname{diag}(\alpha)$ \Comment{$T = \operatorname{diag}(t),\; \Gamma = \operatorname{diag}(\tilde\gamma)$}

\State \Return $R_{\mathrm{eff}},\; \hat W$
\end{algorithmic}
\end{algorithm*}

\begin{algorithm*}[t]
\caption{\textsc{FindOptimalRescalers}: Alternating optimization of diagonal row and column rescalers}
\label{alg:for}
\begin{algorithmic}[1]
\Require $\hat W_0 \in \mathbb{R}^{a \times n}$ (pre-rescaler reconstruction), $W \in \mathbb{R}^{a \times n}$ (original weights),
         $\Sigma_X, \Sigma_{\hat X}, \Sigma_{X,\hat X}, \Sigma_{\Delta, \hat X} \in \mathbb{R}^{n \times n}$,
         initial $\gamma^{(0)} \in \mathbb{R}^n$, tolerance $\varepsilon$, ridge $\lambda \ge 0$
\Ensure Diagonal matrices $T = \operatorname{diag}(t)$, $\Gamma = \operatorname{diag}(\gamma)$

\medskip
\Statex \textbf{// Objective:} $\;\mathcal{J}(T,\Gamma) =
\frac{1}{an}\operatorname{tr}\!\bigl(W\Sigma_X W^\top - 2\,(W\Sigma_{X, \hat X} + \Sigma_{\Delta,
\hat X})(T\hat W_0\Gamma)^\top + T\hat W_0\Gamma\,\Sigma_{\hat X}\,\Gamma\hat W_0^\top T\bigr)$

\medskip
\State $t \gets \mathbf{1}_a$;\quad $\gamma \gets \gamma^{(0)}$
\State $s \gets \|t\|_1 / a$;\quad $t \gets t/s$;\quad $\gamma \gets s\,\gamma$ \Comment{Normalize: $\|t\|_1 = a$}
\State $\mathcal{L}_{\mathrm{prev}} \gets \mathcal{J}(\operatorname{diag}(t),\,\operatorname{diag}(\gamma))$

\medskip
\For{$\mathrm{iter} = 1, 2, \dots$}

    \Statex \quad\textbf{$\Gamma$-step: }
    \State $F \gets \hat W_0^\top \operatorname{diag}(t^2) \hat W_0$ \Comment{$n \times n$}
    \State $G \gets \Sigma_{\hat X} \odot F$ \Comment{Hadamard product, $n \times n$}
    \State $\mathbf{d} \gets \operatorname{diag}\!\bigl(\hat W_0^\top \operatorname{diag}(t)\,
    (W\, \Sigma_{X,\hat X} + \Sigma_{\Delta, \hat X})\bigr)$ \Comment{$n$-vector}
    \State $\gamma \gets (G + \lambda I_n)^{-1} \mathbf{d}$

    \medskip
    \Statex \quad\textbf{$T$-step:}
    \State $\mathbf{p} \gets \operatorname{diag}\!\bigl((W\, \Sigma_{X,\hat X} + \Sigma_{\Delta,
    \hat X})\, \operatorname{diag}(\gamma)\, \hat W_0^\top\bigr)$ \Comment{$a$-vector}
    \State $\mathbf{q} \gets \operatorname{diag}\!\bigl(\hat W_0\, \operatorname{diag}(\gamma)\, \Sigma_{\hat X}\, \operatorname{diag}(\gamma)\, \hat W_0^\top\bigr)$ \Comment{$a$-vector}
    \State $t_i \gets p_i \,/\, (q_i + \lambda)$ for $i = 1,\dots,a$

    \medskip
    \Statex \quad\textbf{Re-normalize and check convergence:}
    \State $s \gets \|t\|_1 / a$;\quad $t \gets t/s$;\quad $\gamma \gets s\,\gamma$

    \State $\mathcal{L}_{\mathrm{curr}} \gets \mathcal{J}(\operatorname{diag}(t),\,\operatorname{diag}(\gamma))$
    \If{$|\mathcal{L}_{\mathrm{curr}} - \mathcal{L}_{\mathrm{prev}}| \,/\, (|\mathcal{L}_{\mathrm{prev}}| + 10^{-12}) < \varepsilon$}
        \State \textbf{break}
    \EndIf
    \State $\mathcal{L}_{\mathrm{prev}} \gets \mathcal{L}_{\mathrm{curr}}$
\EndFor

\State \Return $T = \operatorname{diag}(t),\;\; \Gamma = \operatorname{diag}(\gamma)$
\end{algorithmic}
\end{algorithm*}

\section{Calibration statistics and adaptive mixing}\label{apx:mixing}

\textbf{Collecting raw calibration data.}
For calibration and tuning, we use the full WikiText-2 training split. We concatenate the text into a single
token stream (with a single BOS token prepended) and partition it into non-overlapping sequences
of length $2048$, discarding any remainder. The exact number of sequences depends on the tokenizer
(e.g., ${\approx}\,1189$ for Llama-3.2-1B). Only the first sequence begins with a BOS token;
subsequent sequences start at arbitrary token boundaries. We verified on Llama-3.2-1B that
prepending a BOS token to \emph{every} sequence affects neither perplexity nor the resulting
calibration statistics.

Given these sequences, and assuming that preceding layers are already quantized, we run both the
unquantized and (partially) quantized models to collect the layer inputs $X$ and $\hat X$, as well
as the residual stream states $R$ and $\hat R$. We then form the required (cross-)covariance
estimates by averaging over all token positions (and their attention-weighted counterparts, where
applicable).

\smallskip

\textbf{Selecting adaptive mixing coefficients.} As described in~\eqref{eq:qronos_mix} for
some layers we undertake additional optimization in the form of mixing unquanized statistics into
quantized one.

Specifically, first we introduce a mixing parameter $\epsilon_{\mathrm{qr}} \in [0,1]$ that interpolates between the drift-corrected and original statistics:
\begin{align}\label{eq:qronos_mix}
  \Sigma_{\hat X}^{(\mathrm{mix})} &= (1-\epsilon_{\mathrm{qr}})\,\Sigma_{\hat X} + \epsilon_{\mathrm{qr}}\,\Sigma_X, \nonumber\\
  \Sigma_{X,\hat X}^{(\mathrm{mix})} &= (1-\epsilon_{\mathrm{qr}})\,\Sigma_{X,\hat X} + \epsilon_{\mathrm{qr}}\,\Sigma_X.
\end{align}
Setting $\epsilon_{\mathrm{qr}}=0$ recovers full drift correction~\eqref{eq:qronos}, while $\epsilon_{\mathrm{qr}}=1$ falls back to the original (unquantized) Hessian.

Once the drift mixing is fixed, we apply attention weighting on top of the resulting covariances. Let $\Sigma_\bullet^{(w)}$ denote the attention-weighted estimate
from~\eqref{eq:attn_importance} computed from the mixed statistics $\Sigma_\bullet^{(\mathrm{mix})}$. A second parameter $\epsilon_{\mathrm{aw}} \in [0,1]$ interpolates between the
weighted and uniform versions:
\begin{align}\label{eq:attn_mix_adaptive}
  \Sigma_\bullet^{(\mathrm{final})} := (1-\epsilon_{\mathrm{aw}})\,\Sigma_\bullet^{(w)} + \epsilon_{\mathrm{aw}}\,\Sigma_\bullet^{(\mathrm{mix})},
  \nonumber\\
  \Sigma_\bullet \in \{\Sigma_X,\,\Sigma_{\hat X},\,\Sigma_{X,\hat X}\}.
\end{align}
Thus, the attention importance scores weigh the already drift-mixed covariances, and $\epsilon_{\mathrm{aw}}$ controls how strongly this reweighting is applied.

We optimize $(\epsilon_{\mathrm{qr}},\epsilon_{\mathrm{aw}})$ per layer via a lightweight coordinate search. For a given layer, let $\hat w_q,\hat w_k,\hat w_v$ denote the quantized
projections obtained using the blended statistics~\eqref{eq:qronos_mix}-\eqref{eq:attn_mix_adaptive}. We minimize the relative MSE of the input to $w_o$, i.e., the output of the
multi-head attention block before the output projection:
\begin{equation}\label{eq:adaptive_obj}
\min_{\epsilon_{\mathrm{qr}},\,\epsilon_{\mathrm{aw}} \in [0,1]}
\frac{\mathbb{E}\!\left[\left\lVert
  \mathrm{Attn}(X;\, w_q,w_k,w_v)
  - \mathrm{Attn}(\hat X;\, \hat w_q,\hat w_k,\hat w_v)
\right\rVert_F^2\right]}
{\mathbb{E}\!\left[\left\lVert
  \mathrm{Attn}(X;\, w_q,w_k,w_v)
\right\rVert_F^2\right]}\,,
\end{equation}
where $\mathrm{Attn}(X;\,w_q,w_k,w_v)$ denotes the multi-head self-attention output (including the softmax) given input activations $X$ and projection weights, and $\hat X$ are the
(already quantized) activations produced by the preceding layers. Measuring distortion at the $w_o$ input rather than at individual projection outputs captures the error amplification
through the softmax nonlinearity, which empirically dominates the quantization loss in attention layers.

The objective in~\eqref{eq:adaptive_obj} is empirically unimodal in each coordinate when the other is held fixed. We therefore first optimize $\epsilon_{\mathrm{qr}}$ with
$\epsilon_{\mathrm{aw}}$ at a default value using golden-section search over $[0,1]$, then optimize $\epsilon_{\mathrm{aw}}$ with $\epsilon_{\mathrm{qr}}$ fixed at its optimum. Each
evaluation re-quantizes $(w_q,w_k,w_v)$ jointly and performs a forward pass through the attention block on the calibration set. The resulting parameters
$(\epsilon_{\mathrm{qr}}^\star,\epsilon_{\mathrm{aw}}^\star)$ are stored per layer and used during the final quantization pass.

\smallskip

\textbf{Hessian damping.} It is a common practice starting from GPTQ codebase to replace 
$$ \Sigma_X \leftarrow \Sigma_X + \delta I_n \,,$$
where $\delta$ is a hyperparameter determining regularization strength of the collected covariance
matrix (by default, $\delta = {0.1 \over n} \tr \Sigma_X$). Note that adding this term is
equivalent to replacing loss objective by 
$$ \min_{\hat W} \tr  (W-\hat W) \Sigma_X (W-\hat W)^\top + \delta \|W-\hat W\|_F^2\,,$$
which evidently safeguards $\hat W$ from deviating from $W$ too much.

Thus, in the presence of drift correction and
residual compensation we are to minimize objective:
$$ \min_{\hat W} \EE[ \|WX+R - (\hat W \hat X + \hat R)\|_2^2 ] +  \delta \|W-\hat W\|_F^2\,,$$
which is equivalent to modifying
\begin{align*} \Sigma_X &\leftarrow \Sigma_X + \delta I_n\\
   \Sigma_{\hat X} &\leftarrow \Sigma_{\hat X} + \delta I_n\\
   \Sigma_{X,\hat X} &\leftarrow \Sigma_{X, \hat X} + \delta I_n \qquad \text{(note!)}\\
   \Sigma_{\Delta,\hat X} &\leftarrow \Sigma_{\Delta, \hat X} \qquad \text{(not a typo!)}
\end{align*}

Damping $\delta I$ is applied after the adaptive mixing, on the resulting blended matrices.

\section{Hyperparameters}\label{sec:hyperparams}

Here we summarize the hyperparameters used in our experiments and describe how each plot was produced.

\textbf{Common settings.} We quantized layers sequentially, collecting statistics of $(X,\hat X,R,\hat R)$ at the input to each subsequent layer. The reported rate is the parameter-count-weighted average of the per-layer rates.

For evaluation, we used the test split of WikiText-2 and computed perplexity (PPL) with a context window of $2048$. In addition to PPL, we computed KL divergence and several reasoning benchmarks, including MMLU and HellaSwag.

We evaluated WaterSIC and GPTQ on Llama-3.2-1B, Qwen3-8B, Llama-3-8B, Llama-2-7B and Llama-3-70B.

\textbf{GPTQ.}
We used $\delta = 0.1$ (the default) for all models except Llama-3-8B, where we found that $\delta = 0.01$ performed significantly better at some rates. We evaluated with $\texttt{groupsize}=-1$, $\texttt{blocksize}=128$, and $\texttt{actorder}=\texttt{False}$. To obtain different target rates, we varied the \texttt{maxq} parameter and computed the entropy of the resulting integer matrices.

As input to the algorithm, we used quantized activation statistics $\hat X$, as they consistently produced better results. In the main plots, this variant is labeled as Huffman-GPTQ (HPTQ), following~\cite{chen2025geometry}. Entries labeled \textsc{GPTQ}, on the other hand, correspond to the same algorithm with the rate set to $\log_2(\texttt{maxq})$ (i.e., without entropy coding). For Qwen3-8B, we did not run Huffman-GPTQ ourselves; instead, we report the results from~\cite{chen2025geometry}.

\textbf{WaterSIC.}
With dead-feature erasure enabled, we found that very small damping ($\delta = 10^{-4}$) is sufficient, in contrast to the much larger damping (e.g., $\approx 10\%$) commonly used in
standard GPTQ.

For each layer, we use a secant method to find the value of $c$ in Algorithm~\ref{alg:watersic_ex} that yields the target rate. We exploit the approximately linear relationship between $c$ and the resulting entropy: an initial evaluation at the target rate reveals an offset, and a secant correction refines the estimate in 2--3 iterations to within ${<}\,0.005$ bits of the target. To reduce computation, this search compresses only a randomly sampled $10\%$ of the rows of $W$. After convergence, we rerun the algorithm with the selected value of $c$ on the full matrix to produce the final quantized weights.

As we quantize the model sequentially, we maintain a running rate budget initialized to the global target. At each step, we allocate this remaining budget evenly across the unquantized matrices and calibrate the current layer to match its assigned share. This procedure keeps the final average rate extremely close to the requested target. In addition, because dead-feature erasure reduces the effective dimensionality (and thus the rate) of some early layers, the leftover budget is redistributed to later layers, resulting in a mild increase in per-layer rates toward the end of the network.

Our best results across models used activation drift correction~\eqref{eq:qronos} and residual stream compensation~\eqref{eq:rescomp_y}, together with weighted calibration~\eqref{eq:attn_importance} and adaptive mixing~\eqref{eq:qronos_mix} for joint quantization of the $(w_q,w_k,w_v)$ projections. We found that $10$ iterations of golden-section search provide sufficient precision for each mixing parameter. The per-layer procedure
is:
\begin{enumerate}
\item \textbf{Drift-mixing optimization.} Holding $\epsilon_{\mathrm{aw}}=0$ fixed, run golden-section search over $\epsilon_{\mathrm{qr}} \in [0,1]$. At each candidate value,
re-quantize $(w_q,w_k,w_v)$ jointly (each time finding the appropriate $c$ via the secant method to match the target rate) and evaluate the relative MSE at the $w_o$ input over the calibration set. Select
$\epsilon_{\mathrm{qr}}^\star$ minimizing~\eqref{eq:adaptive_obj}.
\item \textbf{Attention-weighting optimization.} Fixing $\epsilon_{\mathrm{qr}}=\epsilon_{\mathrm{qr}}^\star$, run golden-section search over $\epsilon_{\mathrm{aw}} \in [0,1]$
using the same re-quantize-and-evaluate procedure, yielding $\epsilon_{\mathrm{aw}}^\star$.
\item \textbf{Final quantization.} With $(\epsilon_{\mathrm{qr}}^\star,\epsilon_{\mathrm{aw}}^\star)$ fixed, quantize $(w_q,w_k,w_v)$ at the target rate assigned by the global rate controller.
\end{enumerate}
Each golden-section iteration requantizes three matrices for a single layer (with the secant method determining the scale $c$ at each evaluation point) and runs a forward pass through the attention block on the calibration
set. Because the secant method converges in 2--3 iterations, rate-matching each evaluation point adds minimal overhead compared to the quantization itself.

\textbf{Qwen3-8B.}
For this model, we identified a small number of outlier rows in the FFN gate and up-projection matrices that negatively impact quantization stability. Specifically, rows 5723 and 8518 in
layer~6 $(w_1,w_3)$ and rows 2271 and 1875 in layer~16 $(w_1,w_3)$ have anomalously large norms. These outliers inflate the effective scale of the corresponding down-projection $w_2$
(which must compensate for their magnitude), leading to degraded quantization fidelity. We verified that zeroing these rows in the \emph{unquantized} model increases WikiText-2 test
perplexity by less than $0.01$, indicating that they encode near-degenerate features rather than useful signal. We therefore zero these rows in both the unquantized and quantized models
prior to quantization and, correspondingly, zero the associated columns in $w_2$. This removes a source of numerical instability without a meaningful loss in model quality, and substantially reduces quantization error in the affected layers.

Additionally, on Qwen3-8B we observed a sharp transition around layer~16: the relative MSE at the input to the layer-$16$ $w_2$ matrix is substantially higher than in earlier layers.
After this point, the coordinate search increasingly prefers the unquantized statistics over the quantized-model ones: as shown in Table~\ref{tab:q_eps}, the optimal drift-mixing
coefficients $\epsilon_{\mathrm{qr}}^\star$ are close to $1$ for most subsequent layers (across all three rates). In contrast, the attention-weight mixing parameter
$\epsilon_{\mathrm{aw}}^\star$ remains at $0$ (full attention weighting) for the majority of layers and becomes nonzero only in a subset of deeper layers
(Table~\ref{tab:a_eps}). We hypothesize that beyond this depth, quantization noise is strongly amplified, which degrades the quality of the estimated quantized-model covariances and makes
activation drift compensation less effective in practice.

\begin{table}[!ht]
\centering
\footnotesize
\renewcommand{\arraystretch}{0.95}
\setlength{\tabcolsep}{3pt}
\begin{minipage}[t]{0.48\textwidth}
\centering
\caption{Optimal drift-mixing coefficients $\epsilon_{\mathrm{qr}}^\star$ selected by adaptive
mixing for Qwen3-8B. Values are reported per layer for three target rates (bits per parameter).
Larger $\epsilon_{\mathrm{qr}}^\star$ indicates a stronger preference for the unquantized
statistics $\Sigma_X$ over the drift-corrected statistics $(\Sigma_{\hat X},\Sigma_{X,\hat X})$.
Note a ``phase change'' at layer 15. Calibration at CTX=2048.}
\label{tab:q_eps}
\begin{tabular}{rccc@{\hskip 6pt}rccc}
\toprule
Layer & 2.125 & 3.125 & 4.125 & Layer & 2.125 & 3.125 & 4.125 \\
\midrule
 0 & 0.7641 & 0.5064 & 1.0000 & 18 & 0.9784 & 1.0000 & 0.9703 \\
 1 & 0.0132 & 0      & 0      & 19 & 1.0000 & 0.9786 & 0.9913 \\
 2 & 0.0132 & 0.0016 & 0      & 20 & 1.0000 & 1.0000 & 1.0000 \\
 3 & 0      & 0.0201 & 0.0098 & 21 & 0.9657 & 0.9874 & 1.0000 \\
 4 & 0.1459 & 0      & 0.0081 & 22 & 0.9956 & 1.0000 & 1.0000 \\
 5 & 0.0545 & 0      & 0.0047 & 23 & 1.0000 & 0.9987 & 1.0000 \\
 6 & 0      & 0.0473 & 0.0900 & 24 & 0.9951 & 0.9917 & 1.0000 \\
 7 & 0.2353 & 0.2905 & 0.1459 & 25 & 0.9894 & 0.9951 & 0.9951 \\
 8 & 0.1462 & 0.3824 & 0      & 26 & 0.9820 & 0.9874 & 0.9542 \\
 9 & 0.2844 & 0.0803 & 0.2937 & 27 & 0.9874 & 0.9820 & 0.9870 \\
10 & 0.3262 & 0.0701 & 0.1653 & 28 & 0.9561 & 1.0000 & 0.9755 \\
11 & 0      & 0      & 0      & 29 & 0.9166 & 0.9820 & 0.9738 \\
12 & 0.3820 & 0.1834 & 0.2918 & 30 & 0.7639 & 1.0000 & 0.9868 \\
13 & 0.8204 & 0.3851 & 0.4719 & 31 & 1.0000 & 1.0000 & 0.9855 \\
14 & 0.7685 & 0.2535 & 0.4427 & 32 & 1.0000 & 0.9525 & 0.9836 \\
15 & 0.9849 & 0.9392 & 0.9230 & 33 & 1.0000 & 0.9917 & 1.0000 \\
16 & 0.9950 & 1.0000 & 0.9098 & 34 & 1.0000 & 1.0000 & 1.0000 \\
17 & 0.9675 & 1.0000 & 0.9769 & 35 & 0.8198 & 0.8404 & 0.8968 \\
\bottomrule
\end{tabular}
\end{minipage}\hfill
\begin{minipage}[t]{0.48\textwidth}
\centering
\caption{Optimal attention-weight mixing coefficients $\epsilon_{\mathrm{aw}}^\star$ selected by adaptive mixing for Qwen3-8B. Values are reported per layer for three target rates (bits per parameter). Here $\epsilon_{\mathrm{aw}}^\star=0$ corresponds to full attention-weighted calibration, while larger values interpolate toward the uniformly weighted covariances. Calibration at CTX=2048.}
\label{tab:a_eps}
\begin{tabular}{rccc@{\hskip 6pt}rccc}
\toprule
Layer & 2.125 & 3.125 & 4.125 & Layer & 2.125 & 3.125 & 4.125 \\
\midrule
 0 & 0.5197 & 0.4721 & 0.5492 & 18 & 0      & 0      & 0      \\
 1 & 0      & 0      & 0      & 19 & 0.3839 & 0.6180 & 0.4114 \\
 2 & 0      & 0      & 0      & 20 & 0      & 0      & 0      \\
 3 & 0      & 0      & 0      & 21 & 1.0000 & 0.4752 & 0.2012 \\
 4 & 0      & 0      & 0      & 22 & 0.2310 & 0.4191 & 0.4047 \\
 5 & 0      & 0      & 0      & 23 & 0.2710 & 1.0000 & 0.3731 \\
 6 & 0      & 0      & 0      & 24 & 0.4346 & 0.2341 & 0.4741 \\
 7 & 0      & 0      & 0      & 25 & 0.3951 & 0.3820 & 0.7082 \\
 8 & 0      & 0      & 0      & 26 & 0.4769 & 0.3212 & 0.4702 \\
 9 & 0      & 0      & 0      & 27 & 0.3293 & 0.3820 & 0.7639 \\
10 & 0      & 0      & 0      & 28 & 0.3607 & 0.7082 & 1.0000 \\
11 & 0      & 0      & 0      & 29 & 0.5279 & 0.3731 & 0.6130 \\
12 & 0      & 0      & 0      & 30 & 0      & 0      & 0      \\
13 & 0      & 0      & 0      & 31 & 1.0000 & 0.3839 & 0.6180 \\
14 & 0      & 0      & 0      & 32 & 0.3769 & 0.6161 & 0.0914 \\
15 & 0      & 0      & 0      & 33 & 0.8448 & 0.6262 & 0.5798 \\
16 & 0      & 0      & 0      & 34 & 0      & 0.2361 & 0.6130 \\
17 & 0      & 0      & 0      & 35 & 0.0007 & 0.0027 & 0.0024 \\
\bottomrule
\end{tabular}
\end{minipage}
\end{table}

\FloatBarrier

\textbf{WaterSIC-FT.} For post-quantization finetuning, we use AdamW without weight decay, with cosine annealing from a peak learning rate of $5 \times 10^{-4}$ to $5 \times 10^{-6}$ over 4 epochs on the full WikiText-2 training split (1189 sequences of length 2048). Gradient checkpointing is applied at every transformer block to keep activation memory at $O(1)$ blocks. The teacher's final hidden states (before the output head) are
precomputed once and cached on CPU, eliminating the need to run the teacher forward pass at each training step and roughly halving per-step compute. During the forward pass, integer codes $Z$ are loaded from CPU to GPU
per-layer on demand, so that peak GPU memory is dominated by a single block's activations rather than the full quantized model. The rescaler parameters $t$ and $\gamma$ are rounded to half-precision via a straight-through
estimator during training, so the optimizer adapts to the deployed (BF16-stored) parameter precision.

\section{Diagnostic plots and ablations}

\label{sec::ablations}

\textbf{Row-column rescalers.}
On Fig.~\ref{fig:tgamma} we demonstrate how values of $T$ (row rescalers) and $\Gamma$ (column rescalers) change with rate. As
expected theoretically, the LMMSE correction is indispensable for low-rate quantization and is not
necessary for rates above $R\ge 4$ bit. 
\begin{figure}[H]
  \centering    \includegraphics[width=0.8\textwidth]{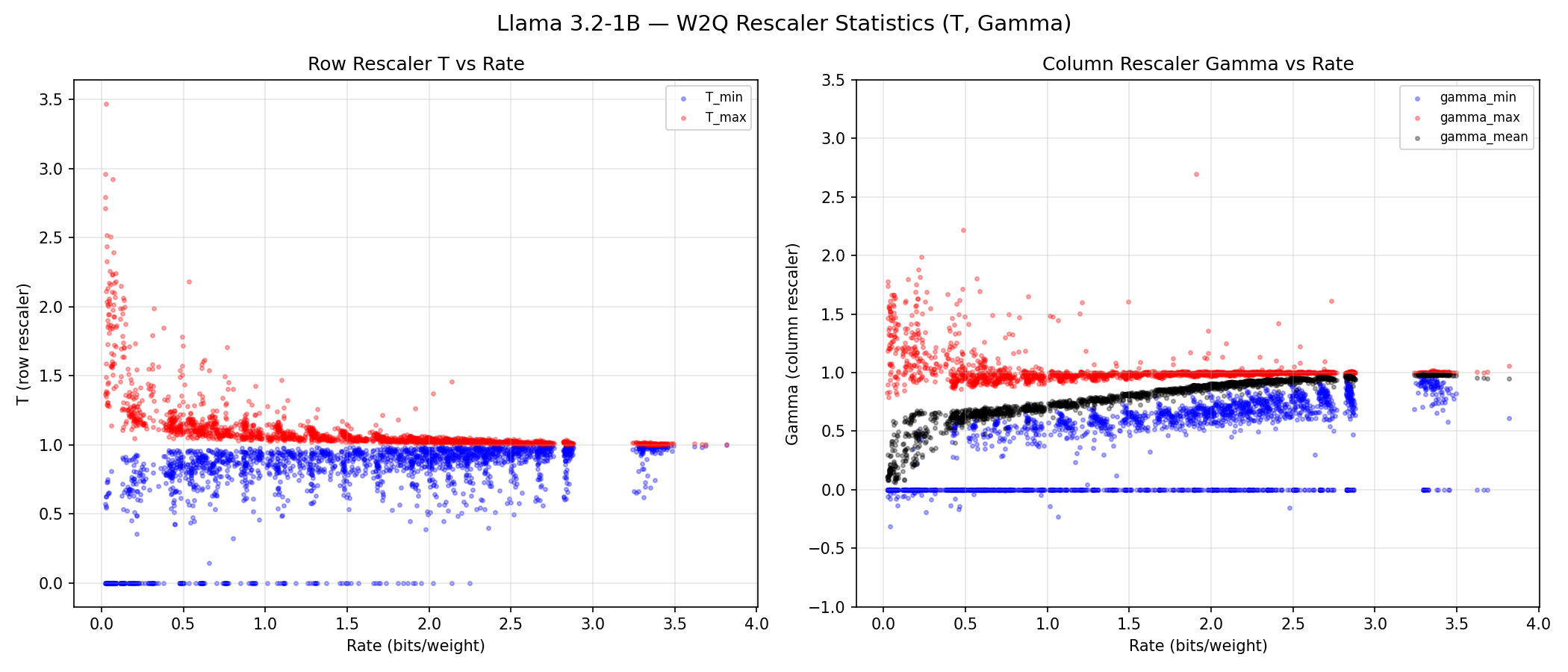}
  \caption{Diagonal rescalers statistics. Llama-3.2-1B, various rates. Row is the reduction
  (in-channel) dimension, so that row rescaler affects one out-channel, and column rescaler
  affects one in-channel. Calibration at CTX=2048.}
  \label{fig:tgamma}
\end{figure}

\smallskip
\textbf{Zero coordinates and dead features.} One issue we encountered is that LayerNorm in Llama-3.2-1B
effectively zeros out certain coordinates at layer inputs, which can make the empirical covariance
matrices nearly singular. See Table~\ref{tab:low_variance_inputs} for the list of offending coordinates,
defined as those with variance $<10^{-4}$ times the mean variance across all $2048$ coordinates.
This observation motivated the \emph{dead feature erasure} heuristic described in
Section~\ref{sec:practice}: we remove near-zero-variance input dimensions before computing the
Cholesky factor and optimizing rescalers, solve the reduced system, and then expand the quantized
weight back to the original size by inserting zeros at the erased positions. In practice, this
greatly improves numerical stability (both for the Cholesky decomposition and for rescaler
optimization) while having negligible effect on reconstruction quality, since the erased coordinates
carry little signal.

   \begin{table}[H]
	\centering
	\begin{tabular}{l|l}
	\hline                                                                                                                                                     
   \textbf{Layer} & \textbf{Low-variance input indices} \\                                                                                                    
   \hline                                                                                                                                                     
   Layer 0, ATTN input & 278* \\                                                                                                                              
   Layer 0, MLP input & 64*, 146*, 509, 1280*, 1618*, 1938*, 2023* \\                                                                                         
   Layer 1, ATTN input & 146, 735, 762, 841, 894, 1002, 1314, 1334, 1476, 1503, 1619, 2037 \\                                                                 
   Layer 1, MLP input & 64*, 146*, 509, 1228, 2023 \\                                                                                                         
   Layer 2, ATTN input & 1159, 1314, 2023* \\                                                                                                                 
   Layer 2, MLP input & 146*, 2023* \\                                                                                                                        
   Layer 3, MLP input & 146, 2023 \\                                                                                                                          
   Layer 4, MLP input & 146 \\                                                                                                                                
   Layer 5, MLP input & 146 \\                                                                                                                                
   \hline                                                                                                                                                     
   \end{tabular}                                                                                                                                              
   \caption{Llama-3.2-1B: input features with standard deviation below 1\% (asterisk) and 0.1\%
   (no asterisk) of the mean standard deviation among all coordinates. These near-zero coordinates are produced by a diagonal multiplier inside the RMSnorm at the respective layer's input. Calibration at CTX=2048.}
   \label{tab:low_variance_inputs}
   \end{table}

\smallskip

\textbf{Unequal rate.} Fig.~\ref{fig:rate_alloc} shows an example of the rate distribution among the
different columns (in-channels) of matrices in one particular layer, as well the rate distribution among all the columns (of 
all layers) of Qwen3-8B, when
compressing to target rate of 2.125 bits.

\begin{figure}[H]r
  \centering    \includegraphics[width=0.8\textwidth]{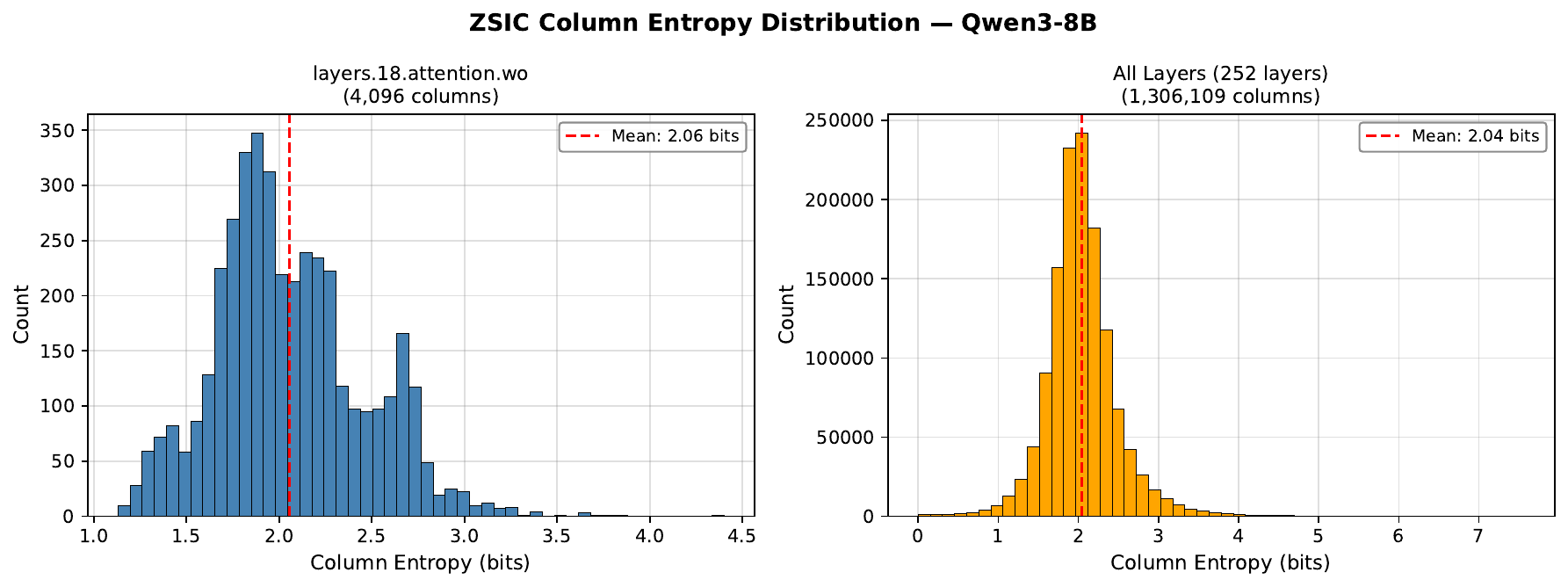}
  \caption{Distribution of actual compression rates per in-channel inside a single layer (left) and over all
  layers (right) for Qwen3-8B model. Calibration at CTX=2048.}
  \label{fig:rate_alloc}
\end{figure}

Additionally, we study how well the entropy translates into algorithmically achievable compression. For each quantized weight matrix, we serialize the integer codes column-by-column (i.e., all entries sharing the same input feature are contiguous in the byte stream) and pack them into the smallest sufficient integer type (\texttt{int8} or \texttt{int16}). We then compress this byte stream with standard lossless codecs, including Zstandard (level~22) and LZMA (preset~9).

To this end, we evaluate several quantized layers of Llama-3.2-1B produced by our algorithm at a target rate of $2$~bits per parameter. We report the entropy of all matrix entries, the maximum and average per-column entropies, along with the compression rates achieved by Zstandard and LZMA, in Table~\ref{tab:layer6_7_maxcolentropy_compression}.

\begin{table}[H]
\centering
\small
\begin{tabular}{llccccc}
\toprule
Layer & Matrix & entropy (all matrix entries) & max(col-entropy) & avg(col-entropy) & zstd (bpp) & lzma (bpp) \\
\midrule
6 & attention.wk & 1.963 & 4.100 & 1.866 & 2.037 & 2.126 \\
& attention.wo & 1.975 & 4.397 & 1.853 & 2.016 & 2.107 \\
& attention.wq & 1.994 & 4.330 & 1.923 & 2.061 & 2.147 \\
& attention.wv & 1.979 & 3.355 & 1.922 & 2.049 & 2.194 \\
& feed\_forward.w1 & 1.993 & 3.869 & 1.938 & 2.045 & 2.163 \\
& feed\_forward.w2 & 1.986 & 4.162 & 1.892 & 2.094 & 2.142 \\
& feed\_forward.w3 & 1.994 & 3.451 & 1.958 & 2.066 & 2.162 \\
\midrule
7 & attention.wk & 1.968 & 4.518 & 1.878 & 2.037 & 2.152 \\
& attention.wo & 1.991 & 3.691 & 1.859 & 2.005 & 2.101 \\
& attention.wq & 1.984 & 4.400 & 1.924 & 2.035 & 2.136 \\
& attention.wv & 1.978 & 3.362 & 1.920 & 2.044 & 2.188 \\
& feed\_forward.w1 & 1.987 & 3.779 & 1.931 & 2.038 & 2.160 \\
& feed\_forward.w2 & 1.992 & 3.977 & 1.886 & 2.138 & 2.140 \\
& feed\_forward.w3 & 1.992 & 3.383 & 1.954 & 2.064 & 2.159 \\
\bottomrule
\end{tabular}
\caption{Per-matrix entropy statistics and compression rates (bits/parameter) for Zstandard and LZMA on \texttt{int8}-packed weights. Calibration at CTX=2048.}
\label{tab:layer6_7_maxcolentropy_compression}
\end{table}

\smallskip
\textbf{Ablation of individual components.} We now present an ablation study illustrating the contribution of each technique described in Section~\ref{sec:practice}. All plots report the
relative MSE at the \emph{input} to each layer's weight matrix, comparing two quantized models against a shared unquantized reference. Filled markers denote the first configuration
(run~A), and hollow markers denote the second (run~B).

\medskip
\noindent\emph{Residual stream correction.}
Figure~\ref{fig:rescomp_effect} compares WaterSIC with and without residual stream compensation~\eqref{eq:rescomp_y} on Llama-3.2-1B at $4\,\text{bit}$. The improvement is
concentrated at the down-projection layers ($w_o$ and $w_2$), as expected: these are the only layers whose objective changes under residual compensation, since they are the only ones
that contribute directly to the residual stream.
\begin{figure}[H]
  \centering
  \includegraphics[width=0.8\textwidth]{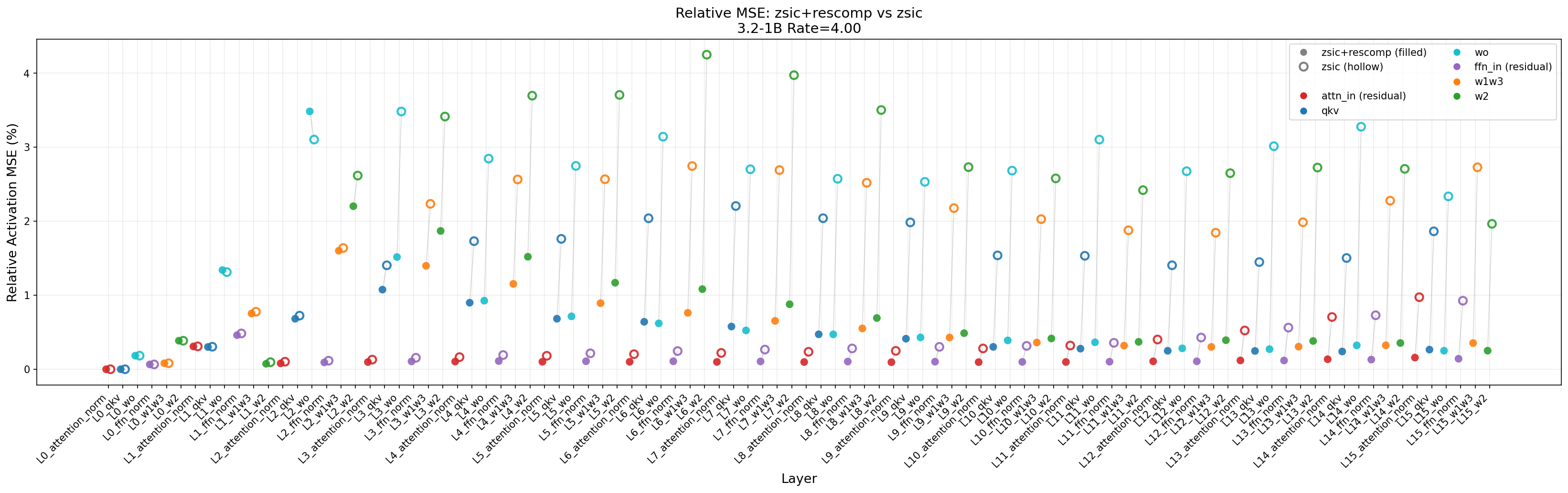}
  \caption{Effect of residual stream compensation on Llama-3.2-1B ($4\,\text{bit}$). Residual compensation reduces activation MSE primarily at $w_o$ and $w_2$ inputs, with improvements
  propagating to subsequent layers through the residual stream. Computed at CTX=2048.}
  \label{fig:rescomp_effect}
\end{figure}

\noindent\emph{Activation drift correction.}
Figure~\ref{fig:qronos_effect} adds activation drift correction (Qronos)~\eqref{eq:qronos_y} on top of residual compensation. The drift-corrected statistics improve most layers, but
notably \emph{increase} the relative MSE at $w_o$ inputs in several layers. This occurs because the softmax nonlinearity amplifies small errors in the QKV projections: when the
drift-corrected Hessian is slightly misspecified, the resulting quantization errors in $w_q,w_k,w_v$ are magnified through attention, producing worse $w_o$ input distortion than using
the standard Hessian.
\begin{figure}[H]
  \centering
  \includegraphics[width=0.8\textwidth]{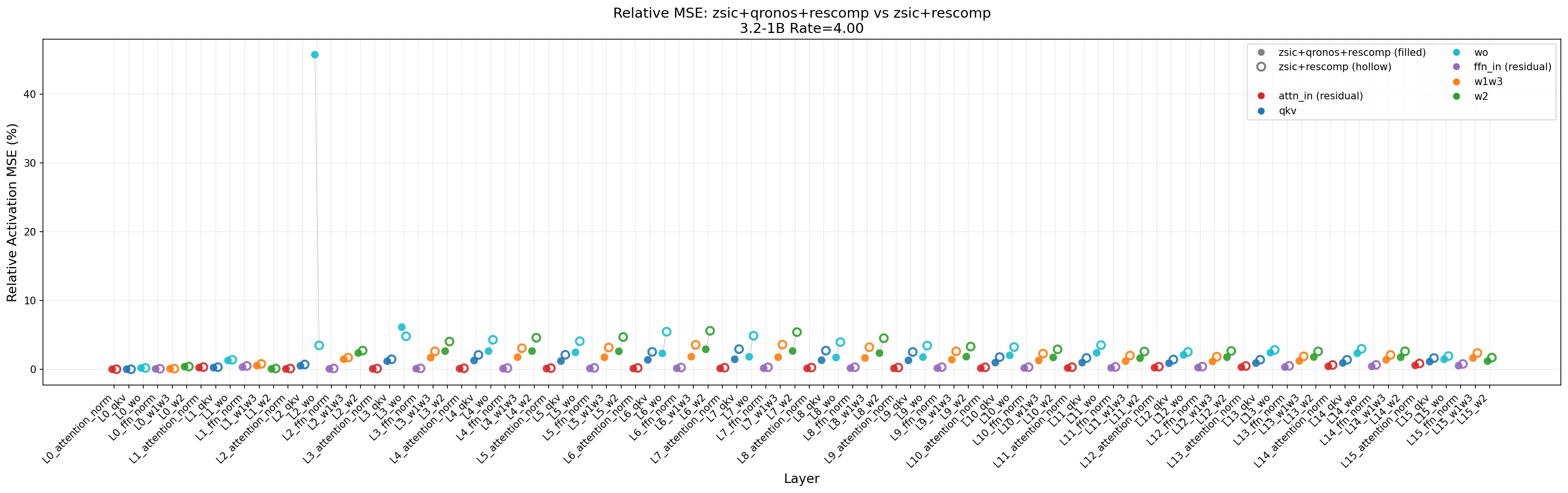}
  \caption{Effect of activation drift correction on Llama-3.2-1B ($4\,\text{bit}$). Adding Qronos improves most layers but degrades $w_o$ inputs in some layers, where softmax amplifies
  QKV quantization errors. Computed at CTX=2048.}
  \label{fig:qronos_effect}
\end{figure}

\noindent\emph{Attention-weighted calibration.}
Figure~\ref{fig:attnw_effect} shows that adding attention-weighted
calibration~\eqref{eq:attn_importance} with joint adaptive mixing~\eqref{eq:qronos_mix} resolves the $w_o$
degradation introduced by Qronos. By reweighting the QKV covariance estimates to account for the attention sink at position~0 and jointly optimizing the mixing parameters to minimize
$w_o$ input MSE~\eqref{eq:adaptive_obj}, the quantizer produces QKV weights whose errors are no longer amplified through softmax.
\begin{figure}[H]
  \centering
  \includegraphics[width=0.8\textwidth]{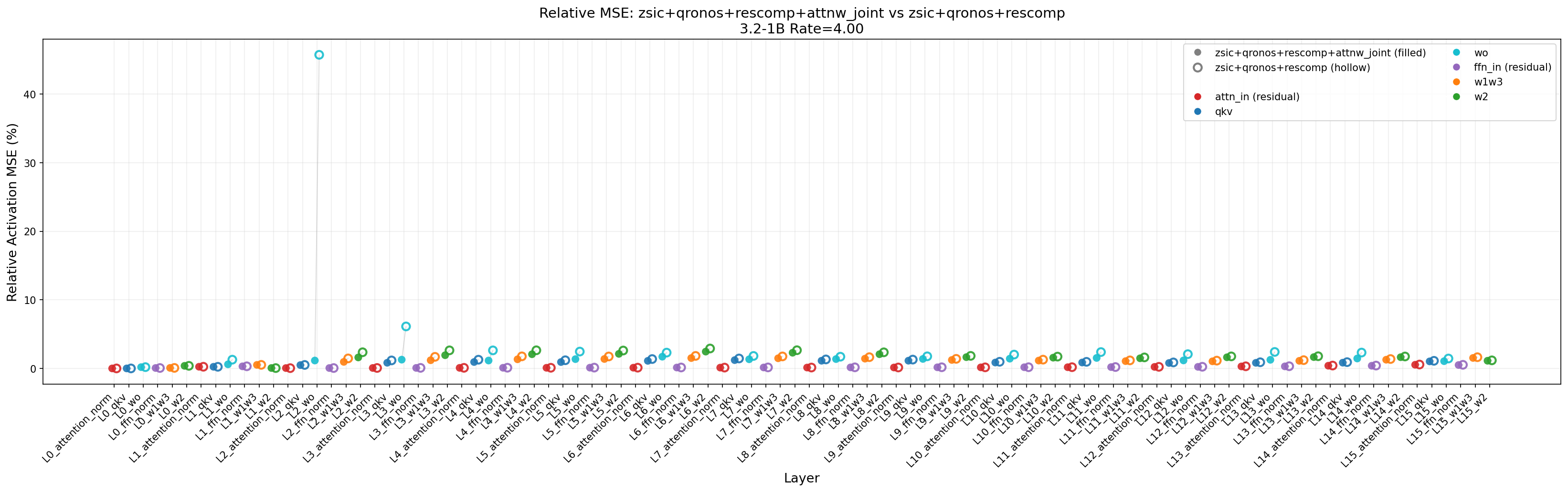}
  \caption{Effect of attention-weighted calibration on Llama-3.2-1B ($4\,\text{bit}$). Joint adaptive mixing with attention-weighted covariances eliminates the $w_o$ degradation caused by
  Qronos. Computed at CTX=2048.}
  \label{fig:attnw_effect}
\end{figure}

\noindent\emph{Adaptive mixing for larger models.}
While the combination of Qronos, residual compensation, and attention weighting works well on smaller models, we found that activation drift correction becomes increasingly unstable in
deeper layers of larger models, where $\hat X$ drifts further from $X$. Figure~\ref{fig:qadapt_effect} demonstrates this on Qwen3-8B (36 layers): the adaptive mixing procedure
from~\eqref{eq:qronos_mix} stabilizes Qronos by interpolating toward the original statistics when the drift-corrected Hessian is ill-conditioned, yielding consistent improvements across
layers.
\begin{figure}[H]
  \centering
  \includegraphics[width=0.8\textwidth]{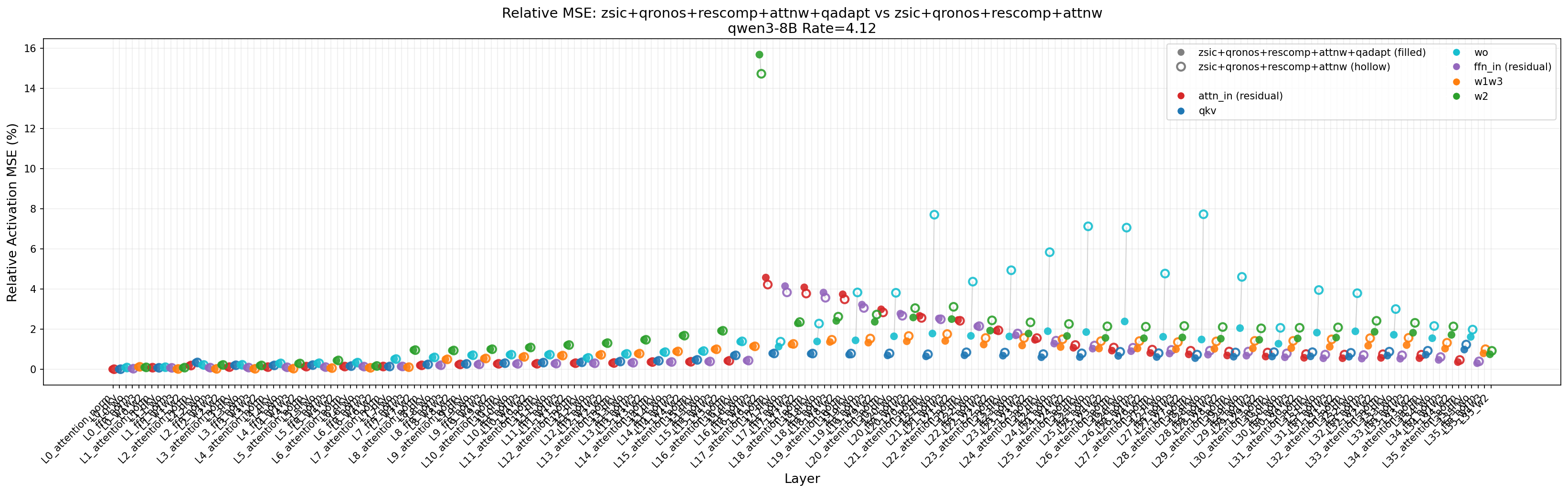}
  \caption{Effect of adaptive mixing on Qwen3-8B ($4.12\,\text{bit}$). Adaptive $\epsilon_{\mathrm{qr}}$ blending stabilizes Qronos in deeper layers where pure drift correction degrades. Computed at CTX=2048.}
  \label{fig:qadapt_effect}
\end{figure}

\noindent\emph{Cumulative effect.}
Finally, Figure~\ref{fig:full_vs_zsic} compares the full pipeline (WaterSIC with residual compensation, Qronos, attention weighting, and adaptive mixing) against base WaterSIC on
Llama-3.2-1B. The cumulative improvement is substantial and consistent across all layer types, with the largest gains at $w_o$ and $w_2$ inputs, where the individual contributions
compound.
\begin{figure}[H]
  \centering
  \includegraphics[width=0.8\textwidth]{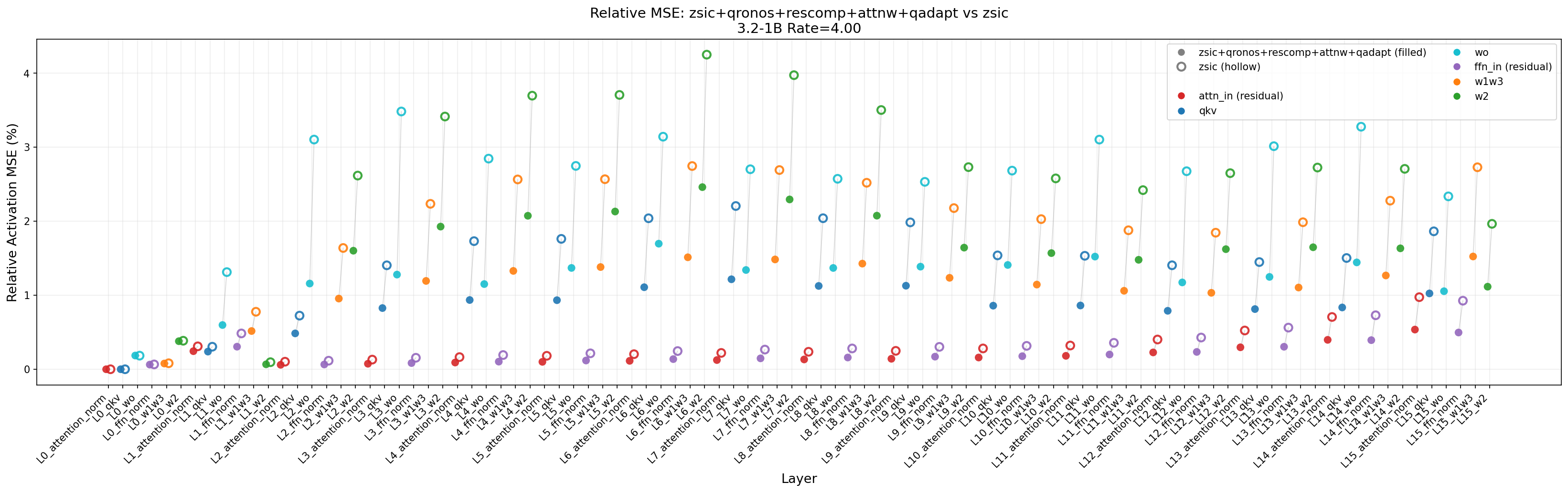}
  \caption{Full WaterSIC pipeline vs.\ base WaterSIC on Llama-3.2-1B ($4\,\text{bit}$). All techniques combined yield a consistent reduction in activation MSE across every layer. Computed at CTX=2048.}
  \label{fig:full_vs_zsic}
\end{figure}

\smallskip
\textbf{Gaussianity of weights.} Our theoretical analysis assumes i.i.d.\ Gaussian rows of $W$. We expect this to be representative of modern LLMs for two reasons. First, the random Hadamard transform applied during incoherence processing Gaussianizes weight outliers, since the resulting entries are linear combinations of $\Theta(n)$ original entries weighted by random signs. Second, the outlier features common in early LLMs have largely disappeared in models trained with modern optimizers, layer-norm placements, and learning rate schedules~\citep{park2025outlier}. To verify this empirically, we compute, for each weight matrix in Llama-2-7B, the Kolmogorov--Smirnov distance between the empirical CDF and its best-fit Gaussian and Laplace CDFs, and report the per-layer-type averages over all 32 decoder layers in Figure~\ref{fig:gaussianity}.


The FFN matrices $(w_1,w_2,w_3)$ together with the attention value and output projections $(w_v,w_o)$ all admit a near-perfect Gaussian fit, with $D_{\mathrm{Gauss}} < 1\%$ in every layer and the Gaussian preferred over Laplace in $\geq 30/32$ layers per type. The query and key projections $(w_q,w_k)$ deviate the most: their tails are slightly heavier, the Gaussian and Laplace fits are within $\sim 0.3\%$ of each other on average, and the Laplace fit is preferred in roughly half of the layers for $w_k$. We observe the same pattern on Llama-3-8B and Llama-3.2-1B: the FFN matrices are consistently Gaussian, and $w_q,w_k$ are the only matrices for which a Laplace fit is occasionally closer. Figure~\ref{fig:gaussianity} shows representative weight histograms with best-fit Gaussian and Laplace overlays for several layers of Llama-3.2-1B.


\begin{figure}[H]
\centering

\includegraphics[width=0.72\textwidth]{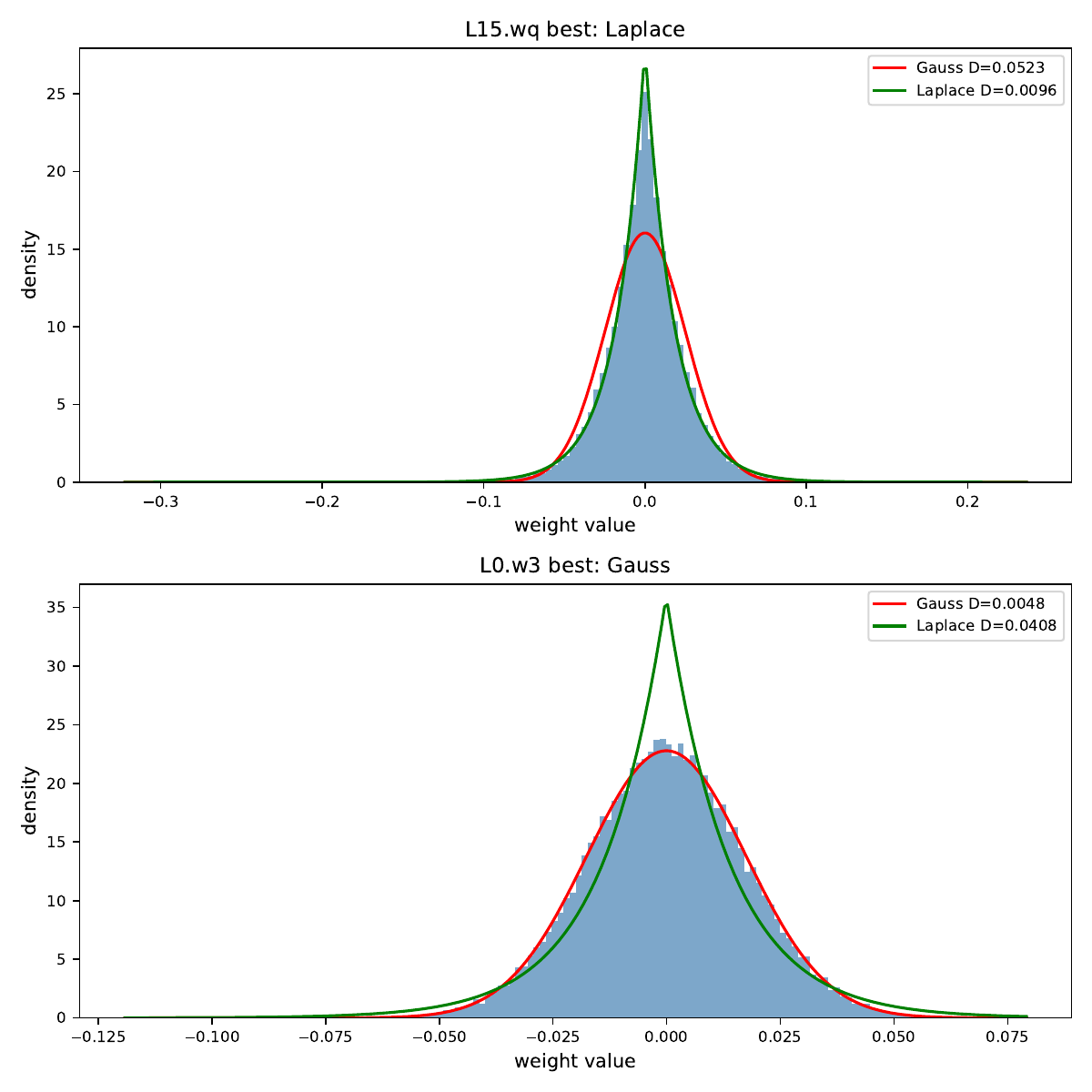}

\vspace{0.8em}

\small
\begin{tabular}{lccc}
\toprule
Weight type & Mean $D_{\mathrm{Gauss}}\downarrow$ & Mean $D_{\mathrm{Laplace}}\downarrow$ & Gauss better \\
\midrule
$w_1$ & 0.63\% & 4.13\% & 32/32 \\
$w_2$ & 0.32\% & 4.24\% & 32/32 \\
$w_3$ & 0.33\% & 4.26\% & 32/32 \\
$w_q$ & 2.96\% & 2.99\% & 23/32 \\
$w_k$ & 3.16\% & 2.66\% & 16/32 \\
$w_v$ & 0.84\% & 3.93\% & 30/32 \\
$w_o$ & 0.93\% & 3.99\% & 30/32 \\
\bottomrule
\end{tabular}

\caption{
The figure contains weight histograms for selected layers of Llama-3.2-1B with best-fit Gaussian and Laplace overlays. The table contains the Kolmogorov-Smirnov distance to the best-fit Gaussian and Laplace distributions averaged over the 32 decoder layers of Llama-2-7B. The rightmost column reports the number of layers (out of 32) for which the Gaussian fit is closer than the Laplace fit.
}
\label{fig:gaussianity}

\end{figure}

\section{Results on additional evals and additional models.}

\label{sec::other_models}

\textbf{Llama-3.2-1B.} We report rate-distortion results on Llama-3.2-1B in Table~\ref{tab:llama32_1b}. WaterSIC is calibrated on WikiText-2 train at CTX=2048, and WaterSIC-FT denotes the same model after post-quantization finetuning on WikiText-2 at CTX=2048; both are evaluated at CTX=2048 on WikiText-2 test and C4 validation. Because calibration is performed exclusively on WikiText-2, the C4 gap to BF16 is wider than the W2 gap at low rates. For instance, at $1$ bit WaterSIC-FT reaches W2 PPL $32.85$ ($23.12$ above the BF16 reference of $9.73$) but C4 PPL $160.70$ ($147.93$ above the BF16 reference of $12.77$), and finetuning on WikiText-2 alone does not close this gap. We examine the role of the calibration and finetuning corpus in detail in the next section.

\begin{table}[H]
\centering
\caption{Llama-3.2-1B WikiText-2 and C4 test perplexity at various rates. WaterSIC is calibrated on WikiText-2 train at CTX=2048; WaterSIC-FT is the same model after post-quantization finetuning on WikiText-2 at CTX=2048. Both evaluated at CTX=2048. Unquantized (BF16): W2 PPL 9.73, C4 PPL 12.77.}
\label{tab:llama32_1b}
\small
\begin{tabular}{ccccc}
\toprule
Rate & WaterSIC W2$\downarrow$ & WaterSIC C4$\downarrow$ & WaterSIC-FT W2$\downarrow$ & WaterSIC-FT C4$\downarrow$ \\
\midrule
1.00 & 82.45 & 643.29 & 32.85 & 160.70 \\
1.50 & 30.73 & 140.72 & 19.16 & 59.97  \\
2.00 & 16.19 & 37.73  & 13.59 & 27.80  \\
2.50 & 11.83 & 18.81  & 11.34 & 17.80  \\
3.00 & 10.57 & 15.26  & 10.45 & 15.06  \\
3.18 & 10.35 & 14.73  & 10.27 & 14.61  \\
3.50 & 10.11 & 14.03  & 10.08 & 13.99  \\
3.76 & 9.99  & 13.62  & 9.98  & 13.60  \\
4.00 & 9.92  & 13.35  & 9.91  & 13.34  \\
\bottomrule
\end{tabular}
\end{table}

To complement the perplexity analysis, we also report the KL divergence between the unquantized (BF16) and quantized model output distributions, evaluated on WikiText-2 test:
\[
\mathrm{KLD} \;=\; \mathrm{KL}\!\left(P_{\mathrm{BF16}} \,\|\, P_{\mathrm{quant}}\right)
\;=\; \sum_{x} P_{\mathrm{BF16}}(x)\,\log\frac{P_{\mathrm{BF16}}(x)}{P_{\mathrm{quant}}(x)}.
\]
Figure~\ref{fig:app_kl_gap_1b} plots this quantity against the average bitwidth for Huffman-GPTQ, WaterSIC, and WaterSIC-FT; the shaded region marks the KL gap between Huffman-GPTQ and WaterSIC at matched bitwidth. WaterSIC significantly reduces the KL divergence over Huffman-GPTQ, especially at low bitrates, and post-quantization finetuning further reduces it.

\begin{figure}[H]
  \centering
  \includegraphics[width=0.8\textwidth]{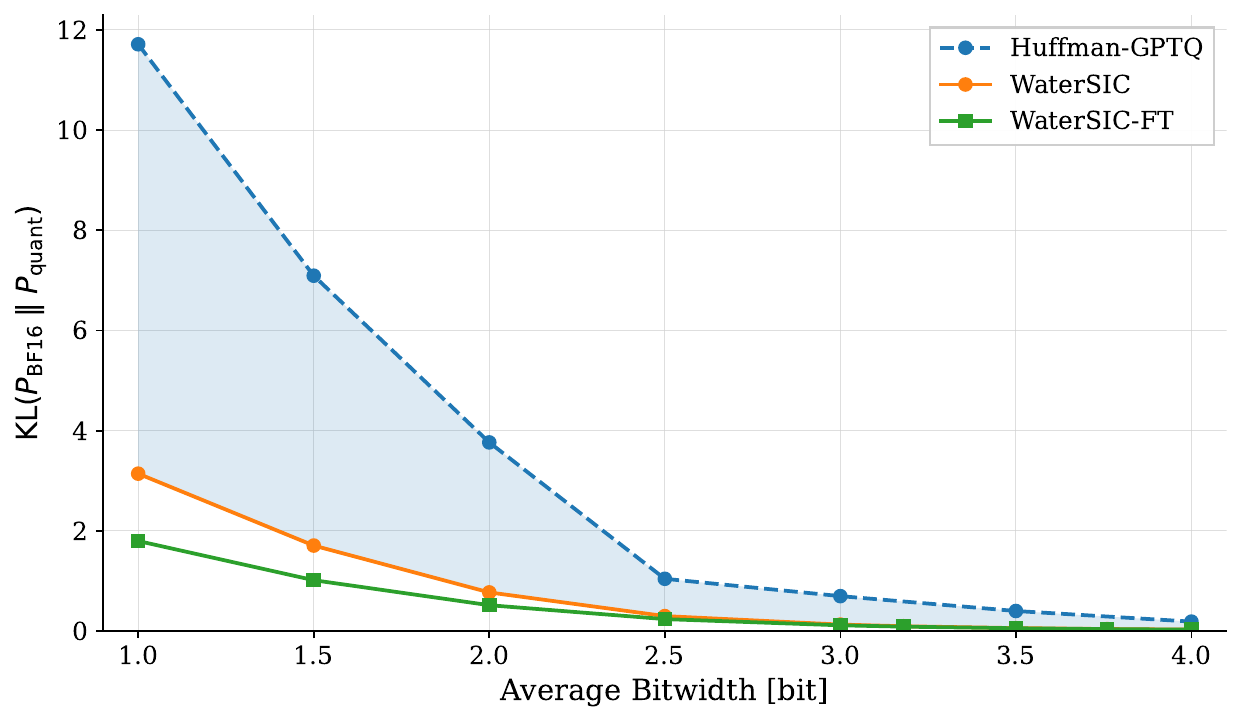}
  \caption{Llama-3.2-1B: KL divergence between BF16 and quantized output distributions, $\mathrm{KL}(P_{\mathrm{BF16}} \,\|\, P_{\mathrm{quant}})$, against average bitwidth, for Huffman-GPTQ, WaterSIC, and WaterSIC-FT. The shaded region marks the gap between Huffman-GPTQ and WaterSIC at matched bitwidth. Evaluated at CTX=2048.}
  \label{fig:app_kl_gap_1b}
\end{figure}

\smallskip
\textbf{Qwen3-8B.} We report rate-distortion results on Qwen3-8B in Table~\ref{tab:qwen3_8b}. The calibration, finetuning, and evaluation protocol matches Llama-3.2-1B above: WaterSIC is calibrated on WikiText-2 train at CTX=2048, WaterSIC-FT is the same model after post-quantization finetuning on WikiText-2 at CTX=2048, and both are evaluated at CTX=2048 on WikiText-2 test and C4 validation.

\begin{table}[H]
\centering
\caption{Qwen3-8B WikiText-2 and C4 test perplexity at various rates. WaterSIC is calibrated on WikiText-2 train at CTX=2048; WaterSIC-FT is the same model after post-quantization finetuning on WikiText-2 at CTX=2048. Both evaluated at CTX=2048. Unquantized (BF16): W2 PPL 9.73, C4 PPL 13.30.}
\label{tab:qwen3_8b}
\small
\begin{tabular}{ccccc}
\toprule
Rate & WaterSIC W2$\downarrow$ & WaterSIC C4$\downarrow$ & WaterSIC-FT W2$\downarrow$ & WaterSIC-FT C4$\downarrow$ \\
\midrule
1.12 & 25.59 & 102.52 & 14.54 & 43.17 \\
1.62 & 14.45 & 33.96  & 11.27 & 24.33 \\
2.12 & 11.37 & 18.64  & 10.18 & 16.54 \\
2.62 & 10.44 & 15.15  & 9.85  & 14.50 \\
3.12 & 10.03 & 14.11  & 9.73  & 13.83 \\
3.62 & 9.87  & 13.77  & 9.70  & 13.63 \\
4.12 & 9.79  & 13.57  & 9.67  & 13.49 \\
\bottomrule
\end{tabular}
\end{table}

\smallskip
\textbf{Llama-3-8B.} We report rate-distortion results on Llama-3-8B in Table~\ref{tab:llama3_8b} and Figure~\ref{fig:app_8b}. The protocol matches Llama-3.2-1B and Qwen3-8B above: calibration on WikiText-2 train at CTX=2048, evaluation on WikiText-2 test at CTX=2048, and (for WaterSIC-FT) finetuning on WikiText-2 train at CTX=2048; the unquantized (BF16) reference is W2 PPL 6.14.

\begin{table}[H]
\centering
\caption{Comparison of WikiText-2 perplexity results on Llama-3-8B, evaluated at CTX=2048. WaterSIC-FT is finetuned on WikiText-2 at CTX=2048. In each rate group, WaterSIC-FT achieves the best PPL at minimal rate. The unquantized (BF16) model perplexity is 6.14.}
\label{tab:llama3_8b}
\footnotesize
\setlength{\tabcolsep}{6pt}
\renewcommand{\arraystretch}{0.95}
\begin{tabular}{lrr}
\hline
Method & Avg.\ Bitwidth & WikiText-2 PPL \\
\hline
WaterSIC-FT & \textbf{1.00} & \textbf{18.97} \\
WaterSIC & 1.00 & 47.15 \\
\hline
WaterSIC-FT & \textbf{1.50} & \textbf{10.90} \\
WaterSIC & 1.50 & 15.89 \\
Huffman-GPTQ & 1.91 & 22.79 \\
\hline
WaterSIC-FT & \textbf{2.00} & \textbf{8.06} \\
WaterSIC & 2.00 & 8.93 \\
GPTVQ & 2.12 & 9.94 \\
GPTVQ & 2.25 & 9.59 \\
\hline
WaterSIC-FT & \textbf{2.50} & \textbf{7.04} \\
WaterSIC & 2.50 & 7.25 \\
Huffman-GPTQ & 2.51 & 11.46 \\
Huffman-GPTQ & 2.94 & 8.57 \\
\hline
WaterSIC-FT & \textbf{3.00} & \textbf{6.60} \\
WaterSIC & 3.00 & 6.65 \\
GPTVQ & 3.12 & 7.00 \\
\hline
WaterSIC-FT & \textbf{3.18} & \textbf{6.50} \\
WaterSIC & 3.18 & 6.53 \\
NestQuant (W-only) & 3.18 & 6.70 \\
Huffman-GPTQ & 3.41 & 7.36 \\
\hline
WaterSIC-FT & \textbf{3.50} & \textbf{6.37} \\
WaterSIC & 3.50 & 6.38 \\
NestQuant (W-only) & 3.50 & 6.49 \\
\hline
WaterSIC-FT & \textbf{3.76} & \textbf{6.30} \\
WaterSIC & 3.76 & 6.31 \\
NestQuant (W-only) & 3.76 & 6.38 \\
\hline
WaterSIC-FT & \textbf{4.00} & \textbf{6.25} \\
WaterSIC & 4.00 & 6.26 \\
NestQuant (W-only) & 3.99 & 6.31 \\
AWQ (W4A16 g128) & 4.00 & 6.54 \\
Huffman-GPTQ & 3.97 & 6.74 \\
\hline
\end{tabular}
\end{table}

\begin{figure}[H]
  \centering
  \includegraphics[width=0.8\textwidth]{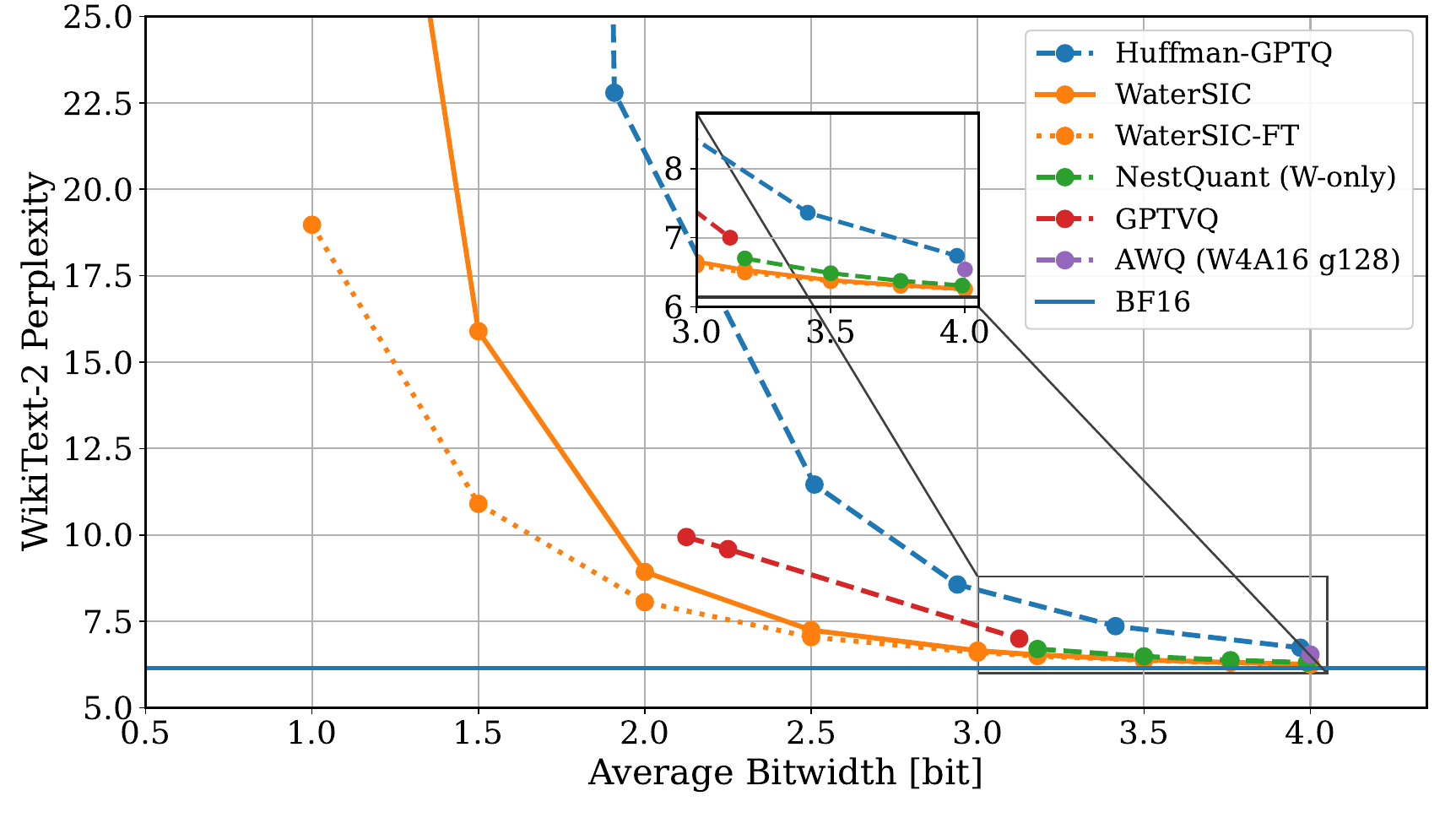}
  \caption{Llama-3-8B: WaterSIC vs.\ other algorithms. WaterSIC and Huffman-GPTQ report rates as entropy; the other methods use log-cardinality. Evaluated at CTX=2048.}
  \label{fig:app_8b}
\end{figure}

To compare against prior 2-bit methods we additionally re-evaluate WaterSIC at the longer context length 8192 used in those works, with finetuning likewise performed at CTX=8192. Table~\ref{tab:llama3_8b_2bit_compare} reports the comparison; baseline numbers are taken from~\cite{malinovskii2024pvtuning}.

\begin{table}[H]
\centering
\caption{Llama-3-8B 2-bit comparison, evaluated at context length 8192. WaterSIC is calibrated on WikiText-2 train at CTX=2048 and finetuned at CTX=8192. Baseline numbers from~\cite{malinovskii2024pvtuning}.}
\label{tab:llama3_8b_2bit_compare}
\small
\begin{tabular}{lccc}
\toprule
Method & Bitwidth & W2 PPL$\downarrow$ & C4 PPL$\downarrow$ \\
\midrule
BF16             & 16   & 5.54  & 7.10  \\
\midrule
QuIP             & 2.01 & 76.95 & 98.47 \\
DB-LLM           & 2.01 & 12.77 & 14.82 \\
PV-Tuning        & 2.01 & \textbf{6.99} & \textbf{8.29} \\
\midrule
WaterSIC (no FT) & 2.00 & 8.01 & 13.48 \\
WaterSIC (FT W2) & 2.00 & 7.20 & 11.69 \\
WaterSIC (FT C4) & 2.00 & 7.39 & 9.61  \\
WaterSIC (FT RP) & 2.00 & 7.35 & 9.60  \\
\bottomrule
\end{tabular}
\end{table}

\smallskip
\textbf{Llama-2-7B.} We report rate-distortion results on Llama-2-7B in Table~\ref{tab:llama2_7b} and Figure~\ref{fig:app_7b}, again under the calibration / finetuning / evaluation protocol of the previous models (CTX=2048 throughout); the unquantized (BF16) reference is W2 PPL 5.47.

\begin{table}[H]
\centering
\caption{Comparison of WikiText-2 perplexity results on Llama-2-7B, evaluated at CTX=2048. WaterSIC-FT is finetuned on WikiText-2 at CTX=2048. In each rate group, WaterSIC-FT achieves the best PPL at minimal rate. The unquantized (BF16) model perplexity is 5.47.}
\label{tab:llama2_7b}
\footnotesize
\setlength{\tabcolsep}{6pt}
\renewcommand{\arraystretch}{0.95}
\begin{tabular}{lrr}
\hline
Method & Avg.\ Bitwidth & WikiText-2 PPL \\
\hline
WaterSIC-FT & \textbf{1.00} & \textbf{12.03} \\
WaterSIC & 1.00 & 22.43 \\
\hline
WaterSIC-FT & \textbf{1.50} & \textbf{7.64} \\
WaterSIC & 1.50 & 8.97 \\
Huffman-GPTQ & 1.53 & 83.23 \\
Huffman-GPTQ & 1.95 & 55.84 \\
\hline
WaterSIC-FT & \textbf{2.00} & \textbf{6.29} \\
WaterSIC & 2.00 & 6.57 \\
QuIP\# & 2.00 & 6.66 \\
GPTVQ & 2.12 & 7.18 \\
GPTVQ & 2.25 & 6.99 \\
\hline
WaterSIC-FT & \textbf{2.50} & \textbf{5.81} \\
WaterSIC & 2.50 & 5.88 \\
Huffman-GPTQ & 2.55 & 6.38 \\
\hline
WaterSIC-FT & \textbf{3.00} & \textbf{5.63} \\
WaterSIC & 3.00 & 5.65 \\
QuIP\# & 3.00 & 5.79 \\
GPTVQ & 3.12 & 5.83 \\
Huffman-GPTQ & 2.98 & 5.96 \\
\hline
WaterSIC-FT & \textbf{3.50} & \textbf{5.54} \\
WaterSIC & 3.50 & 5.55 \\
Huffman-GPTQ & 3.46 & 5.74 \\
\hline
WaterSIC-FT & \textbf{4.00} & \textbf{5.50} \\
WaterSIC & 4.00 & 5.51 \\
NestQuant (W-only) & 4.00 & 5.53 \\
QuIP\# & 4.00 & 5.56 \\
AWQ (W4A16 g128) & 4.00 & 5.60 \\
Huffman-GPTQ & 4.01 & 5.62 \\
\hline
\end{tabular}
\end{table}

\begin{figure}[H]
  \centering
  \includegraphics[width=0.8\textwidth]{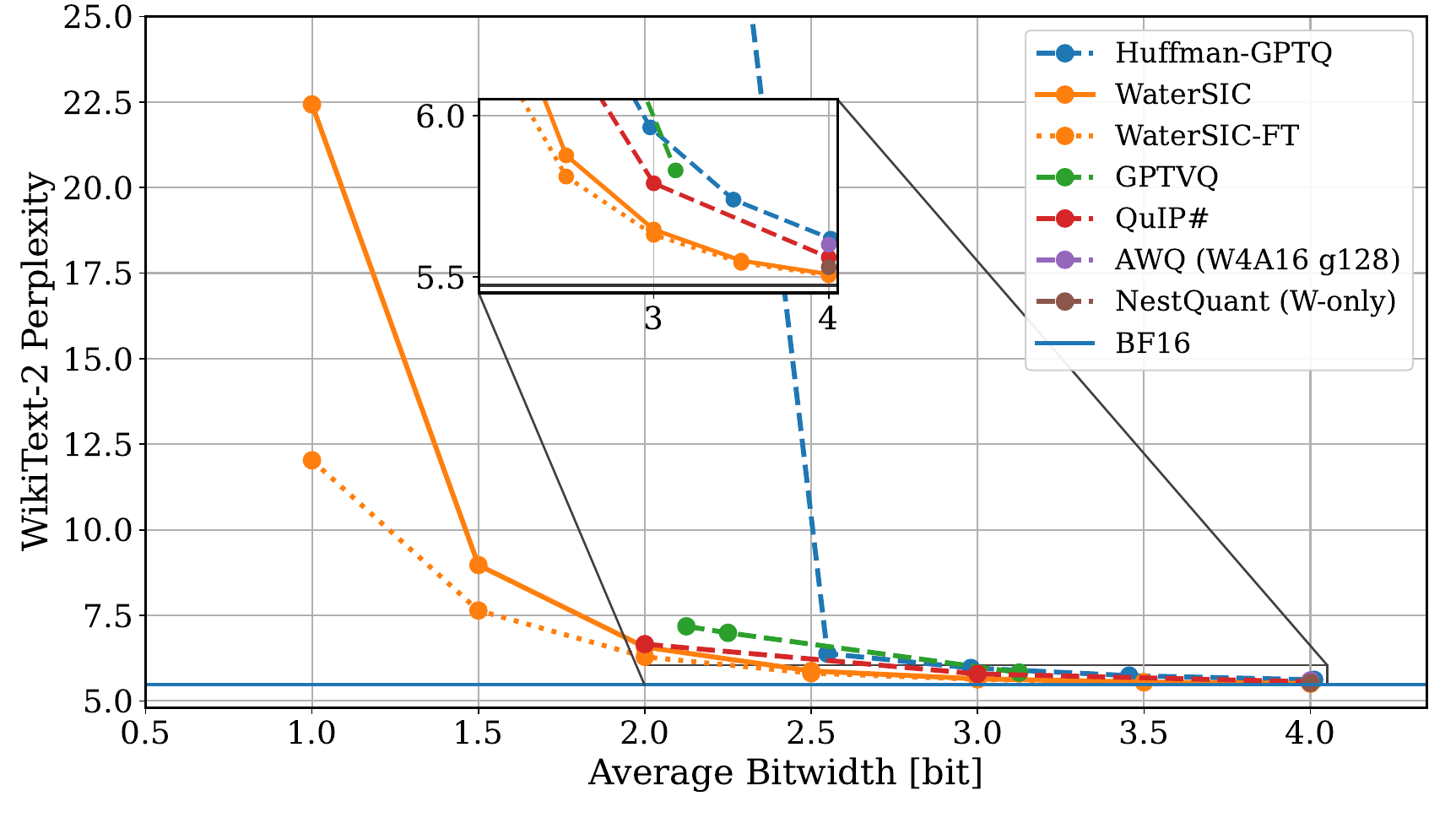}
  \caption{Llama-2-7B: WaterSIC vs.\ other algorithms. WaterSIC and Huffman-GPTQ report rates as entropy; the other methods use log-cardinality. Evaluated at CTX=2048.}
  \label{fig:app_7b}
\end{figure}

We additionally study the sensitivity of the algorithm to calibration context length at 2 bits, this time at the longer evaluation context CTX=4096 used in prior works. In every configuration of this experiment, finetuning is performed at the evaluation context length 4096 on the full WikiText-2 training split, so the experiment isolates the effect of the calibration context. Segmenting WikiText-2 train at non-overlapping intervals yields roughly twice as many sequences at CTX=1024 (${\sim}2378$) as at CTX=2048 (${\sim}1189$), so the shorter calibration context trades capturing of longer-range dependencies for a better-conditioned Hessian estimate. Table~\ref{tab:llama2_7b_ctx} reports WaterSIC at calibration contexts 1024 and 2048 across four finetuning configurations. The two calibration contexts produce nearly identical perplexities (within $0.03$ on W2 and $0.05$ on C4), and the two effects above appear to roughly cancel out, suggesting that the calibration context length has little effect on final quality once the model is finetuned at the evaluation context length.

\begin{table}[H]
\centering
\caption{Llama-2-7B at 2 bits: effect of calibration context length on WaterSIC. Calibration on WikiText-2 train; finetuning on the full WikiText-2 training split segmented at the evaluation context length 4096. Evaluated at CTX=4096. Unquantized (BF16): W2 PPL 5.12, C4 PPL 6.63.}
\label{tab:llama2_7b_ctx}
\small
\begin{tabular}{lcccc}
\toprule
& \multicolumn{2}{c}{Calib.\ CTX=1024} & \multicolumn{2}{c}{Calib.\ CTX=2048} \\
\cmidrule(lr){2-3} \cmidrule(lr){4-5}
Finetuning set & W2 PPL$\downarrow$ & C4 PPL$\downarrow$ & W2 PPL$\downarrow$ & C4 PPL$\downarrow$ \\
\midrule
None       & 6.152 & 9.853 & 6.123 & 9.813 \\
WikiText-2 & 5.864 & 9.237 & 5.864 & 9.285 \\
C4         & 5.994 & 8.737 & 5.977 & 8.729 \\
RedPajama  & 5.986 & 8.832 & 5.976 & 8.818 \\
\bottomrule
\end{tabular}
\end{table}

Table~\ref{tab:llama2_7b_2bit_compare} compares WaterSIC at 2 bits and CTX=4096 against AQLM, QuIP\#, DB-LLM, PV-Tuning, and YAQA; baseline numbers are taken from~\cite{malinovskii2024pvtuning} and~\cite{tseng2025yaqa}. WaterSIC's W2 PPL matches PV-Tuning to within $0.02$ ($5.86$ vs.\ $5.84$ with W2 finetuning), while C4 PPL trails behind, consistent with calibrating exclusively on WikiText-2. Switching the finetuning set to C4 or RedPajama recovers most of this C4 gap: against the BF16 C4 PPL of $6.63$, the gap shrinks from $2.66$ ($9.29 - 6.63$, with WikiText-2 finetuning) to $2.10$ ($8.73 - 6.63$, with C4 finetuning).

\begin{table}[H]
\centering
\caption{Llama-2-7B 2-bit comparison, evaluated at context length 4096. WaterSIC is calibrated on WikiText-2 train at CTX=2048 and finetuned at CTX=4096. Baseline numbers from~\cite{malinovskii2024pvtuning} and~\cite{tseng2025yaqa}.}
\label{tab:llama2_7b_2bit_compare}
\small
\begin{tabular}{lccc}
\toprule
Method & Bitwidth & W2 PPL$\downarrow$ & C4 PPL$\downarrow$ \\
\midrule
BF16             & 16   & 5.12 & 6.63 \\
\midrule
AQLM             & 2.02 & 6.64 & 8.56 \\
QuIP\#           & 2.01 & 6.19 & 8.16 \\
DB-LLM           & 2.01 & 7.23 & 9.62 \\
YAQA-B (INT2)    & 2.00 & 7.45 & 9.22 \\
PV-Tuning        & 2.02 & \textbf{5.84} & \textbf{7.62} \\
\midrule
WaterSIC (no FT) & 2.00 & 6.12 & 9.81 \\
WaterSIC (FT W2) & 2.00 & 5.86 & 9.29 \\
WaterSIC (FT C4) & 2.00 & 5.98 & 8.73 \\
WaterSIC (FT RP) & 2.00 & 5.98 & 8.82 \\
\bottomrule
\end{tabular}
\end{table}

\smallskip
\textbf{Llama-3-70B.} We report results on Llama-3-70B at 2 and 4 bits in Table~\ref{tab::llama3_70b}. We ran the algorithm on $4$ H100 GPUs, evaluating WikiText-2 perplexity at context length 2048. At 4 bits, WaterSIC-FT achieves the lowest perplexity of $3.03$. At 2 bits, WaterSIC-FT reaches $5.68$, substantially outperforming all scalar PTQ baselines for which comparable results have been reported.

\begin{table}[H]
\centering
\caption{Llama-3-70B WikiText-2 perplexity ($\downarrow$) at CTX=2048 under 2-bit and 4-bit weight-only quantization. The unquantized (BF16) model perplexity is 2.86. RTN, QuIP, GPTQ, AWQ, and PB-LLM results are from~\citet{huang2024lowbit}.}
\label{tab::llama3_70b}
\small
\begin{tabular}{lcc}
\toprule
Method & 2 bits & 4 bits \\
\midrule
RTN         & $4.6\times 10^{5}$ & 3.60 \\
QuIP        & 13.0               & 3.40 \\
GPTQ        & 11.9               & 3.30 \\
AWQ         & $1.7\times 10^{6}$ & 3.30 \\
PB-LLM      & 11.6               & -- \\
OstQuant    & --                 & 3.19 \\
NestQuant   & --                 & 3.14 \\
\midrule
WaterSIC    & 6.96               & 3.04 \\
WaterSIC-FT & \textbf{5.68}      & \textbf{3.03} \\
\bottomrule
\end{tabular}
\end{table}
\smallskip

\textbf{Effect of Calibration and Finetuning Set.} We investigate how the choice of calibration and finetuning datasets affects downstream perplexity, focusing on Llama-3.2-1B at 2 bits. We vary the calibration set between the WikiText-2 train split and a 1T-token sample of RedPajama, using the same number of sequences in both ($1189$, equal to the WikiText-2 train split segmented at CTX=2048). We additionally vary the finetuning set across four options: no finetuning, WikiText-2, RedPajama, and C4, using $1189$ sequences of length $2048$ in each case. We evaluate both WikiText-2 test PPL and C4 test PPL at CTX=2048.

\begin{table}[H]
\centering
\caption{Llama-3.2-1B at 2 bits: effect of the calibration and finetuning sets on WaterSIC. Calibration on WikiText-2 train or RedPajama at CTX=2048; finetuning at CTX=2048 on the indicated set. Evaluated at CTX=2048. Unquantized (BF16): W2 PPL 9.73, C4 PPL 12.77.}
\label{tab:calib_ft_unified}
\small
\begin{tabular}{lcccc}
\toprule
& \multicolumn{2}{c}{Calibration on WikiText-2} & \multicolumn{2}{c}{Calibration on RedPajama} \\
\cmidrule(lr){2-3} \cmidrule(lr){4-5}
Finetuning set & W2 PPL$\downarrow$ & C4 PPL$\downarrow$ & W2 PPL$\downarrow$ & C4 PPL$\downarrow$ \\
\midrule
None       & 16.19 & 37.73 & 27.43 & 30.33 \\
WikiText-2 & 13.59 & 27.80 & 12.51 & 17.73 \\
RedPajama  & 14.52 & 23.78 & 14.52 & 17.40 \\
C4         & 14.56 & 23.53 & 14.76 & 17.37 \\
\bottomrule
\end{tabular}
\end{table}

Finetuning improves both W2 and C4 PPL regardless of the calibration set. Without finetuning, calibrating on WikiText-2 gives lower W2 PPL but higher C4 PPL than calibrating on RedPajama (W2/C4 of $16.19/37.73$ vs.\ $27.43/30.33$), reflecting the calibration--evaluation distribution mismatch. Surprisingly, the lowest W2 PPL of any configuration ($12.51$) is reached by calibrating on RedPajama and then finetuning on WikiText-2, which beats the matched WikiText-2-calibrated, WikiText-2-finetuned setting ($13.59$); this suggests that the broader RedPajama calibration produces statistics that finetuning can exploit more effectively. C4 PPL, on the other hand, is minimized by RedPajama or C4 finetuning regardless of the calibration set ($17.37$--$17.73$ when calibrated on RedPajama; $23.53$--$23.78$ when calibrated on WikiText-2).

We extend this analysis across rates from $1$ to $4$ bits for the WikiText-2-calibrated Llama-3.2-1B model. Figure~\ref{fig:calib_tune_rd} shows rate-distortion curves for no finetuning, finetuning on WikiText-2, and finetuning on RedPajama, evaluated on WikiText-2 test PPL (left) and C4 test PPL (right). Table~\ref{tab:rd_unified} reports the full numerical results.

\begin{figure}[H]
\centering
\includegraphics[width=0.95\textwidth]{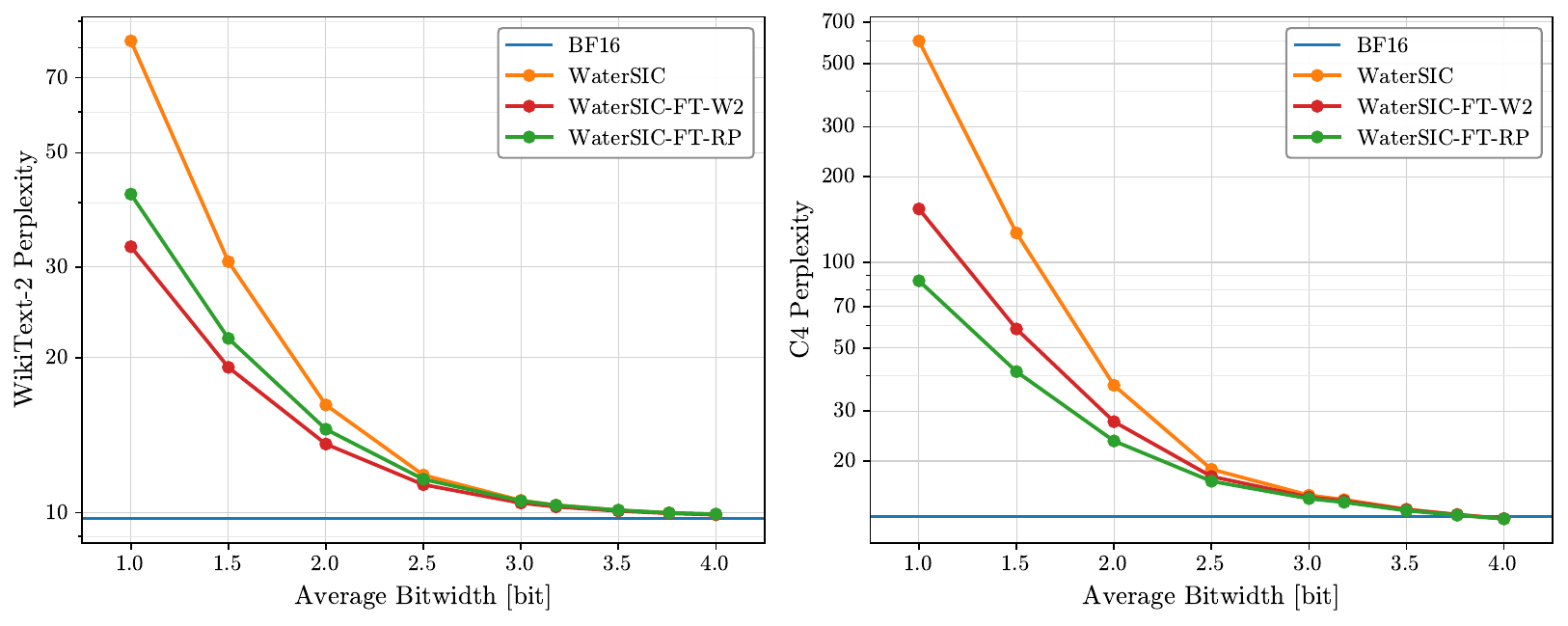}
\caption{Rate-distortion curves for Llama-3.2-1B calibrated on WikiText-2, evaluated on WikiText-2 test PPL (left) and C4 test PPL (right) for three configurations: no finetuning, finetuning on WikiText-2, and finetuning on RedPajama. Evaluated at CTX=2048.}
\label{fig:calib_tune_rd}
\end{figure}

\begin{table}[H]
\centering
\caption{Llama-3.2-1B WikiText-2 and C4 test perplexity at various rates. Calibrated on WikiText-2 train at CTX=2048; finetuning at CTX=2048. Unquantized (BF16): W2 PPL 9.73, C4 PPL 12.77.}
\label{tab:rd_unified}
\small
\begin{tabular}{ccccccc}
\toprule
& \multicolumn{3}{c}{W2 PPL$\downarrow$} & \multicolumn{3}{c}{C4 PPL$\downarrow$} \\
\cmidrule(lr){2-4} \cmidrule(lr){5-7}
Rate & No FT & FT W2 & FT RP & No FT & FT W2 & FT RP \\
\midrule
1.00 & 82.45 & 32.85 & 41.55 & 643.29 & 160.70 & 88.26 \\
1.50 & 30.73 & 19.16 & 21.80 & 140.72 & 59.97  & 41.84 \\
2.00 & 16.19 & 13.59 & 14.52 & 37.73  & 27.80  & 23.78 \\
2.50 & 11.83 & 11.34 & 11.62 & 18.81  & 17.80  & 17.16 \\
3.00 & 10.57 & 10.45 & 10.54 & 15.26  & 15.06  & 14.91 \\
3.18 & 10.35 & 10.27 & 10.34 & 14.73  & 14.61  & 14.49 \\
3.50 & 10.11 & 10.08 & 10.12 & 14.03  & 13.99  & 13.89 \\
3.76 & 9.99  & 9.98  & 9.99  & 13.62  & 13.60  & 13.54 \\
4.00 & 9.92  & 9.91  & 9.94  & 13.35  & 13.34  & 13.31 \\
\bottomrule
\end{tabular}
\end{table}

The benefit of finetuning is concentrated at low rates ($1$--$2.5$ bits). For example, at $1$ bit RedPajama finetuning takes C4 PPL from $643.29$ down to $88.26$ (an $86\%$ reduction) and WikiText-2 finetuning takes W2 PPL from $82.45$ down to $32.85$ (a $60\%$ reduction). At rates $\geq 3.5$ bits the differences between finetuning configurations shrink to within $0.15$ PPL on either evaluation, since the quantization error is by then small relative to the model's intrinsic approximation error. Comparing the two finetuning sets at low rates, WikiText-2 finetuning gives lower W2 PPL ($32.85$ vs.\ $41.55$ at $1$ bit) but higher C4 PPL than RedPajama finetuning ($160.70$ vs.\ $88.26$ at $1$ bit): the broader mixed corpus transfers better to C4, and the W2 PPL it gives up is small in absolute terms compared to the C4 PPL it recovers. This is the same domain-matching effect visible in Table~\ref{tab:calib_ft_unified}: each finetuning set is best on its own evaluation distribution, and a broader finetuning corpus generalizes better off-distribution.

\section{Zero-shot accuracy evaluations}
\label{sec::zero_shot}

The previous sections focused on perplexity on WikiText-2 and C4. We now turn to downstream zero-shot accuracy across a range of standard benchmarks, to verify that the perplexity advantages of WaterSIC translate into corresponding gains on tasks the quantized model is actually deployed for.

Table~\ref{tab:llama32_1b_zeroshot_hptq_vs_watersic} compares WaterSIC and HPTQ on Llama-3.2-1B across bitrates. At every rate, WaterSIC outperforms HPTQ on the majority of benchmarks and remains competitive on the rest. Table~\ref{tab:qwen3_8b_benchmarks_hptq_vs_watersic} compares WaterSIC against the Qwen3-8B numbers reported in~\cite{chen2025geometry}; the same pattern holds, with WaterSIC matching or improving over the reported HPTQ numbers on most benchmarks.

\begin{table}[H]
\centering
\scriptsize
\setlength{\tabcolsep}{4pt}
\begin{tabular}{llccccccc}
\toprule
Rate & Method & ARC-C & ARC-E & HellaSwag & OBQA & PIQA & SocialIQA & Wino \\
\midrule

\multirow[t]{3}{*}{1.50} & Huffman-GPTQ & \textbf{0.2722} & 0.2597 & 0.2619 & \textbf{0.2960} & 0.5185 & 0.3306 & 0.5067 \\
& WaterSIC & 0.2415 & 0.3620 & 0.3127 & 0.2720 & 0.5729 & 0.3501 & 0.5107 \\
& WaterSIC-FT & 0.2577 & \textbf{0.3653} & \textbf{0.3368} & 0.2680 & \textbf{0.5778} & \textbf{0.3547} & \textbf{0.5328} \\
\midrule
\multirow[t]{3}{*}{2.00} & Huffman-GPTQ & 0.2218 & 0.2959 & 0.3030 & 0.2520 & 0.5332 & 0.3465 & 0.5075 \\
& WaterSIC & 0.2679 & 0.4655 & 0.4052 & 0.2920 & 0.6066 & 0.3685 & 0.5328 \\
& WaterSIC-FT & \textbf{0.2910} & \textbf{0.5143} & \textbf{0.4410} & \textbf{0.3080} & \textbf{0.6333} & \textbf{0.3823} & \textbf{0.5399} \\
\midrule
\multirow[t]{3}{*}{2.50} & Huffman-GPTQ & 0.2534 & 0.4327 & 0.4215 & 0.2880 & 0.6132 & 0.3593 & 0.5454 \\
& WaterSIC & 0.3166 & 0.5551 & 0.5205 & 0.3400 & 0.6774 & 0.3992 & 0.5912 \\
& WaterSIC-FT & \textbf{0.3276} & \textbf{0.5589} & \textbf{0.5456} & \textbf{0.3520} & \textbf{0.7013} & \textbf{0.4007} & \textbf{0.5951} \\
\midrule
\multirow[t]{3}{*}{3.00} & Huffman-GPTQ & 0.3038 & 0.5215 & 0.5125 & 0.2960 & 0.6708 & 0.4028 & 0.5706 \\
& WaterSIC & \textbf{0.3387} & \textbf{0.5720} & 0.5872 & 0.3580 & 0.7182 & \textbf{0.4248} & 0.5991 \\
& WaterSIC-FT & 0.3370 & 0.5711 & \textbf{0.5999} & \textbf{0.3640} & \textbf{0.7247} & 0.4191 & \textbf{0.5998} \\
\midrule
\multirow[t]{3}{*}{3.50} & Huffman-GPTQ & 0.3217 & 0.5606 & 0.5613 & 0.3320 & 0.7133 & 0.4161 & 0.5872 \\
& WaterSIC & 0.3601 & \textbf{0.6162} & 0.6166 & \textbf{0.3700} & \textbf{0.7421} & \textbf{0.4217} & 0.6140 \\
& WaterSIC-FT & \textbf{0.3618} & 0.6124 & \textbf{0.6237} & 0.3680 & \textbf{0.7421} & 0.4181 & \textbf{0.6196} \\
\midrule
\multirow[t]{3}{*}{4.00} & Huffman-GPTQ & 0.3447 & 0.6040 & 0.6098 & 0.3600 & 0.7291 & 0.4212 & 0.6054 \\
& WaterSIC & \textbf{0.3737} & 0.6170 & 0.6287 & \textbf{0.3800} & \textbf{0.7497} & 0.4340 & 0.6069 \\
& WaterSIC-FT & 0.3686 & \textbf{0.6174} & \textbf{0.6338} & 0.3760 & 0.7492 & \textbf{0.4345} & \textbf{0.6101} \\

\bottomrule
\end{tabular}
\caption{Zero-shot accuracy on Llama-3.2-1B. For each rate and task, we bold the best result among Huffman-GPTQ, WaterSIC, and WaterSIC-FT (higher is
better). Quantization calibrated at CTX=2048.}
\label{tab:llama32_1b_zeroshot_hptq_vs_watersic}
\end{table}





\begin{table}[H]
  \centering
  \scriptsize
  \setlength{\tabcolsep}{4.5pt}
  \begin{tabular}{clccccccc}
  \toprule
  Rate & Method & Wino & MMLU & HSwag & PIQA & SciQ & TQA-MC1 & TQA-MC2 \\
  \midrule
  16.000 & BF16 & 68.11 & 73.02 & 74.90 & 77.80 & 95.70 & 36.35 & 54.50 \\
  \midrule

  \multirow[t]{6}{*}{4.125}
   & Huffman-GPTQ     & 67.17 & 72.28 & \textbf{74.84} & 77.42 & 95.60 & 35.01 & 53.36 \\
   & GPTQ     & 68.82 & 71.76 & 74.22 & 77.58 & 95.30 & 36.35 & 54.55 \\
   & HRTN     & 67.56 & 72.15 & 74.72 & 76.99 & 94.20 & 36.47 & \textbf{56.46} \\
   & RTN      & 67.17 & 69.71 & 74.30 & 75.90 & 94.50 & \textbf{36.84} & 55.77 \\
   & WaterSIC & \textbf{70.09} & \textbf{72.77} & 74.60 & 76.99 & \textbf{95.90} & 35.99 & 53.80 \\
   & WaterSIC-FT & 68.75 & 72.33 & 74.35 & \textbf{77.69} & 95.40 & 35.62 & 54.12 \\
  \midrule

  \multirow[t]{6}{*}{3.125}
   & Huffman-GPTQ     & 66.93 & \textbf{70.96} & 73.18 & \textbf{77.53} & \textbf{95.40} & 36.11 & 54.73 \\
   & GPTQ     & 68.35 & 65.80 & 70.80 & 75.46 & 75.46 & 36.11 & \textbf{55.21} \\
   & HRTN     & 66.22 & 67.85 & 72.36 & 76.12 & 93.70 & 35.13 & 53.68 \\
   & RTN      & 57.93 & 47.90 & 55.41 & 70.89 & 87.10 & 34.03 & 52.76 \\
   & WaterSIC & \textbf{68.43} & 70.53 & \textbf{73.38} & 76.99 & 94.70 & \textbf{36.60} & 53.95 \\
   & WaterSIC-FT & 67.72 & 70.39 & 73.36 & 77.04 & \textbf{95.40} & 34.64 & 53.24 \\
  \midrule

  \multirow[t]{6}{*}{2.125}
   & Huffman-GPTQ  & 59.19 & 52.99 & 63.86 & 72.52 & 86.80 & 31.09 & 49.01 \\
   & GPTQ     & 52.25 & 34.25 & 39.32 & 57.83 & 57.83 & 28.40 & 46.91 \\
   & HRTN     & 51.22 & 33.91 & 49.27 & 65.78 & 76.80 & 30.48 & \textbf{51.78} \\
   & RTN      & 49.08 & 22.95 & 26.04 & 51.63 & 21.20 & 24.11 & 47.33 \\
   & WaterSIC & 67.01 & 61.14 & 64.52 & \textbf{74.05} & 92.60 & 32.80 & 49.11 \\
   & WaterSIC-FT & \textbf{68.90} & \textbf{61.69} & \textbf{67.02} & 73.99 & \textbf{93.70} & \textbf{33.41} & 50.84 \\

  \bottomrule
  \end{tabular}
  \caption{Zero-shot accuracy on Qwen3-8B. Bold indicates the best value within each rate block for each benchmark (higher is better). Quantization calibrated
   at CTX=2048.}
  \label{tab:qwen3_8b_benchmarks_hptq_vs_watersic}
  \end{table}

\end{document}